\documentclass[preprint,12pt,authoryear]{elsarticle}
\usepackage[top=.7in,bottom=.7in,left=.5in,right=.5in]{geometry}
\usepackage{rotating}
\usepackage{threeparttable}

\usepackage{amssymb}
\usepackage{mylatexstyle}
\usepackage{mathtools}
\usepackage{amsmath}
\allowdisplaybreaks
\usepackage{bm}
\usepackage{natbib}
\usepackage{multirow}      
\usepackage{subcaption}    
\usepackage{array}         

\usepackage{tikz}

\newcommand{\circled}[2][.7]{\tikz[baseline=(char.base)]{
    \node[shape=circle, draw, inner sep=0.8pt, minimum size=#1em] (char) {#2};}}
\usepackage{algorithm}      
\usepackage{algorithmic}    

\usepackage{threeparttable}
\usepackage{booktabs}
\usepackage{multirow}
\usepackage{siunitx}
\usepackage{makecell}
\usepackage{xcolor}

\def\T{\top}

\journal{Journal of Multivariate Analysis}

\begin{document}

\begin{frontmatter}



\title{Sufficient Dimension Reduction via Generalized Stein's Lemma} 


\author[label1]{Ye Tian} 
\affiliation[label1]{organization={Key Laboratory of Applied Statistics of MOE, Key Laboratory of Big
Data Analysis of Jilin Province, School of Mathematics and Statistics,
Northeast Normal University},
            addressline={Renmin Street}, 
            city={Changchun},
            postcode={130024}, 
            state={Jilin},
            country={China}}


\begin{abstract}
Sufficient dimension reduction (SDR) seeks the minimal subspace of the predictors that captures the full conditional distribution of the response, which is known as the central subspace (CS). When the response is multivariate, the problem becomes considerably more challenging, particularly when the sample size is limited. Existing methods face different limitations: inverse regression approaches rely on strong distributional assumptions and matrix inversion, and their multi-response extensions suffer from severe slice sparsity; forward regression methods depend on computationally intensive iterative smoothing whose cost grows with the response dimension; and deep learning-based approaches demand large amounts of labeled data. To circumvent these shortcomings, we propose an SDR framework based on the generalized Stein's lemma. Our method constructs a cross-moment matrix between the multivariate response and the marginal score function of the predictors, and recovers the CS via its singular value decomposition. The proposed method does not rely on the linearity condition, avoids matrix inversion as well as iterative smoothing, and can leverage unlabeled data when available. We establish convergence guarantees for the proposed estimator under standard regularity conditions. Moreover, we propose a practical rank-selection algorithm to estimate the dimension of the CS. Extensive simulation studies and a real data application demonstrate that the proposed methods consistently outperform existing approaches across a variety of settings, particularly in moderate-dimensional, label-scarce scenarios with high noise levels.
\end{abstract}


\begin{highlights}
\item robust sufficient dimension reduction for challenging scenarios with low signal-to-noise ratio and small sample sizes
\item nonlinear multi-response regression approach to CS estimation 
\item semi-supervised estimators leveraging unlabeled data for improved stability
\end{highlights}

\begin{keyword}

Sufficient dimension reduction\sep central subspace\sep multivariate response\sep Stein's lemma\sep singular value decomposition


\end{keyword}

\end{frontmatter}




\section{Introduction}\label{sec:introduction}

Sufficient dimension reduction (SDR) seeks an matrix $\bB$ such that
\begin{align*}
y \perp\!\!\!\perp \bx \mid \bB^\top \bx,
\end{align*}
i.e., the response $y$ is conditionally independent of the predictor $\bx$ given the projected predictors $\bB^\top\bx$. The subspace $\mathcal{S}(\bB)$ spanned by the columns of $\bB$ is called a dimension reduction subspace. The intersection of all such subspaces is referred to as the central subspace (CS) \citep{Cook2009}. The CS is the most parsimonious subspace that captures the full conditional relationship between $y$ and $\bx$.

In many practical settings, however, the primary object of interest is the conditional mean function. In such cases, the dimension reduction objective becomes finding $\bB$ such that
\begin{align*}
\EE(y\mid \bx)=\EE(y\mid \bB^\top \bx).
\end{align*}
Analogously, the intersection of subspaces satisfying the above relation is called the central mean subspace (CMS) \citep{li2002}. The CMS is the minimal subspace required to capture the conditional mean structure.

The estimation of CS and CMS has been extensively studied in the univariate response setting. With the rapid advancement of modern data collection technologies, however, response variables in many applications, for example genomics (joint analysis of multiple gene expressions), neuroimaging (simultaneous recording of multi-region brain activities), and finance (joint forecasting of multiple asset returns), are naturally vector-valued, i.e., $\by \in \mathbb{R}^q$ with $q>1$. In these multi-response problems, one must not only characterize the dependence between the predictors and each individual response component, but also account for the joint dependence among the response components.

In the univariate case, the CMS is generally a proper subspace of the CS. This gap even widens considerably when $\by$ is vector-valued: the CMS captures only the conditional mean vector $\EE(\by \mid \bx)$ and provides no information on any joint dependence among the response components, such as conditional covariances or higher-order interactions. Estimating the CMS is far simpler, yet due to this gap, it typically falls short of recovering the CS. However, when the model ensures $\text{CS} = \text{CMS}$, the full conditional  distribution can be recovered through conditional mean-based approaches. Therefore, we adopt the following multi-response multi-index model as a unified estimation framework:
\begin{equation}\label{model:1}
    \by = \cF(\bB^\top \bx) + \bepsilon, \quad \bepsilon \perp\!\!\!\perp \bx,
\end{equation}
where $\bx\in\mathbb{R}^p$, $\by\in\mathbb{R}^q$, $\bB\in\mathbb{R}^{p\times r}$ has full column rank with $r\ll p$, $\cF:\mathbb{R}^r\to\mathbb{R}^q$ is an unknown smooth vector-valued link function, and $\bepsilon$ is a zero-mean additive error independent of $\bx$. This model naturally accommodates vector-valued responses and, importantly, guarantees that the CS coincides with the CMS. Therefore, the CS can be recovered through the simpler task of CMS estimation.

As in general multi-index models, $\bB$ itself is not identifiable: if the model holds for $(\cF,\bB)$, then $(\cF\circ \bO^{-1}, \bB\bO)$ is also a solution for any invertible matrix $\bO\in\mathbb{R}^{r\times r}$. However, $\cS(\bB)$ remains invariant under such transformations and is precisely the CS (equivalently, the CMS). Hence, our objective is to estimate $\cS(\bB)$, which is a well-defined estimand. Once this subspace is obtained, the link function $\cF$ can be subsequently learned using, for example, neural networks, though this is not the focus of the present work.

Classical SDR approaches can be broadly grouped into two categories, each suffering from distinct limitations.The first category is the inverse regression family, which slices the response space and analyzes the conditional moments of the predictors within each slice. Its most prominent representative is sliced inverse regression (SIR) \citep{Li01061991}. SIR estimates the CS via an eigendecomposition of the covariance matrix of the slice means of $\bx$, and is computationally attractive owing to its closed-form solution. However, its consistency relies on the linearity condition, which requires that $\EE(\bx\mid\bB^\top\bx)$ be an affine function of $\bB^\top\bx$. This condition holds for elliptically symmetric distributions such as the normal or $t$-distribution, but is frequently violated for skewed or multi-modal predictors, leading to severe bias. Moreover, SIR involves estimating and inverting the $p\times p$ sample covariance matrix, which becomes unstable or infeasible when $n$ is not substantially larger than $p$. Its multi-response extension, MSIR \citep{barreda2007extensions}, directly partitions the response space into slices. This approach not only inherits the same linearity requirement but also suffers from extreme slice sparsity when $n$ is not large relative to $q$.

The second category is the forward regression family, which directly models the conditional mean function through nonparametric smoothing. The leading method in this class is minimum average variance estimation (MAVE) \citep{Xia2002}; a computationally less demanding alternative is the outer product gradient (OPG) method \citep{Xia2002}. Its extension, csOPG \citep{Xia2006}, is specifically designed for estimating the CS. Unlike inverse regression approaches, forward regression methods do not rely on the linearity condition and accommodate a broader class of predictor distributions. However, they typically require nonparametric smoothing to estimate gradients; in the multi-response setting, this estimation must be performed for each response component, causing the computational cost to grow with $q$. Moreover, while some software implementations of csOPG (e.g., the \texttt{MAVE} package) allow a matrix-valued response as input, the original theoretical framework of \citet{Xia2006} assumes a univariate response; the nonparametric kernel estimation of gradients and densities is not justified theoretically for multivariate responses, nor does it account for the covariance structure among response components. This gap between interface support and theoretical validation makes applicability of csOPG to multi-response problems statistically less grounded.

A separate line of work integrates DNNs with SDR. Representative works include mutual information-based sufficient representation learning~\citep{Zheng2022} and the belted and ensembled neural network~\citep{Tang25032026}. These methods leverage the representation learning power of DNNs to capture complex nonlinear dependencies between the predictors and the response. However, they require large amounts of labeled data to achieve stable estimation, severely limiting their applicability when labeled samples are scarce.

In summary, existing classical SDR methods and DNN approaches face significant limitations in the multi-response SDR setting. This work targets the practically common scenario where the predictor-response relationship is mildly nonlinear and labeled samples are scarce. In this setting, classical methods are often constrained by distributional assumptions or incur prohibitive computational cost as the response dimension grows, while DNN approaches require more labeled data than is typically available.

Motivated by these limitations, we propose a novel SDR method based on the generalized Stein's lemma. Our framework constructs a cross-moment matrix between the multivariate response and the marginal score function $\bs(\bx) = -\nabla_{\bx} \ln p(\bx)$:
\begin{equation}
    \bM = \EE\{\bs(\bx) \by^\T\},
\end{equation}
and the CS is then recovered via the singular value decomposition of $\bM$, with its dimension determined by a rank-selection procedure. This construction avoids matrix inversion and iterative smoothing, while relying on substantially weaker assumptions than existing SDR approaches. We demonstrate its effectiveness through extensive simulations and a real data application, where it consistently outperforms existing competitors.

The remainder of this paper is organized as follows. Section~\ref{sec:model} reviews necessary background on SDR and the generalized Stein's lemma, and introduces the proposed methodology. Section~\ref{sec:theory} presents the theoretical properties of the proposed estimators. Simulation results are reported in Section~\ref{sec:ss}, followed by a real data application in Section~\ref{sec:rda}. Concluding remarks are given in Section~\ref{sec:fw}. Additional technical proofs and supplementary experimental details are provided in the Appendix.

\section{Methodology}\label{sec:model}

\subsection{Notation}
Before proceeding, we establish the notation used throughoutthe rest of the paper. We use bold lowercase letters (e.g., $\ba, \bb$) for vectors and bold uppercase letters (e.g., $\bA, \bB$) for matrices. For a vector $\ba$, $\ba_i$ denotes its $i$-th element; in the context of observed data, $\ba_i$ also denotes the $i$-th observation, and $\ba_{i,j}$ denotes the $j$-th component of that observation. For a matrix $\bA$, $\|\bA\|_F$ denotes its Frobenius norm, $\sigma_i(\bA)$ its $i$-th largest singular value, and $\text{SVD}_{\ell,i}(\bA)$ the $p \times i$ matrix whose columns are the left singular vectors corresponding to the $i$ largest singular values of $\bA$. Let $\cV_r$ denote the set of $r \times r$ orthogonal matrices. For $n \in \mathbb{N}_+$, write $[n] = \{1, \dots, n\}$. For two sequences $\{x_n\}$ and $\{y_n\}$, we write $x_n \asymp y_n$ if $x_n/y_n$ is bounded away from zero and infinity uniformly in $n$.

\subsection{the Model}

Recall the multi-response multi-index model introduced in Section~\ref{sec:introduction}:
\begin{align*}
\by = \cF(\bB^\top \bx) + \bepsilon, \quad \bepsilon \perp\!\!\!\perp \bx, 
\end{align*}
where $\bx \in \mathbb{R}^p$ is the predictor vector, $\by \in \mathbb{R}^q$ is the multivariate response, $\bB \in \mathbb{R}^{p \times r}$ is a full-rank dimension-reduction matrix with $r \le p$, $\cF: \mathbb{R}^r \to \mathbb{R}^q$ is an unknown link function, and $\bepsilon$ is a zero-mean random error independent of $\bx$. Here $\bB$ is understood to be of minimal rank among all matrices satisfying $\by \perp\!\!\!\perp \bx \mid \bB^\top\bx$; equivalently, its column space $\cS(\bB)$ is the CS \citep{Cook2009}. Since $\bepsilon$ is independent of $\bx$, the conditional mean $\EE(\by \mid \bx) = \cF(\bB^\top\bx)$, and the minimality of $\cS(\bB)$ further implies that it is also the CMS \citep{li2002}. Thus the CS and CMS coincide, both equal to $\cS(\bB)$. Our goal is therefore to estimate this common subspace.

To estimate $\cS(\bB)$, we invoke the following generalized Stein's lemma. For notational convenience, we henceforth take $\bB$ to be an orthogonal matrix whose columns form a basis for $\cS(\bB)$; this orthogonal parameterization is without loss of generality, since any basis can be orthonormalized without changing the column space.

\begin{lemma}(Generalized Stein's Lemma)\label{lem:fs}
Suppose model~\eqref{model:1} holds. Assume that the expectations $\mathbb{\EE}\{\by_{j}\bs(\bx)\}$ as well as $\mathbb{\EE}\{\nabla_{\bz}f_{j}(\bB^{\top}\bx)\}$ both exist and well-defined for $j \in [p]$. Further assume that $\lim_{\| \bx\| \rightarrow \infty }f_{j}(\bB^{\top}\bx) p(\bx) \rightarrow 0$.  Then, we obtain 
\begin{align}\label{eq:moment}
\mathbb{\EE}\{ \by_{j} \bs(\bx)\} = \bB \mathbb{\EE}\{ \nabla_{\bz}f_{j} (\bB^{\top}\bx) \}.   
\end{align}
Let $\bM_1 = \{\nabla_{\bz} f_1(\bB^\top\bx), \ldots, \nabla_{\bz} f_q(\bB^\top\bx)\} \in \mathbb{R}^{r \times q}$. We further obtain
\begin{align*}
    \bM = \bB \bM_1.
\end{align*}
\end{lemma}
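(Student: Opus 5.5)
The plan is to reduce the identity to the classical first-order Stein identity, that is, integration by parts against the marginal density $p(\bx)$. The chain rule then gives the factor $\bB$.

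First, write $\by_j = f_j(\bB^\top\bx) + \bepsilon_j$, where $f_j$ is the $j$-th component of $\cF$. The noise term drops out: since $\bepsilon \perp\!\!\!\perp \bx$ and $\EE\bepsilon_j=0$,
\[
\EE\{\bepsilon_j\bs(\bx)\}=\EE(\bepsilon_j)\,\EE\{\bs(\bx)\}=\bzero .
\]
This needs only that $\EE\{\bs(\bx)\}$ be finite. In fact $\EE\{\bs(\bx)\}=-\int\nabla p=\bzero$ by the same boundary argument used below. Hence
\[
\EE\{\by_j\bs(\bx)\}=\EE\{f_j(\bB^\top\bx)\bs(\bx)\}.
\]

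Second, use $\bs(\bx)p(\bx)=-\nabla_{\bx}p(\bx)$ to write
\[
\EE\{f_j(\bB^\top\bx)\bs(\bx)\}=-\int f_j(\bB^\top\bx)\nabla_{\bx}p(\bx)\,d\bx .
\]
Integrate by parts coordinatewise, or apply the divergence theorem on a ball of radius $R$ and let $R\to\infty$. The surface term $\oint f_j p\,\bm{\nu}\,dS$ vanishes by the assumed decay $f_j(\bB^\top\bx)p(\bx)\to0$, which leaves
\[
\int \nabla_{\bx}\{f_j(\bB^\top\bx)\}\,p(\bx)\,d\bx .
\]
By the chain rule, $\nabla_{\bx} f_j(\bB^\top\bx)=\bB\,\nabla_{\bz}f_j(\bz)|_{\bz=\bB^\top\bx}$. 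Taking expectations and pulling the constant matrix $\bB$ outside gives \eqref{eq:moment}; the existence assumptions make every expectation finite.

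Third, stack the $q$ columns $j=1,\dots,q$. The left side becomes $\bM=\EE\{\bs(\bx)\by^\top\}$. The right side becomes $\bB$ times the $r\times q$ matrix whose $j$-th column is $\EE\{\nabla_{\bz}f_j(\bB^\top\bx)\}$, which is $\bM_1$ read with the expectation inside, as intended. So $\bM=\bB\bM_1$.

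The main obstacle is justifying the vanishing of the boundary term. Pointwise decay of $f_jp$ alone does not control the surface integral, whose area grows like $R^{p-1}$. I would handle this in the standard way. For each coordinate $k$, integrate in $x_k$ first: for almost every fixed value of the other coordinates, the one-dimensional boundary terms $[f_jp]_{x_k=-\infty}^{\infty}$ vanish by the decay hypothesis. Integrability of $f_j\partial_kp$ and $p\,\partial_k f_j$, which the existence assumptions provide, then lets Fubini assemble the result. The derivation also implicitly assumes that $p$ is differentiable and that $f_j$ is smooth (or at least absolutely continuous along lines), which the model supplies.
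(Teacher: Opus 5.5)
Your proposal is correct and follows essentially the same route as the paper: the noise term is removed using $\bepsilon \perp\!\!\!\perp \bx$ and $\EE\bepsilon_j=0$, then $\bs p=-\nabla_{\bx}p$ is integrated by parts one coordinate at a time with the one-dimensional boundary terms killed by the decay hypothesis, the chain rule produces the factor $\bB$, and the columns are stacked over $j$. Your extra remarks only make explicit what the paper's proof uses implicitly, namely finiteness of $\EE\{\bs(\bx)\}$ in the noise step and Fubini with integrability along lines to justify the coordinatewise boundary argument.
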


Lemma~\ref{lem:fs} shows that $\bB$ is fully separated from the expectations of derivatives of the link functions, enabling its estimation without knowing the explicit form of $\cF$. A key observation is that the multi-response structure allows us to use a first-order Stein's lemma to recover the CS, circumventing the need for its second-order counterpart commonly used in single-response multi-index settings. Second-order approaches converge substantially slower and require reliable estimates of higher-order score functions, which are difficult to obtain in small samples. The success of our first-order strategy relies on the diversity of the responses. When the responses are sufficiently diverse, $\bM_1$ achieves full row rank $r$, which is precisely the condition needed for $\bB$ to be fully recovered from $\bM$. This subtle link between multi-response diversity and first-order Stein's lemma is the key insight of our method, turning a seemingly higher-order problem into a tractable one for first-order methods.

\subsection{Estimation of the CS}

\subsubsection{Plug-in Estimator}

Existing score-based methods in the broader statistical literature commonly assume the true score function $\bs(\cdot)$ is known \citep{yangs,balasubramanian2018}. This assumption, while convenient for theoretical analysis, is unrealistic in most applications, as the true density of $\bx$ is rarely known. We therefore focus on the practical setting where the score must be estimated from data. Given a generic score estimator $\widehat{\bs}(\cdot)$, a natural estimator of $\bM$ is 
\begin{align}\label{eq:def-fe}
\widehat{\bM}_n = \frac{1}{n}\sum_{i=1}^{n} \widehat{\bs}(\bx_i)\by_i^\top.
\end{align}
Then, a plug-in estimator of $\bB$ can be obtained by taking the top-$r$ left singular vectors of $\widehat{\bM}_n$:
\begin{equation*}
    \widehat{\bB}_n = \operatorname{SVD}_{l,r}\left( \widehat{\bM}_n \right).
\end{equation*}

\subsubsection{Score Estimators}\label{sec:score-est}

We now turn to the choice of the score estimator $\widehat{\bs}(\cdot)$. Two paradigms are considered.

Kernel-based score estimators are appealing for their theoretical clarity and practical reliability in small-sample settings. This clarity stems from the linear structure of the reproducing kernel Hilbert space (RKHS) and the convexity of the optimization landscape, enabling rigorous analysis and, under regularity conditions, even pointwise error bounds \citep{tian2026}. In practice, they perform well with limited data, where more complex models tend to overfit. The principal drawback is computational, as matrix operations scale poorly with sample size, rendering kernel-based methods impractical for large-scale applications.

Deep neural networks (DNNs), in contrast, represent a more recent paradigm for score estimation. Their major drawback is a heavy dependence on large sample sizes; without sufficient data, they are prone to unstable estimation. Yet this very dependence is also their strength: the highly parameterized structure allows them to effectively leverage abundant data to capture complex nonlinearities beyond the reach of kernel methods, and they can be trained efficiently via stochastic gradient-based algorithms. The downside is the notorious difficulty of theoretical analysis. The highly nonlinear and non-convex nature precludes comparable theoretical guarantees; existing work on neural score estimators has largely been restricted to mean squared error \citep{chen2023score,shen2024}, approximation error \citep{yakovlev2025a}, or distributional divergences \citep{oko2023diffusion,yakovlev2025b}; pointwise error bounds remain absent.

Given the complementary strengths and limitations of both paradigms, this work employs two specific score estimators: the Tikhonov regularized estimator with the curl-free IMQ kernel \citep{zhou2020}, denoted TRE for brevity, and a DNN estimator whose architectural details are deferred to the Appendix. Their empirical performance is compared in Sections~\ref{sec:ss} and~\ref{sec:rda}.

\subsubsection{Score Estimation with Unlabeled Data}\label{sec:se}

We consider the scenario where, in addition to a limited number of labeled data, a large amount of unlabeled samples are also available. This is common in many applications: labeled samples are expensive to obtain, while unlabeled ones are cheap and abundant. For instance, in autonomous driving, collecting raw sensor data is inexpensive, but annotating each frame with object bounding boxes is costly; in medical imaging, unlabeled scans are plentiful, yet expert labeling is time-consuming and expensive.

Specifically, suppose we have a labeled set $\mathcal{T}_{\ell} = \{(\bx_i, \by_i)\}_{i=1}^{n_{\ell}}$ and an unlabeled set $\mathcal{T}_{u} = \{\bx_i\}_{i=n_{\ell}+1}^{n_{\ell}+n_{u}}$. Let $\mathcal{T}_{\ell,u} = \mathcal{T}_{\ell} \cup \mathcal{T}_u$ denote the combined set and $n = n_{\ell} + n_{u}$ the total sample size.

Because score estimation is unsupervised, the estimator $\widehat{\bs}(\cdot)$ can be fitted using either only the labeled covariates $\{\bx_i\}_{i=1}^{n_{\ell}}$ or the full set $\{\bx_i\}_{i=1}^{n}$. Recall the cross-moment estimator from \eqref{eq:def-fe}, which is now specialized to the labeled sample,
\begin{align*}
\widehat{\bM}_{\ell} = \frac{1}{n_{\ell}}\sum_{i=1}^{n_{\ell}} \widehat{\bs}(\bx_i)\by_i^\top.
\end{align*}
At first glance, this quantity depends on $\widehat{\bs}$ only through its evaluations at the labeled covariates $\{\bx_i\}_{i=1}^{n_\ell}$. However, The situation changes, when $\widehat{\bs}$ is trained on the same labeled data. The estimator and the points on which it is evaluated are no longer independent, introducing a statistical coupling that complicates the analysis.

A standard technique to circumvent this difficulty is data splitting: train $\widehat{\bs}$ on $\mathcal{T}_u$ only and reserve $\mathcal{T}_{\ell}$ for moment estimation, which ensures independence between the estimator and its evaluation points. While splitting simplifies analysis, it discards the labeled data during score estimation; for estimators with tractable coupling, however, this information can be retained. We consider three configurations for each estimator, distinguished by the training data:
\begin{enumerate}
    \item $\mathrm{S}^{\mathrm{split}}$: train on $\mathcal{T}_u$ only; $\mathcal{T}_{\ell}$ reserved for moment estimation (no coupling);
    \item $\mathrm{S}^{\mathrm{sup}}$: train on $\mathcal{T}_{\ell}$ only (coupled);
    \item $\mathrm{S}^{\mathrm{full}}$: train on $\mathcal{T}_{\ell,u} = \mathcal{T}_{\ell} \cup \mathcal{T}_u$ (coupled).
\end{enumerate}
For the TRE, all three configurations are theoretically tractable. Our main proposals are $\mathrm{S}_{\mathrm{TRE}}^{\mathrm{sup}}$ and $\mathrm{S}_{\mathrm{TRE}}^{\mathrm{full}}$; $\mathrm{S}_{\mathrm{TRE}}^{\mathrm{split}}$ is included only for comparison in Sections~\ref{sec:ss} and~\ref{sec:rda}. For the DNN, only $\mathrm{S}_{\mathrm{NN}}^{\mathrm{split}}$ admits theoretical guarantees; the coupled configurations are also included for empirical comparison only, as their theoretical analysis is left for future work.

\subsubsection{Rank Selection}\label{sec:rank-s}

When the dimension of the CS, $r$, is known a priori, the estimation procedures described above apply directly. In practice, however, $r$ is typically unknown and must be estimated from data. The problem reduces to estimating the rank of a low-rank population matrix subject to random perturbation. As will be shown in Section~\ref{sec:theory}, under regularity conditions, the true rank can be recovered with high probability by hard thresholding the singular values of $\widehat{\bM}_{\ell}$, provided the threshold is chosen appropriately. This oracle procedure is summarized in Algorithm~\ref{alg:rs} for reference.

\begin{algorithm}
\caption{Rank Selection via Hard Thresholding (Oracle)}\label{alg:rs}
\begin{algorithmic}[1]
\REQUIRE Matrix $\widehat{\bM}_{\ell} \in \mathbb{R}^{p \times q}$, threshold $\tau > 0$.
\ENSURE Selected rank $\widehat{r}_{\tau}$.
\STATE Compute the compact SVD: $\bU, \bSigma, \bV^\top = \text{SVD}(\widehat{\bM}_{\ell})$.
\STATE $\widehat{r}_{\tau} \gets \sum_{i=1}^{\min(p,q)} \mathbf{1}[\sigma_i(\widehat{\bM}_{\ell}) > \tau]$.
\RETURN $\widehat{r}_{\tau}$.
\end{algorithmic}
\end{algorithm}

In principle, the philosophy underlying our rank determination is aligned with SIR-type methods, where the rank of $\mathrm{Cov}\{\EE(\bx \mid \by)\}$ corresponds to the dimension of the CS and is recovered via a generalized eigenvalue problem. MAVE-type methods, in contrast, typically determine the dimension by minimizing a local weighted least squares loss.

However, in practice, the oracle threshold is unknown, and the distribution of the noise in $\widehat{\bM}_{\ell}$ is non-standard, so conventional rank selection criteria relying on standard asymptotic thresholds are not directly applicable. We therefore adopt an adaptive rank selection strategy that distinguishes between light-tailed and heavy-tailed covariate distributions.

The procedure proceeds as follows. First, we assess the tail behavior of the covariates by computing the sample skewness and excess kurtosis for each feature. If any feature exceeds the specified thresholds, the input is classified as heavy-tailed; otherwise, it is treated as light-tailed.

For heavy-tailed data, we employ an energy criterion that computes the cumulative proportion of total variance explained by the leading singular values of $\widehat{\bM}_{\ell}$ and selects the smallest rank $k$ that reaches a pre-set threshold $\eta$.

For light-tailed data, we use a permutation test to determine the significance of each singular value. Given the estimated score matrix $\widehat{\bS} \in \mathbb{R}^{n_\ell \times p}$ whose $i$-th row is $\widehat{\bs}(\bx_i)^\top$, we repeatedly permute the rows of the response matrix $\bY \in \mathbb{R}^{n_\ell \times q}$ to obtain $\bY_{\text{perm}}$, compute the permuted cross-moment matrix $\bM_{\text{perm}} = (1/n_\ell)\widehat{\bS}^\top \bY_{\text{perm}} \in \mathbb{R}^{p \times q}$, and record its leading singular values to build a null distribution. For each candidate rank $k$, we obtain a threshold $\theta_k$ as the $\alpha$-quantile of the $k$-th singular values from the permutations. We then sequentially compare the observed singular values $\lambda_k$ of $\widehat{\bM}_{\ell}$ with $\theta_k$; the rank is chosen as the largest $k$ such that $\lambda_k > \theta_k$ holds for all preceding directions, with a minimum of one direction always retained.

The complete procedure is summarized in Algorithm~\ref{alg:ars} in the Appendix. Its effectiveness is demonstrated in Sections~\ref{sec:ss} and~\ref{sec:rda}, while the rationale behind the design choices is discussed in Remark~\ref{rm:rs}.

\section{Theoretical Analysis}\label{sec:theory}

This section presents the main theoretical results of this work. The overall structure is as follows: we first establish error bounds for the cross-moment estimator under two different score estimators, then translate these bounds into estimation error for $\cS(\bB)$, and finally establish that, with an oracle threshold level, hard-thresholding the singular values of the cross-moment estimator consistently recovers the true rank.

For the data splitting configuration, the score estimator is trained independently of the labeled evaluation points. This independence enables a general analysis that applies to arbitrary score estimators and yields a finite-sample error bound in Theorem~\ref{thm:subgcomp}. This bound holds for any sample size and only requires mild assumptions on the score estimator's convergence behavior in Assumption~\ref{ass:score} and sub-Gaussianity of the score estimates and responses in Assumption~\ref{assum:1}. Moreover, the bound is valid in both low- and high-dimensional settings, meaning $p$ and $q$ can be either fixed or grow with $n_u$ and $n_\ell$.

For the coupled configurations of the TRE, the score estimator is trained on both labeled and unlabeled data. The resulting dependence structure is more intricate and precludes a straightforward finite-sample analysis at the same level of generality. Instead, we leverage the explicit RKHS representation of the TRE to derive an asymptotic convergence rate in Theorem~\ref{thm:kernel}, which establishes consistency of the coupled estimator in the large-sample regime and complements the finite-sample bound obtained for the data splitting configuration. The finite-sample advantages of the coupled TRE, which are not captured by the asymptotic rate, are discussed separately in Remark~\ref{rem:ctre} and verified empirically. This analysis is conducted in the fixed-dimensional setting, where $p$ and $q$ are constants independent of the sample size, aligning with the classical framework of SDR and nonparametric kernel estimation. For DNNs, such coupled analysis remains open.

We then analyze the theoretical properties of the oracle rank selection procedure based on hard thresholding in Algorithm~\ref{alg:rs}, which serves as the conceptual foundation for the adaptive procedure in Algorithm~\ref{alg:ars}. We show that, under regularity conditions and with a suitably chosen threshold, the oracle procedure recovers the true rank $r$ with high probability. The adaptive version in Algorithm~\ref{alg:ars} is provided for practical implementation and is evaluated empirically in Sections~\ref{sec:ss} and~\ref{sec:rda}.

To measure the discrepancy between the estimated and true subspaces, we use the orthogonal Procrustes distance. For two matrices $\bTheta_1, \bTheta_2 \in \cV_r$, we define
\begin{align*}
\operatorname{dist}(\bTheta_1, \bTheta_2) \coloneqq \inf_{\bV \in \mathcal{V}_r} \|\bTheta_1 - \bTheta_2 \bV\|_\mathrm{F}.
\end{align*}
Let $\widehat{\bB}$ be an estimated orthogonal basis of $\cS(\bB)$. The subspace distance between $\widehat{\bB}$ and $\cS(\bB)$ is then given by $\operatorname{dist}(\widehat{\bB}, \bB)$. This metric is equivalent to the sine of the principal angles between the two subspaces, up to a constant factor.

We now present the first main result for the data splitting configuration under the following assumptions.

\begin{assumption}[Component-wise sub-Gaussianity]
\label{assum:1}
For any $i \in [n]$, $k \in [p]$, $j \in [n_{\ell}]$, and $m \in [q]$ assume that $\widehat{\bs}_k(\bx_i)$ and $\by_{j,m}$,  are sub-Gaussian random variables. Moreover, there exists constant $K > 0$, such that  
\begin{align*}
\|\widehat{\bs}_k(\bx_i)\|_{\psi_2} \le K \text{, and } \|\by_{j,m}\|_{\psi_2} \le K,   
\end{align*}
where $\|\cdot\|_{\psi_2}$ denotes the sub-Gaussian Orlicz norm.
\end{assumption}

\begin{assumption}[Score convergence]\label{ass:score}
For any $\delta \in (0,1)$, there exists a non-increasing sequence $\epsilon(n_u) \to 0$ as $n_u \to \infty$, such that for all $n_u$, the score estimator $\widehat{\bs}^u$ trained on $\mathcal{T}_u$ satisfies
\begin{align*}
\PP_{\mathcal{T}_u}\left\{ \sqrt{\EE_{\bx}\|\widehat{\bs}^u(\bx) - \bs(\bx)\|^2_2} \le \epsilon(n_u) \right\} \ge 1 - \frac{\delta}{2}.
\end{align*}
\end{assumption}

Assumption~\ref{assum:1} requires the score estimates $\widehat{\bs}^u(\cdot)$ and the responses $\by_i$ to be component-wise sub-Gaussian. This is a standard condition in the score estimation literature and is satisfied, for example, when the true score function $\bs(\cdot)$ is bounded. For kernel-based score estimators with uniform convergence guarantees, sub-Gaussianity is inherited from the true score function. For DNN estimators, ensuring sub-Gaussianity requires additional control over the architecture and training procedure. In cases where such control is not exercised, finite moment conditions may still suffice for Stein's lemma; \citet{2017zhuoran} developed a truncation-based approach under finite fourth moments, though these considerations are not the focus of the present work.

Assumption~\ref{ass:score} posits a finite-sample bound $\epsilon(n_u)$ on the score estimation error that tends to zero as $n_u \to \infty$. This is a minimal consistency requirement. For kernel-based estimators, such bounds have been established under various regularity conditions on the RKHS and the target function \citep{zhou2020}; for DNN estimators, non-asymptotic guarantees are available under different sets of assumptions depending on the architecture and the distributional setting \citep{fu2025}. The specific conditions under which Assumption~\ref{ass:score} holds are not the focus of this work; we refer the interested reader to the respective references.

The following theorem provides a finite-sample bound for the data splitting configuration under Assumptions~\ref{assum:1} and~\ref{ass:score}.

\begin{theorem}[Finite-sample bound for data splitting configuration]\label{thm:subgcomp}
Suppose that Assumptions~\ref{assum:1} and~\ref{ass:score} hold. Define
\begin{align*}
\widehat{\bM}^{s}_{\ell,u} \coloneqq \frac{1}{n_{\ell}}\sum_{i=1}^{n_{\ell}} \widehat{\bs}^u(\bx_i)\by_i^\top.
\end{align*}
Then there exists an absolute constant $C > 0$ such that for any $\delta \in (0,1)$, with probability at least $1-\delta$,
\begin{align*}
\left\| \widehat{\bM}^{s}_{\ell,u} - \bM \right\|_\mathrm{F}
\le
C K^2 \left\{ \sqrt{q} \, \epsilon(n_u)
+\sqrt{\frac{pq\ln\left(\frac{pq}{\delta}\right)}{n_{\ell}}} + \frac{\sqrt{pq}\ln\left(\frac{pq}{\delta}\right)}{n_{\ell}} \right\}.
\end{align*}
\end{theorem}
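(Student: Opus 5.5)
We will decompose the error into a score-approximation part and a sampling part. Condition on $\mathcal{T}_u$, so that $\widehat{\bs}^u$ is a fixed function independent of the labeled pairs $\{(\bx_i,\by_i)\}_{i=1}^{n_\ell}$. Define the intermediate population matrix $\widetilde{\bM} \coloneqq \EE_{(\bx,\by)}\{\widehat{\bs}^u(\bx)\by^\top \mid \mathcal{T}_u\}$. By the triangle inequality,
\begin{align*}
\|\widehat{\bM}^{s}_{\ell,u} - \bM\|_\mathrm{F} \le \|\widehat{\bM}^{s}_{\ell,u} - \widetilde{\bM}\|_\mathrm{F} + \|\widetilde{\bM} - \bM\|_\mathrm{F}.
\end{align*}
We will show that each term is at most its share of the bound with conditional probability at least $1-\delta/2$. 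A union bound with Assumption~\ref{ass:score}, which also has probability $1-\delta/2$, then gives total probability $1-\delta$.

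\textbf{Bias term.} Here $\widetilde{\bM}-\bM = \EE\{(\widehat{\bs}^u(\bx)-\bs(\bx))\by^\top\}$. Its $m$-th column is $\EE\{(\widehat{\bs}^u(\bx)-\bs(\bx))\by_m\}$. For any unit vector $\bu$, Cauchy--Schwarz gives
\begin{align*}
|\bu^\top \EE\{(\widehat{\bs}^u-\bs)\by_m\}| \le \sqrt{\EE\|\widehat{\bs}^u(\bx)-\bs(\bx)\|_2^2}\,\sqrt{\EE \by_m^2}.
\end{align*}
Assumption~\ref{assum:1} gives $\sqrt{\EE\by_m^2}\lesssim K$, so each column has norm at most $CK\epsilon(n_u)$ on the good event of Assumption~\ref{ass:score}. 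Summing the squares over the $q$ columns gives $CK\sqrt{q}\,\epsilon(n_u)$. This is dominated by $CK^2\sqrt q\,\epsilon(n_u)$ after normalizing $K\gtrsim 1$, or we simply absorb the factor into the constant.

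\textbf{Sampling term.} This is entrywise. For fixed $(k,m)$, the summands $Z_i=\widehat{\bs}^u_k(\bx_i)\by_{i,m}$ are i.i.d. given $\mathcal{T}_u$. Each is a product of two sub-Gaussians with $\psi_2$-norm at most $K$, hence sub-exponential with $\|Z_i\|_{\psi_1}\le K^2$; centering changes this only by an absolute constant. Bernstein's inequality for sub-exponential variables then gives, with probability at least $1-\delta'$,
\begin{align*}
\Bigl|\tfrac1{n_\ell}\textstyle\sum_i (Z_i-\EE Z_i)\Bigr| \le CK^2\Bigl(\sqrt{\ln(2/\delta')/n_\ell} + \ln(2/\delta')/n_\ell\Bigr).
\end{align*}
Take $\delta' = \delta/(2pq)$ and union bound over all $pq$ entries. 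Converting the entrywise max bound to the Frobenius norm multiplies by $\sqrt{pq}$. This produces exactly the terms $\sqrt{pq\ln(pq/\delta)/n_\ell}$ and $\sqrt{pq}\ln(pq/\delta)/n_\ell$, up to constants, using $\ln(4pq/\delta)\lesssim \ln(pq/\delta)$ for $pq\ge 2$.

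The main subtlety is the conditioning. The sub-Gaussian bound in Assumption~\ref{assum:1} must hold for $\widehat{\bs}^u(\bx_i)$ conditionally on $\mathcal{T}_u$, or at least we interpret it that way, so that Bernstein's inequality applies with constants uniform in $\mathcal{T}_u$. Once that holds, the conditional probability bound integrates to an unconditional one. Everything else is routine bookkeeping of constants.
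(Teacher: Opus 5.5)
Your proposal is correct and follows the paper's proof essentially step for step. Both condition on $\mathcal{T}_u$, split the error into a sampling term and a bias term, and handle the sampling term by entrywise sub-exponential Bernstein plus a union bound over the $pq$ entries. Both bound the bias term by Cauchy--Schwarz against $\epsilon(n_u)$ and then union-bound the two $\delta/2$ events.

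Your column-wise Cauchy--Schwarz is only a cosmetic variant of the paper's bound via $\|\widehat{\bs}-\bs\|_2\|\by\|_2$. The two caveats you flag are also implicit in the paper's own argument: the bias term naturally carries $K$ rather than $K^2$, and Assumption~\ref{assum:1} must be read conditionally on $\mathcal{T}_u$.
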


\begin{remark}
The two empirical error terms in the bound in Theorem~\ref{thm:subgcomp} arise from the sub-exponential Bernstein's inequality to the entries of the sample cross-moment matrix. The bound is adaptive to the convergence rate of the score estimator: the first term propagates the score estimation error $\epsilon(n_u)$ to the cross-moment estimator, while the second and third terms capture the empirical approximation error due to the finite labeled sample size. If the score estimator achieves a parametric rate, i.e., $\epsilon(n_u) = O(n_u^{-1/2})$, and if $n_\ell$ is of the same order as $n_u$, then the cross-moment estimator attains the parametric rate up to logarithmic factors. Conversely, if the score estimator is nonparametric and converges at a slower rate, the cross-moment estimator inherits that slower rate, up to the additional $n_\ell^{-1/2}$ term. This highlights the flexibility of the data splitting configuration: it propagates the convergence behavior of the score estimator to the cross-moment estimation task without imposing additional structural assumptions on the estimator itself.
\end{remark}

We now analyze the coupled configurations of the TRE. The closed-form structure of the TRE allows us to characterize the coupling explicitly without imposing a general consistency assumption on $\widehat{\bs}$; instead, we require only the sub-Gaussianity of the covariates and responses, and the convergence of $\widehat{\bs}$ follows from the analysis.

\begin{assumption}[Sub-Gaussianity of covariates and responses]\label{assum:subG}
For any $i \in [n]$, $k \in [p]$, $j \in [n_{\ell}]$, and $m \in [q]$, the random variables $\bx_{i,k}$ and $\by_{j,m}$ are sub-Gaussian.
\end{assumption}

To establish the theoretical guarantees for the TRE, we work within the vector-valued RKHS framework. Let $\mathcal{H}_{\mathcal{K}}$ denote the RKHS associated with a matrix-valued kernel $\mathcal{K}:\mathcal{X}\times\mathcal{X}\to\mathbb{R}^{p\times p}$, and let $L_{\mathcal{K}}$ be the corresponding integral operator. We impose the following source condition on the true score function.

\begin{assumption}[Source condition]\label{assum:source}
The true score function satisfies
\begin{align*}
\bs = L_{\mathcal{K}}^{\omega} \boldsymbol{f}_0,
\end{align*}
for some $\boldsymbol{f}_0 \in \mathcal{H}_{\mathcal{K}}$ and $\omega \in (0,1]$. The parameter $\omega$ quantifies the regularity of $\bs$ relative to the RKHS; larger $\omega$ corresponds to smoother functions and leads to faster convergence rates.
\end{assumption}

Let $\widehat{\bs}^{\mathcal{K}_{\mathrm{CI}},\lambda}_{\cT_{\ell,u}}(\cdot)$ denote the TRE trained on the full set $\cT_{\ell,u}$, where the subscript indicates that the training set consists of $n_\ell$ labeled and $n_u$ unlabeled samples. The corresponding cross-moment estimator is defined as
\begin{align*}
\widehat{\bM}^{\mathcal{K}_{\mathrm{CI}},\lambda}_{\ell,u}(\cT_{\ell,u}) = \frac{1}{n_{\ell}}\sum_{i=1}^{n_{\ell}} \widehat{\bs}^{\mathcal{K}_{\mathrm{CI}},\lambda}_{\cT_{\ell,u}}(\bx_i) \by_i^\top.
\end{align*}

When the context is clear, we abbreviate $\widehat{\bM}^{\mathcal{K}_{\mathrm{CI}},\lambda}_{\ell,u}(\cT_{\ell,u})$ as $\widehat{\bM}^{\mathcal{K}_{\mathrm{CI}},\lambda}_{\ell,u}$. For the labeled-only setting, $\cT_{\ell,u} = \cT_\ell$ (equivalently $n_u = 0$); for the full-data setting, $\cT_{\ell,u} = \cT_\ell \cup \cT_u$ ($n_u > 0$).

To characterize the interplay between labeled and unlabeled sample sizes, we parameterize $n_\ell \asymp n^\beta$ for some $\beta \in (0,1]$. When $\beta = 1$, the labeled sample size $n_\ell$ is of the same order as $n$; when $\beta < 1$, we have $n_\ell = o(n)$, so the unlabeled samples dominate. The following theorem establishes the convergence rate of the coupled TRE, confirming that coupling does not compromise consistency.

\begin{theorem}\label{thm:kernel}
Suppose Assumptions~\ref{assum:subG} and~\ref{assum:source} hold. Let $\widehat{\bM}^{\cK_{\mathrm{CI}},\lambda}_{\ell,u}$ be the estimated cross-moment matrix. Then, with probability at least $1-\delta$:

\begin{enumerate}
\item If $1/2 < \beta \le 1$, choose the regularization parameter as
$
\lambda \asymp n^{-(2\beta - 1)/\{2(\omega +1)\}}$. Then, there exists a constant $C_\delta > 0$ such that for all sufficiently large $n$, with probability at least $1-\delta$,
\begin{align*}
\left\| \widehat{\bM}^{\cK_{\mathrm{CI}},\lambda^*}_{\ell,u} - \bM \right\|_\mathrm{F}
\le
C_\delta \cdot n^{-\frac{\omega(2\beta - 1)}{2(\omega + 1)}}.
\end{align*}
The constant $C_\delta$ absorbs all constants depending on $\delta$ arising from the thresholds of the concentration inequalities.

\item If $\beta \le 1/2$, the error does not converge to zero in probability.
\end{enumerate}
\end{theorem}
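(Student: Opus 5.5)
The plan is to decompose the error into a sampling term and a score-estimation term that carries the coupling:
\begin{align*}
\widehat{\bM}^{\cK_{\mathrm{CI}},\lambda}_{\ell,u} - \bM
= \underbrace{\frac{1}{n_\ell}\sum_{i=1}^{n_\ell}\{\bs(\bx_i)\by_i^\top - \bM\}}_{\mathrm{(I)}}
+ \underbrace{\frac{1}{n_\ell}\sum_{i=1}^{n_\ell}\{\widehat{\bs}_{\cT_{\ell,u}}(\bx_i)-\bs(\bx_i)\}\by_i^\top}_{\mathrm{(II)}} .
\end{align*}

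Term (I) involves only the true score. Under Assumption~\ref{assum:subG}, together with the boundedness and regularity of $\bs$ implied by Assumption~\ref{assum:source}, each entry $\bs_k(\bx_i)\by_{i,m}$ is sub-exponential. The same entrywise Bernstein argument used for Theorem~\ref{thm:subgcomp} then gives $\|\mathrm{(I)}\|_\mathrm{F}=O_P(n_\ell^{-1/2})=O_P(n^{-\beta/2})$, with $p,q$ fixed.

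Term (II) contains the coupling. We bound it by Cauchy--Schwarz:
\begin{align*}
\|\mathrm{(II)}\|_\mathrm{F}\le \Big(\frac{1}{n_\ell}\sum_{i=1}^{n_\ell}\|\widehat{\bs}(\bx_i)-\bs(\bx_i)\|_2^2\Big)^{1/2}\Big(\frac{1}{n_\ell}\sum_{i=1}^{n_\ell}\|\by_i\|_2^2\Big)^{1/2}.
\end{align*}
The second factor is $O_P(1)$ by sub-Gaussianity. For the first factor we avoid any independence argument and use the uniform bound $\|\widehat{\bs}-\bs\|_\infty\le \kappa\|\widehat{\bs}-\bs\|_{\mathcal{H}_\cK}$, which holds for a bounded kernel such as the curl-free IMQ kernel. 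This bound holds pointwise for every $\bx_i$ regardless of whether $\bx_i$ was used in training, so the coupling disappears from the analysis.

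It then remains to bound $\|\widehat{\bs}-\bs\|_{\mathcal{H}_\cK}$ for the TRE trained on $n$ points. We use the closed form $\widehat{\bs}=(\widehat L_\cK+\lambda)^{-1}\widehat{\bzeta}$ from \citet{zhou2020}, where $\widehat{\bzeta}$ is the empirical divergence-of-kernel term that replaces $\bs$ via integration by parts. We split into a bias term and a variance term. The source condition gives $\|(L_\cK+\lambda)^{-1}L_\cK \bs-\bs\|\lesssim\lambda^{\omega}$. Operator concentration (Hilbert-space Bernstein) gives $\|\widehat L_\cK-L_\cK\|,\|\widehat{\bzeta}-\bzeta\|=O_P(n^{-1/2})$, so the variance is $O_P(\lambda^{-1}n^{-1/2})$ up to the usual resolvent factors.

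The main difficulty is balancing rates. Writing the combined error of (II) as roughly $\lambda^\omega + \lambda^{-1}n^{-1/2}\cdot(\text{labeled-sample factor})$, the labeled evaluation contributes an extra fluctuation of order $n_\ell^{-1/2}$ relative to $n^{-1/2}$. I expect the effective variance to scale like $\lambda^{-1}n^{1/2}/n_\ell = \lambda^{-1}n^{-(2\beta-1)/2}$, for example through a cross term $\frac1{n_\ell}\sum_i\{\widehat{\bs}(\bx_i)-\bs(\bx_i)\}\by_i$ controlled via the reproducing property as $\langle \widehat{\bs}-\bs, \frac1{n_\ell}\sum_i \cK(\bx_i,\cdot)\by_i\rangle$. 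I would have to verify exactly how the $n_\ell$ versus $n$ factors enter here. Equating $\lambda^{\omega}$ with $\lambda^{-1}n^{-(2\beta-1)/2}$ gives $\lambda\asymp n^{-(2\beta-1)/\{2(\omega+1)\}}$ and the rate $n^{-\omega(2\beta-1)/\{2(\omega+1)\}}$. This rate dominates $n^{-\beta/2}$ from (I), which proves part 1.

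For part 2, when $\beta\le 1/2$ the variance exponent $(2\beta-1)/2$ is nonpositive, so no choice of $\lambda\to0$ makes both terms vanish. I would support this with a lower-bound construction: exhibit a component of the coupled fluctuation whose variance stays bounded below, for instance a Gaussian-design example in which the cross term has a nondegenerate limit. This shows the error does not tend to zero in probability.
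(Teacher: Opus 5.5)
Your route differs from the paper's. The paper truncates the data on a good event, proves replacement stability of the TRE, and applies McDiarmid's inequality to the coupled estimator. It then adds the bias/variance bound of \citet{zhou2020}. You instead split off the oracle term (I) and remove the coupling in (II) with $\sup_{\bx}\|\widehat{\bs}(\bx)-\bs(\bx)\|_2\le\kappa_{\mathrm{CI}}\|\widehat{\bs}-\bs\|_{\mathcal{H}_{\mathcal{K}}}$. That step is valid: it needs RKHS-norm rather than $L^2$ convergence of the TRE, which the source condition with $\boldsymbol{f}_0\in\mathcal{H}_{\mathcal{K}}$ supplies. It is also much simpler. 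But you then have to follow it where it leads. It gives $\|\widehat{\bM}-\bM\|_\mathrm{F}=O_P(n_\ell^{-1/2}+\lambda^{\omega}+\lambda^{-1}n^{-1/2})$, and nothing in it produces the factor $n^{1/2}/n_\ell$ that you ``expect''. Your own cross term $\langle\widehat{\bs}-\bs,\,n_\ell^{-1}\sum_i\by_{i,m}\mathcal{K}(\cdot,\bx_i)\boldsymbol{e}_k\rangle_{\mathcal{H}_{\mathcal{K}}}$ is at most $\|\widehat{\bs}-\bs\|_{\mathcal{H}_{\mathcal{K}}}$ times an $O_P(1)$ quantity. In the paper, $\sqrt{n}/n_\ell$ appears only because the labeled-sample bounded-difference constant $\Delta\propto 1/n_\ell$ is used for all $n$ coordinates in McDiarmid, giving a deviation of order $\Delta\sqrt{n}$. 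It is an artifact of that step, not a property of the estimator. For part 1 you do not need it anyway. With the prescribed $\lambda\asymp n^{-(2\beta-1)/\{2(\omega+1)\}}$:
\begin{itemize}
\item $\lambda^{\omega}$ is exactly the target rate;
\item $\lambda^{-1}n^{-1/2}$ is smaller than it by the factor $n^{\beta-1}$;
\item $n^{-\beta/2}$ is no larger than it, because $\beta+\omega\ge\beta\omega$.
\end{itemize}
So part 1 follows from your decomposition once the rate-matching heuristic is deleted.

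Part 2 is a genuine gap, and it cannot be closed along your lines. Choosing $\lambda\asymp n^{-1/\{2(\omega+1)\}}$ in your own bound gives an error of $O_P(n^{-\beta/2}+n^{-\omega/\{2(\omega+1)\}})$, which tends to $0$ for every $\beta\in(0,1]$. So no lower-bound construction can show non-convergence when $\lambda$ is free. The Gaussian design you suggest is also excluded by Assumption~\ref{assum:source}. Since $\sup_{\bx}\|\mathcal{K}_{\mathrm{CI}}(\bx,\bx)\|_{\mathrm{op}}=\kappa_{\mathrm{CI}}^2$, every element of $\mathcal{H}_{\mathcal{K}}$ is bounded, whereas a Gaussian score is linear in $\bx$. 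Suppose instead part 2 is read with the $\beta$-dependent $\lambda$ of part 1. That $\lambda$ tends to infinity for $\beta<1/2$ and is constant for $\beta=1/2$, so any failure to converge is plain regularization bias, not the coupled fluctuation you want to exhibit. The paper's own treatment of this case only observes that its upper bound fails to vanish, which does not prove non-convergence either. You should drop the lower-bound plan and note that your argument contradicts the claim as intended.
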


Theorem~\ref{thm:kernel} shows that the coupled TRE attains the faster rate only when $n_\ell$ is of the same order as $n$ (i.e., $\beta = 1$). When $n_\ell$ grows more slowly ($\beta < 1$), the additional variance terms decay more slowly and become dominant, so the estimator does not achieve that rate.

We also note that the convergence rates in Theorems~\ref{thm:subgcomp} and~\ref{thm:kernel} are not directly comparable, as they are derived under different assumptions. Nevertheless, both bounds can be translated into that on the estimate of $\bB$.

A key condition for this translation is the identifiability of the column space, which requires a lower bound on the $r$-th singular value of $\bM$. To see why such a lower bound is natural, recall that $\bM = \bB\bM_1$. In typical SDR settings, the columns of $\bB$ are the effective directions and are normalized to have constant norm, so $\sigma_r(\bB) \ge c_1$ for some absolute constant $c_1 > 0$. Moreover, when $\bM_1$ is full row rank with entries of constant order and bounded condition number, its singular values are all of order $\sqrt{q/r}$, so $\sigma_r(\bM_1) \ge c_2\sqrt{q/r}$ for some absolute constant $c_2 > 0$. Therefore,
\begin{align*}
\sigma_r(\bM) = \sigma_r(\bB\bM_1) \ge \sigma_r(\bB)\sigma_r(\bM_1) \ge c_1 c_2 \sqrt{q/r} = c\sqrt{q/r}.
\end{align*}
We assume this condition holds in the following corollary.

\begin{corollary}\label{cor:1}
Assume the identifiability condition $\sigma_r(\bM) \ge C\sqrt{q/r}$ holds. Let $\widehat{\bB}$ be the top-$r$ left singular vectors of a generic estimator $\widehat{\bM}$ of $\bM$. Then,
\begin{align}\label{eq:bb}
\inf_{\bO\in\mathcal{V}_r}\|\widehat{\bB}\bO - \bB \|_\mathrm{F}
\le
C_0 \cdot \sqrt{\frac{r}{q}} \, \|\widehat{\bM} - \bM\|_\mathrm{F},    
\end{align}
for some absolute constant $C_0>0$.

Applying this general bound to the data splitting configuration with the bound from Theorem~\ref{thm:subgcomp}, we obtain, for $\forall \delta > 0$, with probability at least $1 - \delta$,
\begin{align*}
\inf_{\bO\in\mathcal{V}_r}\|\widehat{\bB}_{\ell,u}^{s}\bO - \bB\|_\mathrm{F}
\le
C_{\mathrm{s}} \left\{\sqrt{r} \, \epsilon(n_u)
+
\sqrt{ \frac{pr \ln\left(\frac{pq}{\delta}\right)}{n_{\ell}}} +\frac{\sqrt{pr}\ln\left(\frac{pq}{\delta}\right)}{n_{\ell}}  \right\},
\end{align*}
where $\widehat{\bB}_{\ell,u}^{s}$ denotes the top-$r$ left singular vectors of $\widehat{\bM}^{s}_{\ell,u}$, and $C_{\mathrm{s}}>0$ is an absolute constant.

Similarly, for the coupled TRE configuration, for $n$ large enough, with probability at least $1 - \delta$,
\begin{align*}
\inf_{\bO\in\mathcal{V}_r}\| \widehat{\bB}^{\mathcal{K}_{\mathrm{CI}},\lambda}_{\ell,u}\bO - \bB\|_\mathrm{F}
\le
C_{\mathrm{c},\delta} \, n^{-\frac{\omega (2\beta - 1)}{2(\omega +1)}},
\end{align*}
for the regime $1/2 < \beta \le 1$, where $\widehat{\bB}^{\mathcal{K}_{\mathrm{CI}},\lambda}_{\ell,u}$ denotes the top-$r$ left singular vectors of $\widehat{\bM}^{\mathcal{K}_{\mathrm{CI}},\lambda}_{\ell,u}$, and $C_{\mathrm{c},\delta} >0$ is a constant depending on $\delta$.
\end{corollary}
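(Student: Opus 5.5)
The plan is to prove the general bound \eqref{eq:bb} by a Davis--Kahan/Wedin $\sin\Theta$ argument for left singular subspaces. The two specializations then follow by substituting the bounds of Theorems~\ref{thm:subgcomp} and~\ref{thm:kernel}.

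Since $\bM=\bB\bM_1$ has rank exactly $r$, we have $\sigma_{r+1}(\bM)=0$, and the column space of its top-$r$ left singular vectors is $\cS(\bB)$. Set $\bE=\widehat{\bM}-\bM$. The Frobenius-norm version of Wedin's theorem, in the form of the Davis--Kahan variant of Yu, Wang and Samworth (2015) for singular subspaces, gives
\begin{align*}
\|\sin\Theta(\widehat{\bB},\bB)\|_\mathrm{F}\le \frac{2\,(2\sigma_1(\bM)+\|\bE\|_2)\min(\sqrt r\|\bE\|_2,\|\bE\|_\mathrm{F})}{\sigma_r(\bM)^2}.
\end{align*}
This version needs no condition on the size of $\bE$. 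To keep the constants absolute and avoid a condition-number factor, I would instead use the cleaner Wedin bound
\begin{align*}
\|\sin\Theta\|_\mathrm{F}\le \frac{\|\bE\|_\mathrm{F}}{\sigma_r(\bM)-\sigma_{r+1}(\widehat{\bM})},
\end{align*}
and split into two cases.

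\emph{Case $\|\bE\|_\mathrm{F}\le \sigma_r(\bM)/2$.} Weyl's inequality gives $\sigma_{r+1}(\widehat{\bM})\le \|\bE\|_2\le\sigma_r(\bM)/2$, so $\|\sin\Theta\|_\mathrm{F}\le 2\|\bE\|_\mathrm{F}/\sigma_r(\bM)$.

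\emph{Case $\|\bE\|_\mathrm{F}> \sigma_r(\bM)/2$.} The trivial bound $\|\sin\Theta\|_\mathrm{F}\le\sqrt r$ is itself at most $2\sqrt r\|\bE\|_\mathrm{F}/\sigma_r(\bM)$, and the extra $\sqrt r$ only costs a constant when the right side is expressed against $\sqrt{r/q}$. Actually, I would check the comparison more carefully. It suffices that $\sqrt r\le \sqrt2\cdot 2\|\bE\|_\mathrm{F}\sqrt{r/q}/C$, which holds in this case because $\|\bE\|_\mathrm{F}>C\sqrt{q/r}/2$. Concretely, $\inf_{\bO}\|\widehat{\bB}\bO-\bB\|_\mathrm{F}\le\sqrt{2r}\le \sqrt{2r}\cdot 2\|\bE\|_\mathrm{F}/(C\sqrt{q/r})$. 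This has an extra factor of $r$, so it gives only $\sqrt{r/q}\cdot r$. To avoid that factor, I would use $\inf_{\bO}\|\widehat{\bB}\bO-\bB\|_\mathrm{F}\le\sqrt{2r}\le 2\sqrt2\,\|\bE\|_\mathrm{F}\sqrt{r}/\sigma_r(\bM)\cdot\sqrt{?}$. This is the fiddly step. I would fall back to accepting that $r$ is fixed and absorb it into $C_0$, or equivalently note that the bound in this case is vacuous up to constants.

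Both cases combine with the standard Procrustes relation $\inf_{\bO\in\cV_r}\|\widehat{\bB}\bO-\bB\|_\mathrm{F}\le\sqrt2\|\sin\Theta\|_\mathrm{F}$. Inserting $\sigma_r(\bM)\ge C\sqrt{q/r}$ then yields \eqref{eq:bb} with $C_0=2\sqrt2/C$ in the main case.

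For the specializations, multiply the bound of Theorem~\ref{thm:subgcomp} by $C_0\sqrt{r/q}$:
\begin{itemize}
\item $\sqrt q\,\epsilon(n_u)$ becomes $\sqrt r\,\epsilon(n_u)$;
\item $\sqrt{pq\ln(pq/\delta)/n_\ell}$ becomes $\sqrt{pr\ln(pq/\delta)/n_\ell}$;
\item the third term becomes $\sqrt{pr}\ln(pq/\delta)/n_\ell$.
\end{itemize}
This holds on the same event of probability $1-\delta$, with $C_{\mathrm s}=C_0CK^2$. This constant is absolute once $K$ is treated as fixed.

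For the coupled TRE, $p,q,r$ are fixed. Multiplying the bound of Theorem~\ref{thm:kernel} by $C_0\sqrt{r/q}$ gives the stated rate, with $C_{\mathrm c,\delta}=C_0\sqrt{r/q}\,C_\delta$.

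The main obstacle is the regime where the perturbation is not small relative to the gap $\sigma_r(\bM)$, which is the second case above. My first attempt would be the case split just described. Failing a clean treatment, I would state the bound under $\|\bE\|_\mathrm{F}\le\sigma_r(\bM)/2$, which holds with high probability for large $n$ in both applications.
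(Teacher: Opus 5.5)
Your specializations (multiplying the bounds of Theorems~\ref{thm:subgcomp} and~\ref{thm:kernel} by $C_0\sqrt{r/q}$, with $C_{\mathrm s}=C_0CK^2$ and $C_{\mathrm c,\delta}=C_0\sqrt{r/q}\,C_\delta$) match the paper. The general bound \eqref{eq:bb}, however, is not proved as stated.

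The Wedin bound you use has denominator $\sigma_r(\bM)-\sigma_{r+1}(\widehat{\bM})$. This involves the perturbed spectrum, so it only works when $\|\widehat{\bM}-\bM\|_\mathrm{F}$ is small relative to $\sigma_r(\bM)$. In the other case, the trivial bound $\inf_{\bO}\|\widehat{\bB}\bO-\bB\|_\mathrm{F}\le\sqrt{2r}$ together with $\|\widehat{\bM}-\bM\|_\mathrm{F}>\sigma_r(\bM)/2$ only gives $C_0=2\sqrt{2r}/C$. (The extra factor is $\sqrt r$, not $r$.) No choice of threshold fixes this. When $\sigma_r(\bM)\le\|\widehat{\bM}-\bM\|_\mathrm{F}\ll\sqrt r\,\sigma_r(\bM)$, Wedin's denominator can vanish, while the trivial bound overshoots the target $C_0\|\widehat{\bM}-\bM\|_\mathrm{F}/\sigma_r(\bM)$ by a factor of order $\sqrt r$.

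Both of your fallbacks change the statement:
\begin{itemize}
\item Absorbing $\sqrt r$ into $C_0$ makes the constant non-absolute and worsens the $r$-dependence of the data-splitting bound; for example, $\sqrt r\,\epsilon(n_u)$ becomes $r\,\epsilon(n_u)$.
\item Restricting to the event $\|\widehat{\bM}-\bM\|_\mathrm{F}\le\sigma_r(\bM)/2$ turns a deterministic inequality into a conditional one. The data-splitting bound then no longer holds for all $n_\ell,n_u$.
\end{itemize}

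The missing idea is a perturbation bound whose denominator involves only the population gap. The paper gets this by applying the symmetric Yu--Wang--Samworth variant (their Theorem~2) to the dilations $\begin{pmatrix}0&\bM\\ \bM^\top&0\end{pmatrix}$ and $\begin{pmatrix}0&\widehat{\bM}\\ \widehat{\bM}^\top&0\end{pmatrix}$. Their top-$r$ eigenvectors are $2^{-1/2}(\bU^\top,\bV^\top)^\top$ and $2^{-1/2}(\widehat{\bU}^\top,\widehat{\bV}^\top)^\top$, and the population eigengap is $\min\{\infty,\sigma_r(\bM)-0\}=\sigma_r(\bM)$. This yields $\inf_{\bO}\|\widehat{\bB}\bO-\bB\|_\mathrm{F}\le 4\sqrt2\,\|\widehat{\bM}-\bM\|_\mathrm{F}/\sigma_r(\bM)$. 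It needs no smallness condition and, unlike the rectangular variant you first quoted, carries no $\sigma_1(\bM)/\sigma_r(\bM)$ factor.

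You can also repair your own route directly. Take the compact SVD $\bM=\bU\bSigma\bV^\top$ and a full SVD of $\widehat{\bM}$ with trailing blocks $\widehat{\bU}_\perp,\widehat{\bSigma}_\perp,\widehat{\bV}_\perp$. Then $\widehat{\bU}_\perp^\top\bU\bSigma=\widehat{\bSigma}_\perp\widehat{\bV}_\perp^\top\bV-\widehat{\bU}_\perp^\top(\widehat{\bM}-\bM)\bV$. Since $\sigma_i(\bM)=0$ for $i>r$, Mirsky's inequality gives $\|\widehat{\bSigma}_\perp\|_\mathrm{F}^2=\sum_{i>r}\sigma_i(\widehat{\bM})^2\le\|\widehat{\bM}-\bM\|_\mathrm{F}^2$. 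Hence $\|\sin\Theta\|_\mathrm{F}=\|\widehat{\bU}_\perp^\top\bU\|_\mathrm{F}\le 2\|\widehat{\bM}-\bM\|_\mathrm{F}/\sigma_r(\bM)$ with no condition on the perturbation size. Your Procrustes step and $\sigma_r(\bM)\ge C\sqrt{q/r}$ then give \eqref{eq:bb} with $C_0=2\sqrt2/C$.
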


\begin{remark}\label{rem:ctre}
For the coupled TRE, the finite-sample behavior deserves a closer look. Although the asymptotic rate in Theorem~\ref{thm:kernel} is slower than the parametric $n^{-1/2}$ rate due to the nonparametric nature of the score estimator, the coupled TRE can still be competitive in small samples. Several features contribute to this. The Tikhonov regularizer $\lambda \|f\|_{\mathcal{H}}^2$ provides intrinsic complexity control, particularly beneficial when $n_\ell$ is small. The RKHS framework adapts to the unknown smoothness of $\bs(\cdot)$ via the source condition, whereas parametric approaches are confined to a fixed model class. Moreover, the curl-free kernel encodes the gradient-field structure of the score function, supplying a strong geometric prior that reduces the effective dimensionality of the estimation problem.

These structural advantages propagate directly to the cross-moment estimator, which is a weighted average of score evaluations at the labeled points. Consequently, the benefits of score estimation, including the curl-free prior, RKHS regularization, and coupling, are inherited by $\widehat{\bM}^{\cK_{\mathrm{CI}},\lambda}_{\ell,u}$, especially when $n_\ell$ is small. For methods that directly estimate the conditional mean or distribution, such strong structural priors are not generally available.
\end{remark}

The preceding theoretical results assume that the dimension $r$ of the CS is known. In practice, $r$ must be estimated from the data. The following theorem establishes that, under suitable conditions, the hard-thresholding procedure based on the singular values of the cross-moment estimator recovers the true rank with high probability.

\begin{theorem}[Dimension Selection]\label{the:rank-s}
Suppose $\sigma_r(\bM)$ is bounded below by an absolute constant $C > 0$, and let $\widehat{\bM}$ be a generic estimator of $\bM$ such that for any $\varepsilon > 0$ and any $\delta \in (0,1)$, there exist $N_\ell, N_u$ such that for all $n_\ell \ge N_\ell$ and $n_u \ge N_u$,
\begin{align}\label{eq:dr}
\PP\left(\|\widehat{\bM} - \bM\|_\mathrm{F} \le \varepsilon\right) \ge 1 - \delta.
\end{align}
Then, fix the threshold level $\tau = C/2$, for any $\delta \in (0,1)$, for large enough $n_{\ell}$ and $n_u$, we obtain
\begin{align*}
\PP(\widehat{r}_\tau = r) \ge 1 - \delta,
\end{align*}
where $\widehat{r}_\tau$ is the number of singular values of $\widehat{\bM}$ exceeding $\tau$ defined in Algorithm~\ref{alg:rs}.
\end{theorem}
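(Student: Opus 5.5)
The argument is a direct application of Weyl's inequality for singular values combined with the consistency hypothesis \eqref{eq:dr}. Since $\bM = \bB\bM_1$ with $\bB\in\mathbb{R}^{p\times r}$ (Lemma~\ref{lem:fs}), the rank of $\bM$ is at most $r$. Hence $\sigma_k(\bM)=0$ for all $k>r$. By assumption, $\sigma_k(\bM)\ge\sigma_r(\bM)\ge C$ for all $k\le r$. So the population singular values exhibit a gap of size at least $C$ between index $r$ and index $r+1$, and the threshold $\tau=C/2$ sits in the middle of this gap.

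First, I invoke Weyl's inequality for singular values, $|\sigma_k(\widehat{\bM})-\sigma_k(\bM)|\le\|\widehat{\bM}-\bM\|_{2}\le\|\widehat{\bM}-\bM\|_\mathrm{F}$, which holds for every $k\le\min(p,q)$. Second, I fix $\delta\in(0,1)$ and apply \eqref{eq:dr} with some $\varepsilon<C/2$, say $\varepsilon=C/4$. This gives $N_\ell,N_u$ such that, for $n_\ell\ge N_\ell$ and $n_u\ge N_u$, the event $E=\{\|\widehat{\bM}-\bM\|_\mathrm{F}\le C/4\}$ has probability at least $1-\delta$.

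Third, on $E$ I verify the two inequalities that count the thresholded singular values. For $k\le r$,
\[
\sigma_k(\widehat{\bM})\ge \sigma_k(\bM)-C/4\ge 3C/4>C/2 .
\]
For $k>r$,
\[
\sigma_k(\widehat{\bM})\le 0+C/4<C/2 .
\]
Therefore exactly $r$ singular values exceed $\tau$, which means $\widehat r_\tau=r$ on $E$. This yields $\PP(\widehat r_\tau=r)\ge\PP(E)\ge1-\delta$.

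There is no real obstacle in this argument. The points needing care are the following:
\begin{enumerate}
\item The population rank is exactly $r$: it is at most $r$ by the factorization of $\bM$, and at least $r$ by $\sigma_r(\bM)\ge C>0$.
\item The strict inequalities must work with the strict indicator $\mathbf 1[\sigma_i>\tau]$ used in Algorithm~\ref{alg:rs}. This is why I take $\varepsilon$ strictly less than $C/2$ rather than equal to it.
\item The spectral-norm version of Weyl's inequality is dominated by the Frobenius norm, so hypothesis \eqref{eq:dr}, stated in Frobenius norm, suffices.
\end{enumerate}
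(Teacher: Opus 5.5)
Your proof is correct and matches the paper's argument: a Weyl-type perturbation bound for singular values, dominated by the Frobenius error, applied on the high-probability event from \eqref{eq:dr} with radius below $C/2$, so that $\sigma_r(\widehat{\bM}) > C/2 > \sigma_{r+1}(\widehat{\bM})$. You also settle two points the paper leaves implicit: the explicit choice $\varepsilon = C/4$ (the paper uses a strict $< C/2$ event without comment), and the fact that $\sigma_{r+1}(\bM)=0$, which follows from $\bM=\bB\bM_1$.
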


The convergence condition in Theorem~\ref{the:rank-s} holds for both the data splitting configuration in Theorem~\ref{thm:subgcomp} and the coupled TRE configuration in Theorem~\ref{thm:kernel} in the regime $1/2 < \beta \le 1$. Therefore, Theorem~\ref{the:rank-s} applies to both configurations.

\begin{remark}\label{rm:rs}
Theorem~\ref{the:rank-s} shows that hard thresholding recovers the true rank with high probability when the oracle threshold is used. In practice, however, such a threshold is unavailable. Moreover, for the proposed TRE and DNN estimators, the estimation errors relative to $\bM$ are correlated across entries, unlike the i.i.d. settings in which standard threshold selection methods are derived. This correlation invalidates naive entry-wise thresholding approaches that assume independence.

The practical procedure in Algorithm~\ref{alg:ars} circumvents this difficulty by shifting the problem from selecting a noise-level threshold to specifying parameters that are more amenable to domain knowledge: the variance proportion in the energy method and the significance level in the permutation test. These parameters are considerably easier to interpret and specify than a cut-off value on the singular value spectrum. The choice between the two strategies is made automatically based on the sample skewness and excess kurtosis of the covariates, and the overall procedure is evaluated empirically in Sections~\ref{sec:ss-rs} and~\ref{sec:rda}.
\end{remark}

\section{Simulation Study}\label{sec:ss}

The simulation study consists of two parts. In the first part, we compare the performance of the proposed methods in recovering the CS against several competing approaches. In the second part, we demonstrate the effectiveness of our proposed rank selection procedure.

We compare the proposed method with the multi-response extensions of SIR, csOPG, and KSIR, as well as a neural network baseline. We implement our own version of SIR following the standard procedure in the R package dr, as the original package often yields invalid estimates, such as NaN, for small sample sizes. For the neural network baseline, we use a single-hidden-layer multilayer perceptron (MLP). After training the network, we perform SVD on its first-layer weight matrix $\bW_1$ and take its first $r$ right singular vectors as the estimated basis. The consistency of this approach for single-response SDR was established in \citet{xu2025neural}. For csOPG and KSIR, we use the implementations provided in the MAVE R package.

For our proposed methods, we consider the TRE and DNN score estimators introduced in Section~\ref{sec:score-est}.
For each estimator, we consider the three configurations $\mathrm{S}^{\mathrm{split}}$, $\mathrm{S}^{\mathrm{sup}}$, and $\mathrm{S}^{\mathrm{full}}$ defined in Section~\ref{sec:se}.

In both parts, we fix the  dimension of the covariate at $p = 100$, that of the response at $q = 10$. The number of unlabeled observations is fixed at $n_u = 10{,}000$ for the NN estimator and $n_u = 3{,}500$ for the TRE We consider four distributions of the covariate: 
\begin{itemize}
    \item[] Case (a): standard isotropic Gaussian,
    \item[] Case (b): zero-mean multivariate normal with AR(1) covariance $\bSigma_{i,j} = \rho^{|i-j|}$ where $\rho = 0.5$, 
    \item[] Case (c): independent log-normal with $\ln \bx_j \sim \mathcal{N}(0, 1.5^2)$,
    \item[] Case (d): componentwise exponential transformation of the aforementioned AR(1) normal,
\end{itemize}
denoted by Isotropic Gaussian, AR(1) Gaussian, Log-normal, and Exp-trans AR(1) in the following, respectively. The error $\bepsilon$ is drawn from the standard isotropic Gaussian. Responses are generated according to model~\eqref{model:1}, with the specific link functions detailed in each part below. We consider a low signal-to-noise ratio setting, where the noise level is comparable to or larger than the signal, as is common in practice. All results are reported as the sample mean, with standard deviations in parentheses, over 100 independent repetitions. Throughout the simulation tables, the minimum mean in each column is boldfaced.
\subsection{Performance of the Proposed Method in Recovering the CS}\label{sec:ss-space}
In this part, the true dimension of the CS  is fixed at $r = 3$. The true basis matrix $\bB$ is obtained as the orthogonal factor from the QR decomposition of a $p \times r$ matrix with i.i.d. standard normal entries. To assess the impact of the number of labeled observations, we vary $n_{\ell} \in \{50, 100, 300, 500\}$ while keeping the unlabeled data sizes fixed at the values stated above. The responses are generated from model~\eqref{model:1} according to the following link functions:
\begin{subequations}
\begin{align*}
f_1(\bx) &= \bx_1 + 0.2 \cdot \text{expit}(\bx_2),
\end{align*}
\begin{align*}
f_2(\bx) &= \frac{\bx_2}{0.5 + (\bx_3 + 1.5)^2}, 
\end{align*}
\begin{align*}
f_3(\bx) &= \tanh(0.8 \cdot \bx_3) + \text{erf}(1.2 \cdot \bx_1),
\end{align*}
\begin{align*}
f_4(\bx) &= \tanh(0.8 \cdot \bx_1) + \text{expit}(\bx_3) + 0.8 \cdot \text{erf}(1.2 \cdot \bx_2),
\end{align*}
and for $k \geq 5$,
\begin{equation*}
f_k(\bx) = \sin(\bx_i), 
\end{equation*}
where $i \equiv (k\mod r)$.
\end{subequations}

The orthogonal Procrustes distance is used to evaluate the estimation accuracy of the CS. The results are summarized in Table~\ref{tab:sim}. 

The baseline methods SIR, csOPG, KSIR, and MLP generally perform poorly, with SIR being the best among them in most settings. In contrast, our proposed methods achieve substantially lower distances across all configurations.

Among the TRE configurations, the core methods $\mathrm{S}_{\mathrm{TRE}}^{\mathrm{sup}}$ and $\mathrm{S}_{\mathrm{TRE}}^{\mathrm{full}}$ consistently achieve the lowest average distances across all settings, confirming their robustness. Under simple distributions such as Isotropic Gaussian and AR(1) Gaussian, $\mathrm{S}_{\mathrm{TRE}}^{\mathrm{sup}}$ generally outperforms $\mathrm{S}_{\mathrm{TRE}}^{\mathrm{full}}$. Under more challenging heavy-tailed distributions, including Log-normal and Exp-trans AR(1), $\mathrm{S}_{\mathrm{TRE}}^{\mathrm{full}}$ offers a modest improvement over $\mathrm{S}_{\mathrm{TRE}}^{\mathrm{sup}}$ when labeled samples are scarce, but this advantage diminishes as the labeled sample size grows; with a relatively large labeled sample, $\mathrm{S}_{\mathrm{TRE}}^{\mathrm{sup}}$ performs comparably to or slightly better than $\mathrm{S}_{\mathrm{TRE}}^{\mathrm{full}}$. The contrast configuration $\mathrm{S}_{\mathrm{TRE}}^{\mathrm{split}}$ behaves similarly to $\mathrm{S}_{\mathrm{TRE}}^{\mathrm{full}}$ but with a consistent degradation that does not narrow with more labeled data, reflecting the structural loss induced by data splitting.

For the NN configurations, $\mathrm{S}_{\mathrm{NN}}^{\mathrm{sup}}$ performs poorly, often worse than the baseline methods, while $\mathrm{S}_{\mathrm{NN}}^{\mathrm{split}}$ and $\mathrm{S}_{\mathrm{NN}}^{\mathrm{full}}$ yield substantially smaller distances. $\mathrm{S}_{\mathrm{NN}}^{\mathrm{split}}$ is generally comparable to $\mathrm{S}_{\mathrm{NN}}^{\mathrm{full}}$, with only a slight degradation for small sample sizes that diminishes as labeled samples increase. Given the complexity of the coupled DNN configuration and the lack of theoretical guarantees, our focus on the split configuration as the core DNN method is justified.

The observed differences between the $\mathrm{S}^{\mathrm{split}}$, $\mathrm{S}^{\mathrm{full}}$, and $\mathrm{S}^{\mathrm{sup}}$ configurations across the two score estimators reflect the distinct inductive biases of the TRE and DNN.

For the TRE, the curl-free kernel encodes a strong structural prior: its RKHS consists of gradient fields of scalar functions. This prior enables accurate score estimation at labeled points even with limited labeled samples, explaining the strong performance of $\mathrm{S}_{\mathrm{TRE}}^{\mathrm{sup}}$ in small-sample regimes under simple covariate distributions such as Gaussian. Adding unlabeled data introduces a trade-off: it helps capture the global structure of the covariate distribution but may dilute the local accuracy at labeled points. For simple distributions where local information is already sufficient, $\mathrm{S}_{\mathrm{TRE}}^{\mathrm{sup}}$ therefore outperforms both $\mathrm{S}_{\mathrm{TRE}}^{\mathrm{split}}$ and $\mathrm{S}_{\mathrm{TRE}}^{\mathrm{full}}$. For more complex heavy-tailed distributions, local information alone is insufficient, and the global information from unlabeled data becomes beneficial, as seen in cases (c) and (d). However, $\mathrm{S}_{\mathrm{TRE}}^{\mathrm{split}}$ consistently underperforms $\mathrm{S}_{\mathrm{TRE}}^{\mathrm{full}}$ because it discards labeled data during score estimation; this gap persists regardless of the labeled sample size, since data splitting discards the information contained in labeled samples for score estimation.

For the DNN estimator, the situation is different. Lacking an explicit gradient prior, the DNN requires substantially more samples to characterize the score function. With limited labeled data alone, $\mathrm{S}_{\mathrm{NN}}^{\mathrm{sup}}$ performs poorly, often worse than baseline methods. Unlabeled data are therefore essential: both $\mathrm{S}_{\mathrm{NN}}^{\mathrm{split}}$ and $\mathrm{S}_{\mathrm{NN}}^{\mathrm{full}}$ yield substantial improvements.

The key distinction between score-based methods and the baseline approaches lies in the source of nonlinearity they exploit. Score-based methods extract nonlinear information from the covariate distribution itself through $\bs(\cdot)$; this information can be obtained from unlabeled data or from structural priors, such as the curl-free gradient-field prior encoded in the TRE kernel, and does not require labeled responses. Baseline methods such as SIR and MAVE, in contrast, extract nonlinear information from the relationship between $\bx$ and $\by$, specifically through the inverse regression curve or the gradient of the link function. This information can only be learned from labeled data and thus requires a sufficiently large labeled sample to be accurate. This fundamental difference explains why score-based methods can perform well even when labeled samples are limited. When the covariate distribution is simple, modeling the score is relatively easy and can be done accurately with limited data, as demonstrated by the TRE. When the covariate distribution is complex, unlabeled data become valuable for capturing its structure, which is reflected in the improvements of $\mathrm{S}_{\mathrm{TRE}}^{\mathrm{full}}$ and $\mathrm{S}_{\mathrm{NN}}^{\mathrm{full}}$ over their labeled-only counterparts. In all cases, the proposed score-based methods outperform the baseline approaches.

Based on these observations, we offer the following practical recommendations for problems with limited labeled data. For TRE-based estimation, $\mathrm{S}_{\mathrm{TRE}}^{\mathrm{sup}}$ is the default choice in most cases due to its robust performance. The only exception is when the covariate distribution is complex and labeled samples are extremely scarce; in such cases, $\mathrm{S}_{\mathrm{TRE}}^{\mathrm{full}}$ offers a modest improvement and is preferred. For DNN-based estimation, $\mathrm{S}_{\mathrm{NN}}^{\mathrm{split}}$ is the recommended default, as it performs comparably to $\mathrm{S}_{\mathrm{NN}}^{\mathrm{full}}$ while enjoying theoretical guarantees. In scenarios where theoretical tractability is less of a concern and the sample size is very small, $\mathrm{S}_{\mathrm{NN}}^{\mathrm{full}}$ may be used as an alternative.

\begin{table}[htbp]
\caption{Estimation accuracy of the central subspace under different covariate distributions and labeled sample sizes.}
\label{tab:sim}
\centering
\scriptsize
\setlength{\tabcolsep}{4.5pt}
\renewcommand{\arraystretch}{0.8}

\subfloat[Isotropic Gaussian]{
\begin{tabular}{lcccc}
\toprule
Method & $n=50$ & $n=100$ & $n=300$ & $n=500$ \\
\midrule
\multicolumn{5}{c}{\textit{External baselines}} \\
SIR & $2.1563 (0.0781)$ & $2.2236 (0.0577)$ & $1.2035 (0.0816)$ & $0.8299 (0.0472)$ \\
csOPG & $2.2048 (0.0672)$ & $2.0762 (0.0763)$ & $1.4928 (0.0857)$ & $1.0306 (0.0571)$ \\
KSIR & $2.1916 (0.0502)$ & $2.0896 (0.0761)$ & $1.6363 (0.1390)$ & $1.2062 (0.1846)$ \\
MLP & $2.0356 (0.0859)$ & $1.8711 (0.0953)$ & $1.8423 (0.1008)$ & $1.9203 (0.0933)$ \\
\midrule
\multicolumn{5}{c}{\textit{Framework: contrast configurations}} \\
$\mathrm{S}_{\mathrm{NN}}^{\mathrm{sup}}$  & $2.2618 (0.0449)$ & $2.2317 (0.0557)$ & $2.1115 (0.0593)$ & $1.6972 (0.0637)$ \\
$\mathrm{S}_{\mathrm{NN}}^{\mathrm{full}}$ & $1.8956 (0.0698)$ & $1.6520 (0.0679)$ & $1.2246 (0.0515)$ & $1.0444 (0.0426)$ \\
$\mathrm{S}_{\mathrm{TRE}}^{\mathrm{split}}$ & $1.8295 (0.0719)$ & $1.5762 (0.0606)$ & $1.1345 (0.0513)$ & $0.9469 (0.0408)$ \\
\midrule
\multicolumn{5}{c}{\textit{Framework: core configurations}} \\
$\mathrm{S}_{\mathrm{NN}}^{\mathrm{split}}$ & $1.8960 (0.0774)$ & $1.6581 (0.0666)$ & $1.2454 (0.0539)$ & $1.0441 (0.0416)$ \\
$\mathrm{S}_{\mathrm{TRE}}^{\mathrm{sup}}$  & $\textbf{1.8056 (0.0730)}$ & $\textbf{1.5169 (0.0576)}$ & $\textbf{1.0463 (0.0484)}$ & $\textbf{0.7312 (0.0317)}$ \\
$\mathrm{S}_{\mathrm{TRE}}^{\mathrm{full}}$ & $1.8281 (0.0719)$ & $1.5731 (0.0605)$ & $1.1255 (0.0510)$ & $0.9073 (0.0392)$ \\
\bottomrule
\end{tabular}
}
\hfill
\subfloat[AR(1) Gaussian]{
\begin{tabular}{lcccc}
\toprule
Method & $n=50$ & $n=100$ & $n=300$ & $n=500$ \\
\midrule
\multicolumn{5}{c}{\textit{External baselines}} \\
SIR & $2.1824 (0.0682)$ & $2.2387 (0.0603)$ & $1.4093 (0.1039)$ & $1.0230 (0.0645)$ \\
csOPG & $2.2040 (0.0617)$ & $2.1143 (0.0676)$ & $1.6574 (0.0956)$ & $1.2624 (0.0690)$ \\
KSIR & $2.2025 (0.0579)$ & $2.0968 (0.0751)$ & $1.7812 (0.1308)$ & $1.4248 (0.1764)$ \\
MLP & $2.0558 (0.0699)$ & $1.9247 (0.0862)$ & $1.8693 (0.1108)$ & $1.9118 (0.0972)$ \\
\midrule
\multicolumn{5}{c}{\textit{Framework: contrast configurations}} \\
$\mathrm{S}_{\mathrm{NN}}^{\mathrm{sup}}$  & $2.2619 (0.0489)$ & $2.2422 (0.0611)$ & $2.1418 (0.0616)$ & $1.7870 (0.0694)$ \\
$\mathrm{S}_{\mathrm{NN}}^{\mathrm{full}}$ & $2.0143 (0.0790)$ & $1.8253 (0.0818)$ & $1.4329 (0.0655)$ & $1.2504 (0.0650)$ \\
$\mathrm{S}_{\mathrm{TRE}}^{\mathrm{split}}$ & $1.8645 (0.0775)$ & $1.6427 (0.0560)$ & $1.3138 (0.0452)$ & $1.0907 (0.0494)$ \\
\midrule
\multicolumn{5}{c}{\textit{Framework: core configurations}} \\
$\mathrm{S}_{\mathrm{NN}}^{\mathrm{split}}$ & $2.0200 (0.0727)$ & $1.8416 (0.0812)$ & $1.4564 (0.0638)$ & $1.2608 (0.0627)$ \\
$\mathrm{S}_{\mathrm{TRE}}^{\mathrm{sup}}$  & $\textbf{1.8339 (0.0787)}$ & $\textbf{1.5856 (0.0614)}$ & $\textbf{1.2509 (0.0425)}$ & $\textbf{0.8964 (0.0421)}$ \\
$\mathrm{S}_{\mathrm{TRE}}^{\mathrm{full}}$ & $1.8628 (0.0775)$ & $1.6394 (0.0560)$ & $1.3061 (0.0449)$ & $1.0510 (0.0485)$ \\
\bottomrule
\end{tabular}
}

\vspace{0.2cm}

\subfloat[Log-normal]{
\begin{tabular}{lcccc}
\toprule
Method & $n=50$ & $n=100$ & $n=300$ & $n=500$ \\
\midrule
\multicolumn{5}{c}{\textit{External baselines}} \\
SIR & $2.2148 (0.0516)$ & $2.2607 (0.0564)$ & $1.8753 (0.0694)$ & $1.7199 (0.1055)$ \\
csOPG & $2.2302 (0.0483)$ & $2.2130 (0.0468)$ & $2.0672 (0.0675)$ & $1.9047 (0.0796)$ \\
KSIR & $2.2328 (0.0477)$ & $2.2276 (0.0505)$ & $2.1610 (0.0685)$ & $1.9765 (0.0549)$ \\
MLP & $2.1031 (0.0580)$ & $1.9755 (0.0691)$ & $1.8717 (0.0640)$ & $1.8597 (0.0720)$ \\
\midrule
\multicolumn{5}{c}{\textit{Framework: contrast configurations}} \\
$\mathrm{S}_{\mathrm{NN}}^{\mathrm{sup}}$  & $2.2635 (0.0480)$ & $2.2688 (0.0465)$ & $2.2660 (0.0515)$ & $2.2666 (0.0469)$ \\
$\mathrm{S}_{\mathrm{NN}}^{\mathrm{full}}$ & $2.1647 (0.0609)$ & $2.0849 (0.0749)$ & $1.9306 (0.0801)$ & $2.0633 (0.0686)$ \\
$\mathrm{S}_{\mathrm{TRE}}^{\mathrm{split}}$ & $1.9893 (0.0698)$ & $1.8167 (0.0963)$ & $1.4472 (0.0974)$ & $1.2986 (0.1116)$ \\
\midrule
\multicolumn{5}{c}{\textit{Framework: core configurations}} \\
$\mathrm{S}_{\mathrm{NN}}^{\mathrm{split}}$ & $2.1740 (0.0629)$ & $2.1236 (0.0689)$ & $1.9870 (0.0827)$ & $1.9214 (0.0823)$ \\
$\mathrm{S}_{\mathrm{TRE}}^{\mathrm{sup}}$  & $2.0930 (0.0671)$ & $1.8867 (0.0835)$ & $\textbf{1.4271 (0.1089)}$ & $\textbf{1.1639 (0.1325)}$ \\
$\mathrm{S}_{\mathrm{TRE}}^{\mathrm{full}}$ & $\textbf{1.9877 (0.0693)}$ & $\textbf{1.8126 (0.0973)}$ & $1.4323 (0.0988)$ & $1.2642 (0.1154)$ \\
\bottomrule
\end{tabular}
}
\hfill
\subfloat[Exp-trans AR(1)]{
\begin{tabular}{lcccc}
\toprule
Method & $n=50$ & $n=100$ & $n=300$ & $n=500$ \\
\midrule
\multicolumn{5}{c}{\textit{External baselines}} \\
SIR & $2.2214 (0.0556)$ & $2.2560 (0.0445)$ & $1.9023 (0.0729)$ & $1.7718 (0.1091)$ \\
csOPG & $2.2362 (0.0527)$ & $2.2297 (0.0564)$ & $2.0905 (0.0799)$ & $1.9316 (0.0876)$ \\
KSIR & $2.2245 (0.0598)$ & $2.2318 (0.0573)$ & $2.1837 (0.0646)$ & $2.0321 (0.0694)$ \\
MLP & $2.1101 (0.0624)$ & $1.9873 (0.0723)$ & $1.8635 (0.0700)$ & $1.8458 (0.0713)$ \\
\midrule
\multicolumn{5}{c}{\textit{Framework: contrast configurations}} \\
$\mathrm{S}_{\mathrm{NN}}^{\mathrm{sup}}$  & $2.2675 (0.0510)$ & $2.2648 (0.0474)$ & $2.2606 (0.0467)$ & $2.2621 (0.0473)$ \\
$\mathrm{S}_{\mathrm{NN}}^{\mathrm{full}}$ & $2.1466 (0.0604)$ & $2.0610 (0.0864)$ & $1.8862 (0.0759)$ & $2.0189 (0.0765)$ \\
$\mathrm{S}_{\mathrm{TRE}}^{\mathrm{split}}$ & $2.0000 (0.0678)$ & $1.8329 (0.0912)$ & $1.4793 (0.0905)$ & $1.3389 (0.1061)$ \\
\midrule
\multicolumn{5}{c}{\textit{Framework: core configurations}} \\
$\mathrm{S}_{\mathrm{NN}}^{\mathrm{split}}$ & $2.1727 (0.0669)$ & $2.0946 (0.0713)$ & $1.9547 (0.0741)$ & $1.8892 (0.0788)$ \\
$\mathrm{S}_{\mathrm{TRE}}^{\mathrm{sup}}$  & $2.0850 (0.0726)$ & $1.8959 (0.0933)$ & $1.4803 (0.1021)$ & $\textbf{1.2114 (0.1367)}$ \\
$\mathrm{S}_{\mathrm{TRE}}^{\mathrm{full}}$ & $\textbf{1.9983 (0.0679)}$ & $\textbf{1.8288 (0.0912)}$ & $\textbf{1.4667 (0.0925)}$ & $1.3040 (0.1090)$ \\
\bottomrule
\end{tabular}
}
\end{table}

\subsection{Performance of the Rank-Selection Procedure}\label{sec:ss-rs}

We use the same four covariate distributions as in the first part. Rank selection is evaluated for $r \in \{1,3,5\}$. For $r=1$, we construct the link functions separately. For $r=3$, we use the same link functions as in the first part. For $r=5$, we add two additional directions and modify the link functions accordingly to make them detectable. The explicit forms for all three cases are given below. 
\begin{subequations}
\begin{align*}
f_1(\bx) =& \bx_1 + 0.2\,\text{expit}\left(\bx_{i_1}\right), \label{eq:f1}\\
f_2(\bx) =& 
\begin{cases}
\dfrac{\bx_2}{0.5 + (\bx_1 + 1.5)^2}, & r \ge 2,\\
\bx_1^2, & r = 1;
\end{cases} 
\end{align*}
if $r = 3$,
\begin{align*}
f_3(\bx) = &\tanh(0.8\bx_3) + 0.8\,\text{erf}(1.2\bx_1),\\
f_4(\bx) = & \tanh(0.8\bx_1) + \text{expit}(\bx_3) + 0.8\,\text{erf}(1.2\bx_2);\\
\end{align*}
or if $r = 5$, 
\begin{align*}
 f_3(\bx) = & 1.5\tanh(\bx_3) + 0.8\,\text{erf}(1.2\bx_1),\\
 f_4(\bx) = & 2.0\tanh(\bx_4) + 0.8\,\text{expit}(\bx_1),\\
 f_5(\bx) = & 2.0\sin(\bx_5) + 1.5\,\text{erf}(1.5\bx_5) + 0.5\sin(\bx_2);
\end{align*}
and for all remaining $k$:
\begin{equation*}
f_k(\bx) = \sin\left(\bx_{i_k}\right),
\end{equation*}
where $i_k \equiv \{(k-1) \bmod r\} + 1$.
\end{subequations}

For csOPG and KSIR, rank selection is performed via cross-validation using the \texttt{mave.dim()} function from the MAVE R package. For SIR, we apply a rank selection procedure analogous to Algorithm~\ref{alg:ars}, adapted to the slice covariance matrix; see Algorithm~\ref{alg:sir-rank} for details. Using a common rank selection criterion across methods allows us to isolate the performance of the matrix estimators themselves. For MLP, the permutation-based approach is computationally prohibitive; we instead use a conservative energy-based criterion, with full details deferred to~\ref{sec:app-ss&rda}. The maximum candidate rank is set to 7 for all methods, as the true rank belongs to $\{1,3,5\}$. The results are summarized in Table~\ref{tb:rs}.

The proposed methods consistently outperform traditional baselines in most settings.

Under light-tailed settings, including Isotropic Gaussian and AR(1) Gaussian, the proposed methods remain stable across all ranks. Traditional methods such as csOPG and KSIR are competitive at $r=1$ and $r=3$, but degrade markedly at $r=5$.

For heavy-tailed settings, including Log-normal and Exp-trans AR(1), the patterns are less systematic than in the light-tailed case. Nevertheless, traditional methods suffer severely: SIR consistently selects the maximum allowable rank, MLP nearly always fails, csOPG and KSIR also perform worse than in the light-tailed case. In contrast, the proposed methods remain robust overall.

Within the proposed methods, TRE generally outperforms DNN in most settings, achieving lower errors and smaller standard deviations due to the stability provided by the strong prior constraints from the RKHS framework and the curl-free kernel, except under heavy-tailed distributions with $r=3$, where DNN configurations achieve lower errors. The benefit of unlabeled data is also evident: the split and full-data configurations achieve the lowest errors and greatest stability, while $\mathrm{S}_{\mathrm{NN}}^{\mathrm{sup}}$ suffers from failure rates exceeding 80\% in certain Gaussian settings.

\begin{table}[htpb]
\centering
\tiny
\setlength{\tabcolsep}{2pt}
\caption{Rank selection results: absolute error $|\widehat{r} - r|$ for each method across four data distributions and three true ranks.}
\label{tb:rs}
\begin{tabular}{lcccccccccccc}
\toprule
\multirow{2}{*}{Method} & \multicolumn{12}{c}{Data generating distribution} \\
\cmidrule(lr){2-13}
 & \multicolumn{3}{c}{Isotropic Gaussian} & \multicolumn{3}{c}{AR(1) Gaussian} & \multicolumn{3}{c}{Log-normal} & \multicolumn{3}{c}{Exp-trans AR(1)} \\
\cmidrule(lr){2-4} \cmidrule(lr){5-7} \cmidrule(lr){8-10} \cmidrule(lr){11-13}
 & $r=1$ & $r=3$ & $r=5$ & $r=1$ & $r=3$ & $r=5$ & $r=1$ & $r=3$ & $r=5$ & $r=1$ & $r=3$ & $r=5$ \\
\midrule
\multicolumn{13}{c}{\textit{External baselines}} \\
SIR & $0.13 (0.44)$ & $2.91 (2.45)$ & $4.43 (1.08)$ & $0.22 (0.67)$ & $2.96 (2.56)$ & $4.27 (1.34)$ & $9.00 (0.00)$ & $7.00 (0.00)$ & $5.00 (0.00)$ & $9.00 (0.00)$ & $7.00 (0.00)$ & $5.00 (0.00)$ \\
csOPG & $0.28 (0.55)$ & $\textbf{0.26 (0.44)}$ & $0.59 (0.65)$ & $0.16 (0.42)$ & $\textbf{0.29 (0.48)}$ & $0.85 (0.61)$ & $3.13 (1.76)$ & $1.88 (1.28)$ & $1.44 (1.17)$ & $3.19 (1.58)$ & $1.94 (1.47)$ & $1.28 (1.21)$ \\
KSIR & $0.09 (0.29)$ & $0.66 (0.65)$ & $1.09 (0.63)$ & $0.07 (0.26)$ & $0.68 (0.68)$ & $1.23 (0.68)$ & $2.26 (1.05)$ & $3.72 (1.88)$ & $2.53 (1.53)$ & $2.20 (1.19)$ & $4.11 (2.06)$ & $3.22 (1.52)$ \\
MLP & $9.00 (0.00)$ & $7.00 (0.00)$ & $5.00 (0.00)$ & $9.00 (0.00)$ & $7.00 (0.00)$ & $5.00 (0.00)$ & $7.52 (0.77)$ & $7.00 (0.00)$ & $4.98 (0.14)$ & $7.41 (0.83)$ & $7.00 (0.00)$ & $4.99 (0.10)$ \\
\midrule
\multicolumn{13}{c}{\textit{Framework: contrast configurations}} \\
$\mathrm{S}_{\mathrm{NN}}^{\mathrm{sup}}$ & $1.23 (0.42)^{*}$ & $1.43 (0.86)$ & $1.16 (1.14)$ & $1.17 (0.50)^{*}$ & $1.65 (1.00)$ & $1.08 (1.02)$ & $\textbf{0.00 (0.00)}$ & $1.60 (0.53)$ & $3.66 (0.49)$ & $\textbf{0.00 (0.00)}$ & $1.65 (0.52)$ & $3.61 (0.51)$ \\
$\mathrm{S}_{\mathrm{NN}}^{\mathrm{full}}$ & $0.11 (0.34)$ & $0.34 (0.85)$ & $0.41 (0.81)$ & $0.12 (0.32)$ & $0.45 (0.93)$ & $0.45 (0.85)$ & $\textbf{0.00 (0.00)}$ & $\textbf{0.70 (0.67)}$ & $0.86 (0.82)$ & $\textbf{0.00 (0.00)}$ & $0.84 (0.72)$ & $0.51 (0.67)$ \\
$\mathrm{S}_{\mathrm{TRE}}^{\mathrm{split}}$ & $0.07 (0.26)$ & $0.38 (0.86)$ & $\textbf{0.32 (0.73)}$ & $0.12 (0.50)$ & $0.48 (0.88)$ & $0.29 (0.73)$ & $\textbf{0.00 (0.00)}$ & $1.21 (0.70)$ & $0.14 (0.35)$ & $\textbf{0.00 (0.00)}$ & $1.35 (0.61)$ & $\textbf{0.14 (0.35)}$ \\
\midrule
\multicolumn{13}{c}{\textit{Framework: core configurations}} \\
$\mathrm{S}_{\mathrm{NN}}^{\mathrm{split}}$ & $0.11 (0.31)$ & $0.34 (0.92)$ & $0.40 (0.82)$ & $0.19 (0.46)$ & $0.58 (1.20)$ & $0.44 (0.78)$ & $\textbf{0.00 (0.00)}$ & $0.73 (0.66)$ & $1.18 (0.93)$ & $\textbf{0.00 (0.00)}$ & $\textbf{0.74 (0.63)}$ & $0.84 (0.82)$ \\
$\mathrm{S}_{\mathrm{TRE}}^{\mathrm{sup}}$ & $0.09 (0.38)$ & $0.31 (0.74)$ & $0.37 (0.87)$ & $\textbf{0.03 (0.17)}$ & $0.31 (0.72)$ & $\textbf{0.26 (0.64)}$ & $\textbf{0.00 (0.00)}$ & $2.02 (0.42)$ & $0.36 (0.48)$ & $\textbf{0.00 (0.00)}$ & $1.99 (0.50)$ & $0.45 (0.50)$ \\
$\mathrm{S}_{\mathrm{TRE}}^{\mathrm{full}}$ & $\textbf{0.06 (0.24)}$ & $0.35 (0.82)$ & $0.32 (0.77)$ & $0.08 (0.34)$ & $0.46 (0.87)$ & $0.32 (0.80)$ & $\textbf{0.00 (0.00)}$ & $1.33 (0.68)$ & $\textbf{0.11 (0.31)}$ & $\textbf{0.00 (0.00)}$ & $1.45 (0.59)$ & $\textbf{0.14 (0.35)}$ \\
\bottomrule
\end{tabular}
\begin{flushleft}
\tiny $^{*}$ The failure rates (NaN) of this method are 87\% (Isotropic Gaussian, $r=1$) and 82\% (AR(1) Gaussian, $r=1$); the reported mean and standard deviation are computed only over successful runs.
\end{flushleft}
\end{table}

\section{Real Date Analysis}\label{sec:rda}

We evaluate the proposed method on the M1 Patch-seq dataset \citep{scala2021phenotypic}, one of the largest publicly available Patch-seq datasets to date. The dataset contains $n = 1213$ neurons from the primary motor cortex of adult mice, covering all neuronal types, both excitatory and inhibitory. For each neuron, $q = 16$ electrophysiological properties were recorded as the response variables $\by$, and $p = 1000$ highly variable genes were selected as the predictors $\bx$, following the feature selection procedure described in the original publication.

We then applied the pre-processing pipeline of \citet{kobak2021sparse}. In line with their approach, we performed cross-validation to select 100 genes as the final input variables.\footnote{We chose this number because the original study reported selecting roughly 100 genes but did not provide the exact cross-validation parameters.} The gene expression counts were converted to counts per million for sequencing depth normalization and then $\ln_2(x+1)$-transformed. Both the gene expression values and the electrophysiological properties were standardized to zero mean and unit variance.

We assume the electrophysiological responses $\by$ and the gene expressions $\bx$ satisfy the multi-response regression model \eqref{model:1}:
\begin{equation*}
\by = \cF(\bB^{\top}\bx) + \bepsilon.
\end{equation*}

To assess the performance of different methods, we randomly partition the full dataset into four disjoint subsets: a labeled training set $\mathcal{T}_{\ell}$, an unlabeled set $\mathcal{T}_{u}$, an independent labeled set $\mathcal{T}_{f}$ for fitting the link function, and a hold-out test set $\mathcal{V}$ for final evaluation. The three configurations use these subsets as follows: $\mathrm{S}^{\mathrm{sup}}$ uses $\mathcal{T}_{\ell}$ for both score estimation and cross-moment computation; $\mathrm{S}^{\mathrm{split}}$ trains the score estimator on $\mathcal{T}_u$ and computes the cross-moment on $\mathcal{T}_{\ell}$; $\mathrm{S}^{\mathrm{full}}$ uses the combined set $\mathcal{T}_{\ell,u} = \mathcal{T}_{\ell} \cup \mathcal{T}_u$ for both purposes.

To investigate the impact of the training sample size and the labeled-to-unlabeled ratio, we fix the proportions of $\mathcal{T}_{f}$ and $\mathcal{V}$ at $0.15$ and $0.05$, respectively, and vary the proportion of $\mathcal{T}_{\ell,u}$ between $0.6$ and $0.8$. Within this combined set, we further vary the fraction of labeled samples, $\gamma = |\mathcal{T}_{\ell}| / (|\mathcal{T}_{\ell}| + |\mathcal{T}_{u}|)$, from $0.1$ to $0.5$.

The competing methods considered in this real data analysis are the same as those described in the simulation study. For each method, the rank is first determined using the approach described in Section~\ref{sec:ss-rs}, and the CS is then estimated accordingly. For csOPG and KSIR, which rely on cross-validation for rank selection, we set the maximum candidate dimension to $r_{\max} = 12$, following the empirical finding that the CS dimension in this dataset is approximately $10$ under the reduced rank regression model \citep{kobak2021sparse}. For SIR, we use $h = 12$ slices. For each configuration, after obtaining the corresponding estimate $\widehat{\bB}$ via the procedure described in Section~\ref{sec:model}, we project the covariates in $\mathcal{T}_{f}$ and $\mathcal{V}$ to obtain the low-dimensional representations $\widehat{\bB}^{\top}\bx$ for both sets. On $\mathcal{T}_{f}$, we fit the link function $\cF$ using a single-hidden-layer neural network, with the projected features $\widehat{\bB}^{\top} \bx$ as inputs and the electrophysiological responses $\by$ as the target. The network parameters are estimated by minimizing the mean squared error loss. The test MSE on $\mathcal{V}$ then serves as the criterion for assessing the quality of the estimated CS.

The results are reported as the mean over 100 independent repetitions, with standard deviations in parentheses. For each configuration, we present the average estimated rank $\widehat{r}$, the test MSE obtained using the adaptively selected rank, and the test MSE obtained using a fixed reference rank $r=10$.\footnote{For csOPG, rank selection remains feasible at $\gamma=0.1$, but subspace estimates occasionally yield NaN values; we report results only over successful trials. The success rates for csOPG and KSIR are 0.74 at training ratio 0.6 and 0.84 at 0.8.} The minimum test MSE based on the estimated rank $\widehat{r}$ in each column is boldfaced.

We first examine the rank selection performance of all methods.

In this nonlinear setting, the true rank is expected to be smaller than the linear benchmark of 10, since nonlinear dependencies can often be captured with fewer latent directions. 

We begin with the external baselines. SIR and MLP select ranks that are consistently large, often 11 or 12, well above the reference rank of 10. Despite their seemingly reasonable MSE, which is competitive in some cases, these large ranks do not reflect successful recovery of the true underlying structure; rather, they suggest overfitting or an inability to identify a parsimonious representation. In contrast, csOPG and KSIR select ranks that are too small, and their MSE based on the selected rank is noticeably higher than that using the reference rank, indicating a clear loss of predictive information due to rank underestimation. Overall, none of the baselines provides a reliable rank estimate.

Turning to our framework, we first consider the contrast configurations. $\mathrm{S}_{\mathrm{TRE}}^{\mathrm{split}}$ selects ranks around 8 to 9 and achieves MSE close to the reference, indicating that the selected rank is appropriate. $\mathrm{S}_{\mathrm{NN}}^{\mathrm{full}}$ also selects ranks around 9 with comparable MSE. In contrast, $\mathrm{S}_{\mathrm{NN}}^{\mathrm{sup}}$ selects only 3 to 7 directions, and the corresponding MSE is markedly higher than that using the reference rank, confirming that the rank is underestimated.

For the TRE-based core configurations, $\mathrm{S}_{\mathrm{TRE}}^{\mathrm{sup}}$ and $\mathrm{S}_{\mathrm{TRE}}^{\mathrm{full}}$ perform similarly to $\mathrm{S}_{\mathrm{TRE}}^{\mathrm{split}}$, all selecting ranks around 8 to 9 with MSE close to the reference. For the DNN core configuration, $\mathrm{S}_{\mathrm{NN}}^{\mathrm{split}}$ performs substantially better than $\mathrm{S}_{\mathrm{NN}}^{\mathrm{sup}}$, and is comparable to $\mathrm{S}_{\mathrm{NN}}^{\mathrm{full}}$ with no notable degradation.

In terms of predictive performance, the TRE methods achieve the lowest test MSE among all methods. Among the TRE configurations, $\mathrm{S}_{\mathrm{TRE}}^{\mathrm{sup}}$ achieves the lowest MSE, followed closely by $\mathrm{S}_{\mathrm{TRE}}^{\mathrm{full}}$ and $\mathrm{S}_{\mathrm{TRE}}^{\mathrm{split}}$, with only minor differences among them. Although the simulation results suggested that unlabeled data could be beneficial under heavy-tailed distributions, this advantage did not carry over to the real data application. One possible explanation is that the specific structure of the Patch-seq data or the limited sample size diminishes the additional value of unlabeled samples in this setting.

For the DNN configurations, $\mathrm{S}_{\mathrm{NN}}^{\mathrm{split}}$ performs similarly to $\mathrm{S}_{\mathrm{NN}}^{\mathrm{full}}$, confirming that the split configuration does not incur substantial loss; given its theoretical preference, it is the recommended choice among the DNN variants. $\mathrm{S}_{\mathrm{NN}}^{\mathrm{sup}}$ underperforms due to insufficient labeled data. The DNN methods generally underperform relative to TRE, consistent with the DNN's known requirement for large training sets. The difference in test MSE between training proportions 0.6 and 0.8 is small across all methods, suggesting that both training proportions remain in a small-sample regime where the modest increase in total sample size does not yield substantial improvement. In contrast, increasing $\gamma$ from 0.1 to 0.5, which directly increases the labeled sample size, leads to meaningful gains for most methods.

\begin{table}[t]
\centering
\tiny
\setlength{\tabcolsep}{2pt} 
\caption{Estimated rank $\widehat{r}$, and test MSE based on $\widehat{r}$ and on the reference rank $r=10$, for training proportions 0.6 and 0.8.}
\label{tab:real_combined}
\begin{tabular}{c l *{3}{c c c}}
\toprule
\multirow{2}{*}{Train ratio} & \multirow{2}{*}{Method} & \multicolumn{9}{c}{$\gamma$} \\
\cmidrule(lr){3-11}
 & & \multicolumn{3}{c}{0.1} & \multicolumn{3}{c}{0.3} & \multicolumn{3}{c}{0.5} \\
\cmidrule(lr){3-5} \cmidrule(lr){6-8} \cmidrule(lr){9-11}
 & &$\widehat{r}$ & MSE ($\widehat{r}$) & MSE ($r=10$) & $\widehat{r}$ & MSE ($\widehat{r}$) & MSE ($r=10$) & $\widehat{r}$ & MSE ($\widehat{r}$) & MSE ($r=10$) \\
\midrule
\multirow{11}{*}{0.6} & \multicolumn{10}{c}{\textit{External baselines}} \\
                     & SIR & 12.00(0.00) & 1.0197(0.1007) & 1.0328(0.0893) & 11.49(0.50) & 0.6795(0.0562) & 0.6813(0.0571) & 11.00(0.00) & 0.6264(0.0515) & 0.6257(0.0510) \\
                     & csOPG & 4.53(1.47) & 0.7854(0.0871) & 0.7475(0.0732) & 5.07(1.42) & 0.7571(0.0774) & 0.6909(0.0635) & 5.59(1.49) & 0.7262(0.0726) & 0.6785(0.0670) \\
                     & KSIR & 3.28(1.13) & 0.7505(0.0752) & 0.7080(0.0708) & 3.38(0.88) & 0.6992(0.0729) & 0.6501(0.0589) & 3.36(0.87) & 0.6727(0.0666) & 0.6186(0.0571) \\
                     & MLP & 12.00(0.00) & \textbf{0.6632(0.0601)} & 0.6655(0.0581) & 12.00(0.00) & 0.6318(0.0555) & 0.6406(0.0544) & 12.00(0.00) & 0.6529(0.0594) & 0.6723(0.0559) \\
                     & \multicolumn{10}{c}{\textit{Framework: contrast configurations}} \\
                     & $\mathrm{S}_{\mathrm{TRE}}^{\mathrm{split}}$ & 8.90(0.30) & 0.6826(0.0602) & 0.6862(0.0634) & 8.34(0.47) & 0.6093(0.0555) & 0.6113(0.0545) & 7.94(0.31) & 0.5909(0.0540) & 0.5934(0.0531) \\
                     & $\mathrm{S}_{\mathrm{NN}}^{\mathrm{sup}}$ & 5.17(0.60) & 0.8249(0.0838) & 0.7595(0.0680) & 3.30(0.46) & 0.8145(0.0890) & 0.7138(0.0664) & 7.25(0.46) & 0.6487(0.0548) & 0.6377(0.0547) \\
                     & $\mathrm{S}_{\mathrm{NN}}^{\mathrm{full}}$ & 9.01(0.39) & 0.7562(0.0724) & 0.7610(0.0675) & 8.99(0.22) & 0.6561(0.0570) & 0.6575(0.0582) & 8.83(0.40) & 0.6329(0.0547) & 0.6250(0.0559) \\
                     & \multicolumn{10}{c}{\textit{Framework: core configurations}} \\
                     & $\mathrm{S}_{\mathrm{TRE}}^{\mathrm{sup}}$ & 9.27(1.12) & 0.6807(0.1140) & 0.6684(0.0813) & 9.04(0.20) & \textbf{0.6019(0.0536)} & 0.6038(0.0507) & 8.93(0.26) & \textbf{0.5844(0.0525)} & 0.5831(0.0503) \\
                     & $\mathrm{S}_{\mathrm{TRE}}^{\mathrm{full}}$ & 8.97(0.26) & 0.6788(0.0620) & 0.6783(0.0633) & 8.91(0.29) & 0.6067(0.0544) & 0.6078(0.0529) & 8.68(0.47) & 0.5854(0.0526) & 0.5880(0.0536) \\
                     & $\mathrm{S}_{\mathrm{NN}}^{\mathrm{split}}$ & 8.67(0.49) & 0.7592(0.0677) & 0.7583(0.0658) & 8.09(0.40) & 0.6689(0.0549) & 0.6641(0.0595) & 6.73(0.51) & 0.6497(0.0623) & 0.6446(0.0525) \\
\midrule
\multirow{11}{*}{0.8} & \multicolumn{10}{c}{\textit{External baselines}} \\
                     & SIR & 12.00(0.00) & 1.1073(0.1040) & 1.1373(0.1111) & 11.17(0.38) & 0.6776(0.0619) & 0.6746(0.0601) & 11.00(0.00) & 0.6189(0.0555) & 0.6188(0.0561) \\  
                     & csOPG & 4.40(1.45) & 0.7978(0.0783) & 0.7545(0.0639) & 4.94(1.38) & 0.7540(0.0757) & 0.6856(0.0600) & 6.02(1.52) & 0.7264(0.0823) & 0.6769(0.0750) \\
                     & KSIR & 2.96(1.10) & 0.7505(0.0698) & 0.7141(0.0644) & 3.36(0.91) & 0.6928(0.0657) & 0.6475(0.0583) & 3.16(0.92) & 0.6729(0.0665) & 0.6171(0.0554) \\
                     & MLP & 12.00(0.00) & 0.6647(0.0592) & 0.6630(0.0575) & 12.00(0.00) & 0.6278(0.0582) & 0.6370(0.0558) & 12.00(0.00) & 0.6474(0.0604) & 0.6700(0.0586) \\
                     & \multicolumn{10}{c}{\textit{Framework: contrast configurations}} \\
                     & $\mathrm{S}_{\mathrm{TRE}}^{\mathrm{split}}$ & 8.85(0.43) & 0.6797(0.0651) & 0.6787(0.0661) & 8.44(0.50) & 0.6090(0.0532) & 0.6118(0.0533) & 7.96(0.20) & 0.5890(0.0546) & 0.5917(0.0525) \\
                     & $\mathrm{S}_{\mathrm{NN}}^{\mathrm{sup}}$ & 4.95(0.61) & 0.8164(0.0822) & 0.7616(0.0768) & 3.88(0.52) & 0.7608(0.0826) & 0.7030(0.0622) & 7.22(0.44) & 0.6396(0.0610) & 0.6361(0.0585) \\
                     & $\mathrm{S}_{\mathrm{NN}}^{\mathrm{full}}$ & 8.95(0.33) & 0.7553(0.0727) & 0.7501(0.0719) & 8.97(0.17) & 0.6563(0.0627) & 0.6583(0.0614) & 8.82(0.38) & 0.6266(0.0574) & 0.6259(0.0576) \\
                     & \multicolumn{10}{c}{\textit{Framework: core configurations}} \\
                     & $\mathrm{S}_{\mathrm{TRE}}^{\mathrm{sup}}$ & 9.67(0.58) & \textbf{0.6531(0.0610)} & 0.6520(0.0607) & 9.04(0.24) & \textbf{0.6018(0.0574)} & 0.6047(0.0570) & 8.85(0.36) & \textbf{0.5836(0.0527)} & 0.5825(0.0529) \\
                     & $\mathrm{S}_{\mathrm{TRE}}^{\mathrm{full}}$ & 8.97(0.36) & 0.6756(0.0640) & 0.6758(0.0615) & 8.93(0.29) & 0.6043(0.0545) & 0.6044(0.0542) & 8.55(0.50) & 0.5842(0.0544) & 0.5832(0.0552) \\
                     & $\mathrm{S}_{\mathrm{NN}}^{\mathrm{split}}$ & 8.74(0.44) & 0.7592(0.0740) & 0.7568(0.0646) & 8.15(0.36) & 0.6657(0.0592) & 0.6633(0.0615) & 6.74(0.46) & 0.6467(0.0648) & 0.6383(0.0588) \\
\bottomrule
\end{tabular}
\end{table}

\section{Limitations and Future Work}\label{sec:fw}

The proposed method has two main limitations that point to promising future directions.

First, the SVD-based recovery of the CS relies on the population matrix $\bM_1 \in \mathbb{R}^{r \times q}$ having full row rank $r$. This condition requires that at least $r$ of the $q$ response components provide linearly independent gradient directions. It can fail if too few response components contribute meaningful information. For instance, if several response components are symmetric about zero and the distribution of $\bB^\top\bx$ is symmetric, the corresponding columns of $\bM_1$ vanish. Similarly, strong correlations among response functions can reduce the effective rank. In either case, the rank of $\bM_1$ can be smaller than $r$, leading to an underestimated dimension or biased subspace estimation. Developing more robust procedures that remain valid under rank-deficient $\bM_1$ is a direction for future work.

Second, the current framework assumes continuous covariates \(\mathbf{x}\), as Stein's identity relies on the differentiability of \(\ln p(\mathbf{x})\). This excludes discrete or categorical covariates, which are common in many applications. Recent developments in discrete Stein's methods, such as Concrete Score Matching \citep{meng2022}, could possibly extend the proposed SDR pipeline to mixed continuous-discrete predictors, broadening its applicability to modern datasets with heterogeneous data types.

\bibliographystyle{elsarticle-harv} 
\bibliography{main.bib}

\newpage
\appendix

\section{Proofs of Main Results} 

\subsection{Technical Lemmas}

\begin{lemma}\label{lem:sub-exp} Suppose that $x$ and $y$ are both sub-Gaussian random variables, define $z \coloneqq xy - \EE(xy)$. Then, $z$ is mean zero sub-exponential. Moreover, $\| z \|_{\psi_1} \leq C\| x \|_{\psi_2} \| y \|_{\psi_2}$, for some absolute constant $C$.

\end{lemma}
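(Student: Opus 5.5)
The argument has two steps: first show that the product $xy$ is sub-exponential with the claimed norm bound, then show that subtracting its mean costs only an absolute constant. I will use the standard Orlicz-norm definitions $\|x\|_{\psi_2}=\inf\{t>0:\EE\exp(x^2/t^2)\le 2\}$ and $\|z\|_{\psi_1}=\inf\{t>0:\EE\exp(|z|/t)\le 2\}$.

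\textbf{Step 1: the product.} By homogeneity, assume without loss of generality that $\|x\|_{\psi_2}=\|y\|_{\psi_2}=1$; the degenerate case where one norm is zero gives $x=0$ or $y=0$ almost surely and is trivial. The elementary inequality $|ab|\le (a^2+b^2)/2$ gives
\begin{align*}
\EE\exp(|xy|)\le \EE\exp\left(\frac{x^2}{2}+\frac{y^2}{2}\right)\le \frac{1}{2}\left\{\EE\exp(x^2)+\EE\exp(y^2)\right\}\le 2,
\end{align*}
where the middle step applies $e^{(u+v)/2}\le (e^u+e^v)/2$, which is convexity of the exponential (equivalently AM--GM). Hence $\|xy\|_{\psi_1}\le 1$. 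Undoing the normalization, $\|xy\|_{\psi_1}\le \|x\|_{\psi_2}\|y\|_{\psi_2}$. No independence between $x$ and $y$ is needed, which matters because the lemma is later applied to $\widehat{\bs}_k(\bx_i)$ and $\by_{i,m}$, which are dependent.

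\textbf{Step 2: centering.} I will show that $\|z-\EE z\|_{\psi_1}\le C\|z\|_{\psi_1}$ for an absolute constant $C$. The triangle inequality for the Orlicz norm gives $\|z-\EE z\|_{\psi_1}\le \|z\|_{\psi_1}+\|\EE z\|_{\psi_1}$. For a constant $c$, $\|c\|_{\psi_1}=|c|/\ln 2$. Jensen's inequality gives $|\EE z|\le \EE|z|$, and from $\EE\exp(|z|/\|z\|_{\psi_1})\le 2$ together with $e^u\ge 1+u$ we get $\EE|z|\le \|z\|_{\psi_1}$. Therefore $\|z-\EE z\|_{\psi_1}\le (1+1/\ln 2)\|z\|_{\psi_1}$. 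Combining this with Step 1, applied to $xy$, yields $\|xy-\EE(xy)\|_{\psi_1}\le C\|x\|_{\psi_2}\|y\|_{\psi_2}$ with $C=1+1/\ln 2$.

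Finally, $z=xy-\EE(xy)$ has mean zero by construction. Note that $\EE(xy)$ is finite because sub-Gaussian variables have all moments, so Cauchy--Schwarz applies. There is no real obstacle in this proof; the only point needing care is to use the convexity inequality in Step 1 rather than a Hölder argument, so that the bound remains valid when $x$ and $y$ are dependent.
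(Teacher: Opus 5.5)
Your proof is correct and follows essentially the paper's two-step route: Young's inequality $|ab|\le (a^2+b^2)/2$ gives $\|xy\|_{\psi_1}\le\|x\|_{\psi_2}\|y\|_{\psi_2}$, and the triangle inequality with $\|c\|_{\psi_1}=|c|/\ln 2$ handles the centering. Your convexity step replaces the paper's Cauchy--Schwarz step, and your bound $|\EE(xy)|\le\EE|xy|\le\|xy\|_{\psi_1}$, used instead of $\EE(x^2)\le 2\|x\|_{\psi_2}^2$, gives the slightly better constant $1+1/\ln 2$ in place of $1+2/\ln 2$.

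One correction to your closing remark: Cauchy--Schwarz, $\EE(e^{a}e^{b})\le\{\EE(e^{2a})\EE(e^{2b})\}^{1/2}$, needs no independence, so the paper's H\"older-type argument is equally valid for the dependent pairs to which the lemma is later applied.
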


\begin{lemma}\label{lem:app-ge}
Let $\{\bx_i\}_{i=1}^n \subset \mathbb{R}^p$ and $\{\by_j\}_{j=1}^{n_\ell} \subset \mathbb{R}^{q}$ be i.i.d. random vectors, respectively. Define
\begin{align*}
\mathcal{Z} \coloneqq \left\{ \bx_{i,k} : i \in [n],\ k \in [p] \right\} \cup \left\{ \by_{j,m} : j \in [n_\ell],\ m \in [q] \right\}.
\end{align*}
Assume there exist constants $K>0$ and $c>0$ such that
\begin{align*}
\sup_{z \in \mathcal{Z}} \|z\|_{\psi_2} \le K,
\end{align*}
and for every $z \in \mathcal{Z}$,
\begin{align*}
\PP(|z| > t) \le 2\exp\left( -\frac{c t^2}{K^2} \right), \qquad \forall t > 0. 
\end{align*}

For any $0<\delta<1$ and any $0<\rho<1$, define the good event
\begin{align*}
\mathcal{G}
\coloneqq
\left\{
\max_{1\le i\le n} \|\bx_i\|_2 \le M_{\bx},
\quad
\max_{1\le j\le n_\ell} \|\by_j\|_2 \le M_{\by}
\right\},
\end{align*}
with
\begin{align*}
M_{\bx} = K \sqrt{\frac{p}{c} \ln\left(\frac{4 n^4 p}{\rho\delta}\right)},\qquad
M_{\by} = K \sqrt{\frac{q}{c} \ln\left(\frac{4 n_\ell^4 q}{\rho\delta}\right)}.
\end{align*}
Then, $\PP(\mathcal{G}^c) \le \rho\delta$.
\end{lemma}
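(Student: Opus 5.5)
The plan is to prove the bound coordinate by coordinate and then combine with a union bound. For each $k \in [p]$ and each $m \in [q]$ the event $\{\max_i |\bx_{i,k}| > t_x\}$ and the event $\{\max_j |\by_{j,m}| > t_y\}$ can each be controlled directly from the assumed tail bound $\PP(|z|>t)\le 2\exp(-ct^2/K^2)$.

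\textbf{Step 1: tail probability for the $\bx$ coordinates.} A union bound over the $np$ entries $\bx_{i,k}$ gives
\begin{align*}
\PP\Bigl(\max_{i\in[n],\,k\in[p]} |\bx_{i,k}| > t_x\Bigr) \le 2np\exp\bigl(-ct_x^2/K^2\bigr).
\end{align*}
On the complementary event, $\|\bx_i\|_2 \le \sqrt{p}\,t_x$ for every $i$.

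\textbf{Step 2: choice of $t_x$.} I will set
\begin{align*}
t_x = K\sqrt{c^{-1}\ln\bigl(4n^4p/(\rho\delta)\bigr)},
\end{align*}
so that $\sqrt{p}\,t_x = M_{\bx}$ exactly. With this choice the probability in Step 1 equals
\begin{align*}
2np\cdot \frac{\rho\delta}{4n^4p} = \frac{\rho\delta}{2n^3} \le \frac{\rho\delta}{2},
\end{align*}
using $n \ge 1$.

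\textbf{Step 3: the $\by$ coordinates.} The same argument with $n_\ell$ and $q$ in place of $n$ and $p$ gives
\begin{align*}
\PP\Bigl(\max_j \|\by_j\|_2 > M_{\by}\Bigr) \le \frac{\rho\delta}{2n_\ell^3} \le \frac{\rho\delta}{2}.
\end{align*}

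\textbf{Step 4: conclusion.} Adding the two failure probabilities gives $\PP(\mathcal{G}^c)\le \rho\delta$.

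No independence is needed, since only union bounds are used. The sup-norm bound on $\|z\|_{\psi_2}$ plays no role beyond the stated tail inequality. The factor $n^4$ inside the logarithm is generous; the argument only requires a factor at least $n$. The only point needing care is the step from an entrywise bound to the Euclidean norm, $\|\bx_i\|_2\le\sqrt{p}\max_k|\bx_{i,k}|$, and this is routine, so I expect no real obstacle.
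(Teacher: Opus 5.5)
Your proposal is correct and follows essentially the same argument as the paper: bound $\|\bx_i\|_2$ by $\sqrt{p}\max_k|\bx_{i,k}|$, apply the coordinate tail bound at level $M_{\bx}/\sqrt{p}$ with a union bound to get $\rho\delta/(2n^3)$ (and $\rho\delta/(2n_\ell^3)$ for $\by$), and add the two. The only difference is cosmetic. You union over all $np$ entries at once, whereas the paper first records the per-vector bound $\PP(\|\bx_i\|_2>M_{\bx})\le \rho\delta/(2n^4)$ and then sums over $i$; the paper reuses that per-vector form later in the proof of Theorem~\ref{thm:kernel}.
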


\subsection{Proof of Lemma~\ref{lem:fs}}
\begin{proof}
For $\forall i \in [p], j \in [q]$, 
\begin{align*}
   & \EE\{\by_{j} \bs_{i}(\bx)\} = \EE[ \{f_{j}(\bB^{\T}\bx) +\epsilon_{j}\}\bs_{i}(\bx)] = \EE\{f_{j}(\bB^{\T}\bx)\bs_{i}(\bx)\}  + \EE\{\epsilon_{j}\bs_{i}(\bx) \} =  \EE\{ f_{j}(\bB^{\T}\bx)\bs_{i}(\bx)\}\\
    = & \int_{\RR^{p}}  f_{j}(\bB^{\T}\bx) \bs_{i}(\bx)p(\bx)\dx = -\int_{\RR^{p}} f_{j}(\bB^{\T}\bx) [\nabla_{\bx}p(\bx)]_{i}\dx\\
    = & \int_{\RR^{p}} [\nabla_{\bx} f_{j}(\bB^{\T}\bx)]_{i} p(\bx)\dx - \Delta\\
    = & \bb^{\T}_{i} \EE [\nabla_{\bz} \{ f_{j}(\bB^{\T}\bx)\}],
\end{align*}   
where $\bb_{i}$ is the $i$-th column of $\bB^{\T}$ and
\begin{align*}
   \Delta = &\int_{\RR^{p-1}}  \{\lim_{a \rightarrow \infty} f_{j}(\bB^{\T}\bx) p(\bx)|_{\bx = (x_{1}, \ldots,x_{i-1},a,  x_{i+1}, \ldots, x_{p}) } - \lim_{b \rightarrow -\infty} f_{j}(\bB^{\T}\bx) p(\bx)|_{\bx = (x_{1}, \ldots,x_{i-1},b,  x_{i+1}, \ldots, x_{p}) }\}\\
  & \text{d}(x_{1}, \ldots,x_{i-1}, x_{i+1}, \ldots, x_{p}) =  0.
\end{align*}

Stacking $\EE\{  \by_{j} \bs_{i}(\bx)\}, i \in [p]$  horizontally together, we have $\EE\{ \by_{j}\bs(\bx) \} =  \bB \EE [\nabla_{\bz} \{ f_{j}(\bB^{\T}\bx)\}]$, i.e., $\bM = \bB \bM_1$.  
\end{proof}

\subsection{Proof of Theorem~\ref{thm:subgcomp}}
\begin{proof}
Condition on $\cT_u$, $\widehat{\bs}^u$ is fixed.
By triangle inequality,
\begin{align*}
\left\| \widehat{\bM}^{s}_{\ell,u} - \bM \right\|_\mathrm{F} 
\le
\underbrace{\left\| \widehat{\bM}^{s}_{\ell,u} - \EE(\widehat{\bM}^{s}_{\ell,u}) \right\|_\mathrm{F}}_{\text{Term I}}
+
\underbrace{\left\| \EE(\widehat{\bM}^s_{\ell,u}) - \bM \right\|_\mathrm{F}}_{\text{Term II}}.
\end{align*}

For Term I, let $z_{i}^{j,k}$ denote $\widehat{\bs}_k^u(\bx_i) \by_{i,j} - \EE\{\widehat{\bs}_k^u(\bx) \by_j\}$.
Since $\widehat{\bs}^u_k$ and $\by_j$ are sub-Gaussian, by Lemma~\ref{lem:sub-exp}, 
$z_{i}^{j,k}$ is sub-exponential. Moreover, there exists some constant $C > 0$ such that $\|z_{i}^{j,k}\|_{\psi_1} \le CK^2$. Let $K'$ denote $CK^2$, by the sub-exponential Bernstein's inequality, there exists an absolute constant $C>0$ such that for any $\delta \in (0,1)$, with probability at most $\delta$,
\begin{equation*}
    \left| \frac{1}{n_{\ell}}\sum_{i=1}^{n_{\ell}} z_i^{j,k} \right|
    \ge C K' \left\{ \sqrt{\frac{\ln\left(\frac{1}{\delta}\right)}{n}} + \frac{\ln\left(\frac{1}{\delta}\right)}{n} \right\}.
\end{equation*}

Applying the union bound over all $p \times q$ entries gives
\begin{align*}
\text{Term I}
=\left\| \widehat{\bM}^{s}_{\ell,u} - \EE(\widehat{\bM}^{s}_{\ell,u}) \right\|_\mathrm{F}  \le C K' \left\{ \sqrt{\frac{pq\ln\left(\frac{pq}{\delta}\right)}{n}} + \frac{\sqrt{pq}\ln\left(\frac{pq}{\delta}\right)}{n} \right\} = C K^2 \left\{ \sqrt{\frac{pq\ln\left(\frac{pq}{\delta}\right)}{n}} + \frac{\sqrt{pq}\ln\left(\frac{pq}{\delta}\right)}{n} \right\},
\end{align*}
with probability at least $1 - \delta/2$, for some absolute constant $C$.

By Jensen's inequality and Cauchy-Schwarz inequality, we obtain, with probability at least $1-\delta/2$,  
\begin{align*}
&\text{Term II} \\
= & \left\| \EE \left\{ \frac{1}{n_{\ell}}\sum_{i=1}^{n_{\ell}} \widehat{\bs}(\bx_i)\by_i^\top \right\} -   \EE\{s(\bx)\by^\top\}
 \right \|_\mathrm{F} \\
 = & \left\| \EE \left\{  \widehat{\bs}(\bx)\by^\top \right\} -   \EE\{s(\bx)\by^\top\}
 \right \|_\mathrm{F} \\
 \le & \EE \left\{ \left\|   \widehat{\bs}(\bx)\by^\top  -   s(\bx)\by^\top
 \right \|_\mathrm{F}\right\} \\
 = & \EE \left\{ \| \widehat{\bs}(\bx)-s(\bx)\|_2 \|\by\|_2\right\}\\
\le & \sqrt{\EE\{\| \widehat{\bs}(\bx)-s(\bx)\|^2_2\}\EE(\| \by \|_2^2)}\\
\le & C K^2 \sqrt{q}  \epsilon(n_u),
\end{align*}
for some absolute constant $C>0$.

Taking the union bound with the event for Term II yields the desired result.
\end{proof}

\subsection{Proof of Theorem~\ref{thm:kernel}}

\subsubsection{Properties of General Hermitian Positive-Definite Reproducing Operator-Valued kernels}

\begin{definition}[Hermitian positive-definite reproducing operator-valued kernel]
\label{def:operator_kernel}
An operator-valued kernel $\cK:\mathcal{X}\times\mathcal{X}\to L(\mathcal{Y})$ is called the Hermitian positive-definite reproducing kernel of the RKHS $\mathcal{H}$ if and only if:

\begin{enumerate}
\item[(i)] $\forall \bx\in\mathcal{X}$, $\forall \by\in\mathcal{Y}$, the mapping $\bx' \mapsto \cK(\bx',\bx)\by$ belongs to $\mathcal{H}_{\cK}$.

\item[(ii)] $\forall \boldsymbol{f}\in\mathcal{H}$, $\forall \bx\in\mathcal{X}$, $\forall \by\in\mathcal{Y}$, $\langle \boldsymbol{f}(\bx), \by\rangle_{\mathcal{Y}} = \langle \boldsymbol{f},\; \cK(\cdot,\bx)\by\rangle_{\mathcal{H}_{\cK}}$.
\item[(iii)] $\forall \bx_1,\bx_2\in\mathcal{X}$,
$\cK(\bx_1,\bx_2) = \cK(\bx_2, \bx_1)^* \in L(\mathcal{Y})$, 
where $*$ denotes the adjoint.

\item[(iv)] $\forall n\ge 1$, $\forall \{\bx_i\}_{i=1}^n, \{\bx'_i\}_{i=1}^n \subset \mathcal{X}$, $\forall \{\by_i\}_{i=1}^n, \{\by'_i\}_{i=1}^n \subset \mathcal{Y}$, $\sum_{k,\ell=1}^{n} \langle \cK(\cdot,\bx_k)\by_k,\; \cK(\cdot,\bx'_\ell)\by'_\ell\rangle_{\mathcal{H}_{\cK}} \ge 0$.
\end{enumerate}
\end{definition}

\begin{lemma}\label{lem:app-rp}
Let $\mathcal{H}_{\mathcal{K}}$ be a vector-valued RKHS associated with a matrix-valued kernel
\begin{align*}
\mathcal{K}: \mathcal{X}\times\mathcal{X} \to \mathbb{R}^{p\times p}.
\end{align*}
Then for any $\boldsymbol{f}\in\mathcal{H}_{\mathcal{K}}$ and any $\bx\in\mathcal{X}$,
\begin{align*}
\|\boldsymbol{f}(\bx)\|_2 \le \sqrt{\|\mathcal{K}(\bx,\bx)\|_{\mathrm{op}}} \, \|\boldsymbol{f}\|_{\mathcal{H}_{\mathcal{K}}}.
\end{align*}
\end{lemma}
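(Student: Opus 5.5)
The plan is to combine the reproducing property (Definition~\ref{def:operator_kernel}(ii)) with the Cauchy--Schwarz inequality in $\mathcal{H}_{\mathcal{K}}$, and then evaluate the norm of the representer $\mathcal{K}(\cdot,\bx)\by$ using the reproducing property once more. Fix $\boldsymbol{f}\in\mathcal{H}_{\mathcal{K}}$ and $\bx\in\mathcal{X}$. If $\boldsymbol{f}(\bx)=\mathbf{0}$ there is nothing to prove. Otherwise we use the variational characterization
\begin{align*}
\|\boldsymbol{f}(\bx)\|_2=\sup_{\by\in\mathbb{R}^p,\ \|\by\|_2=1}\langle \boldsymbol{f}(\bx),\by\rangle ,
\end{align*}
so it suffices to bound $\langle \boldsymbol{f}(\bx),\by\rangle$ uniformly over unit vectors $\by$.

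For a fixed unit vector $\by$, property (i) guarantees that $\mathcal{K}(\cdot,\bx)\by\in\mathcal{H}_{\mathcal{K}}$. Property (ii) and Cauchy--Schwarz then give
\begin{align*}
\langle \boldsymbol{f}(\bx),\by\rangle=\langle \boldsymbol{f},\mathcal{K}(\cdot,\bx)\by\rangle_{\mathcal{H}_{\mathcal{K}}}\le \|\boldsymbol{f}\|_{\mathcal{H}_{\mathcal{K}}}\,\|\mathcal{K}(\cdot,\bx)\by\|_{\mathcal{H}_{\mathcal{K}}}.
\end{align*}
Next we compute the representer's norm by applying (ii) to the function $\boldsymbol{g}=\mathcal{K}(\cdot,\bx)\by$ itself:
\begin{align*}
\|\mathcal{K}(\cdot,\bx)\by\|_{\mathcal{H}_{\mathcal{K}}}^2=\langle \boldsymbol{g}(\bx),\by\rangle=\by^\top\mathcal{K}(\bx,\bx)\by\le\|\mathcal{K}(\bx,\bx)\|_{\mathrm{op}}.
\end{align*}
Here property (iii) makes $\mathcal{K}(\bx,\bx)$ symmetric, and property (iv) makes the quadratic form nonnegative, so taking the square root is legitimate. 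The final inequality is the Rayleigh-quotient bound.

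Combining the two displays and taking the supremum over unit $\by$ yields the claim. I do not expect any real obstacle here. The only point needing care is to justify that the quadratic form $\by^\top\mathcal{K}(\bx,\bx)\by$ is nonnegative and bounded by the operator norm, and both facts follow directly from the Hermitian and positive-definiteness properties in Definition~\ref{def:operator_kernel}.
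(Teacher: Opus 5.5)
Your proof is correct and follows essentially the same route as the paper: the reproducing property, Cauchy--Schwarz in $\mathcal{H}_{\mathcal{K}}$, and $\|\mathcal{K}(\cdot,\bx)\by\|_{\mathcal{H}_{\mathcal{K}}}^2=\by^\top\mathcal{K}(\bx,\bx)\by\le\|\mathcal{K}(\bx,\bx)\|_{\mathrm{op}}\|\by\|_2^2$. The only difference is that the paper plugs in $\by=\boldsymbol{f}(\bx)$ directly and divides by $\|\boldsymbol{f}(\bx)\|_2$ rather than taking a supremum over unit vectors, and your appeal to properties (iii)--(iv) is unnecessary because the quadratic form is a squared RKHS norm and the operator-norm bound holds for any matrix.
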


\begin{lemma}\label{lem:replacement-stability}
Let $\mathcal{H}_{\cK}$ be a vector-valued RKHS with reproducing kernel $\cK$ satisfying Definition~\ref{def:operator_kernel}. Further assume that
\begin{align*}
\sup_{\bx\in\mathcal{X}} \|\cK(\bx,\bx)\|_{\mathrm{op}} \le \kappa^2 < \infty.
\end{align*}
Given a labeled training set $\cT_{\ell} = \{(\bx_i,\by_i)\}_{i=1}^{n_{\ell}}$, satisfying $\max_{i \in [n]}\| \by_i \|_2 \leq M_{\by}$, the kernel ridge regression estimator (KRRE) is defined as 
\begin{equation*}
\bg^{\cK,\lambda}_{\cT_{\ell}}
\coloneqq \arg\min_{\boldsymbol{f}\in\mathcal{H}_{\cK}}\left\{
\frac{1}{n_{\ell}}\sum_{i=1}^{n_{\ell}} \left \|\boldsymbol{f}(\bx_i)-\by_i\right \|_2^2 + \lambda \|\boldsymbol{f}\|_{\cH_{\cK}}^2
\right\},
\quad \lambda>0.   
\end{equation*}
Let $\cT^{j}_{\ell}$ be the dataset obtained by replacing the $j$-th sample with an independent copy $(\bx_j',\by_j')$, and let $\bg^{\cK,\lambda}_{\cT^{j}_{\ell}}$ be the corresponding KRRE. If $\|\by'_j\|_2 \le M_{\by}$,
 then, for any test point $\bx \in \cX$,
\begin{align*}
\left\| \bg^{\cK,\lambda}_{\cT_{\ell}}(\bx) - \bg^{\cK,\lambda}_{\cT_{\ell}^{j}}(\bx) \right\|_2
\le
\frac{CM_{\by}}{n_{\ell}},
\end{align*}
where $C = 2(1 + \kappa/\sqrt{\lambda})\kappa^2/\lambda$.
\end{lemma}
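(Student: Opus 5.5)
The bound follows from the standard algorithmic-stability argument for Tikhonov-regularized least squares in a vector-valued RKHS. Write the first-order optimality conditions of the two problems and subtract them. Introduce the sampling operator $S_{\cT}:\cH_{\cK}\to(\RR^{q})^{n_\ell}$, $S_{\cT}\boldsymbol{f}=(\boldsymbol{f}(\bx_i))_{i}$, and the empirical operator
\begin{align*}
T_{\cT} = \frac{1}{n_\ell}\sum_{i=1}^{n_\ell}\cK(\cdot,\bx_i)\,\mathrm{ev}_{\bx_i}.
\end{align*}
By the reproducing property in Definition~\ref{def:operator_kernel}(ii), the minimizer then satisfies
\begin{align*}
(T_{\cT}+\lambda I)\,\bg_{\cT} = \frac{1}{n_\ell}\sum_{i}\cK(\cdot,\bx_i)\by_i .
\end{align*}
The same identity holds for $\cT^{j}_\ell$. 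These operators are self-adjoint and positive by Definition~\ref{def:operator_kernel}(iii)--(iv), so $\|(T+\lambda I)^{-1}\|_{\mathrm{op}}\le 1/\lambda$.

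Subtracting the two normal equations and regrouping gives
\begin{align*}
(T_{\cT}+\lambda I)(\bg_{\cT}-\bg_{\cT^j}) = \frac{1}{n_\ell}\Big\{\cK(\cdot,\bx_j)\big(\by_j-\bg_{\cT^j}(\bx_j)\big) - \cK(\cdot,\bx_j')\big(\by_j'-\bg_{\cT^j}(\bx_j')\big)\Big\}.
\end{align*}
The regrouping works because $T_{\cT}-T_{\cT^j}$ involves only the $j$-th term. Hence
\begin{align*}
\|\bg_{\cT}-\bg_{\cT^j}\|_{\cH}\le \frac{1}{\lambda n_\ell}\Big(\|\cK(\cdot,\bx_j)\boldsymbol{v}\|_{\cH}+\|\cK(\cdot,\bx_j')\boldsymbol{v}'\|_{\cH}\Big),
\end{align*}
where $\boldsymbol{v}$ and $\boldsymbol{v}'$ are the two residual vectors. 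By the reproducing property, $\|\cK(\cdot,\bx)\boldsymbol{v}\|_{\cH}^2=\langle \cK(\bx,\bx)\boldsymbol{v},\boldsymbol{v}\rangle\le\kappa^2\|\boldsymbol{v}\|_2^2$, so each term is at most $\kappa\|\boldsymbol{v}\|_2$.

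Next, bound the residuals. Comparing the objective at $\bg_{\cT^j}$ with its value at $\boldsymbol{f}=0$ gives $\lambda\|\bg_{\cT^j}\|_{\cH}^2\le M_{\by}^2$, that is, $\|\bg_{\cT^j}\|_{\cH}\le M_{\by}/\sqrt{\lambda}$. Lemma~\ref{lem:app-rp} then gives $\|\bg_{\cT^j}(\bx)\|_2\le \kappa M_{\by}/\sqrt\lambda$ at every $\bx$. Consequently
\begin{align*}
\|\boldsymbol{v}\|_2,\ \|\boldsymbol{v}'\|_2\le M_{\by}\big(1+\kappa/\sqrt\lambda\big),
\end{align*}
which uses $\|\by_j'\|_2\le M_{\by}$ for the replaced point. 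Combining the pieces,
\begin{align*}
\|\bg_{\cT}-\bg_{\cT^j}\|_{\cH}\le \frac{2\kappa M_{\by}(1+\kappa/\sqrt\lambda)}{\lambda n_\ell}.
\end{align*}

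Finally, apply Lemma~\ref{lem:app-rp} once more to pass to pointwise values at a test point $\bx$. This multiplies by $\kappa$ and yields the claimed constant $C=2(1+\kappa/\sqrt\lambda)\kappa^2/\lambda$.

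The main obstacle is deriving the normal equation rigorously in the operator-valued setting. This requires showing that the Fréchet derivative of $\frac1{n_\ell}\sum\|\boldsymbol{f}(\bx_i)-\by_i\|^2$ equals $2(T_{\cT}\boldsymbol{f}-\frac1{n_\ell}\sum\cK(\cdot,\bx_i)\by_i)$ via (ii), and that $T_{\cT}$ is bounded, self-adjoint and positive. The rest is bookkeeping with the two norm bounds.
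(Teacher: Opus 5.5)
Your proof is correct and gives exactly the paper's constant $C=2(1+\kappa/\sqrt\lambda)\kappa^2/\lambda$, but it takes a different route.

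\textbf{The paper's route.} The paper uses the Bousquet--Elisseeff variational stability argument:
\begin{itemize}
\item It compares each minimizer's regularized risk with its value at a convex combination $\boldsymbol{f}^*+t(\boldsymbol{f}^*_j-\boldsymbol{f}^*)$, and symmetrically for $\boldsymbol{f}^*_j$.
\item It discards the terms $k\neq j$ using convexity of each loss, divides by $t$, and lets $t\to0^+$. This gives $2n_\ell\lambda\|\boldsymbol{f}^*-\boldsymbol{f}^*_j\|_{\cH_{\cK}}^2\le c_j(\boldsymbol{f}^*_j)-c_j(\boldsymbol{f}^*)+c_j'(\boldsymbol{f}^*)-c_j'(\boldsymbol{f}^*_j)$.
\item It bounds the right-hand side using the Lipschitz constant $2M_{\by}(1+\kappa/\sqrt\lambda)$ of the squared loss on the bounded range. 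This requires a priori sup-norm bounds on both minimizers.
\end{itemize}

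\textbf{Your route.} You exploit the linearity of ridge regression instead. Subtracting the two normal equations localizes the discrepancy to the $j$-th and replaced points. The resolvent bound $\|(T_{\cT}+\lambda I)^{-1}\|_{\mathrm{op}}\le 1/\lambda$ then does the work. You only need the a priori bound on $\bg^{\cK,\lambda}_{\cT^{j}_{\ell}}$, since both residuals are evaluated at that one function.

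\textbf{Trade-off.} Your argument is shorter and more transparent for the squared loss. The paper's argument avoids operator calculus and carries over essentially verbatim to any convex loss that is Lipschitz on the relevant range.

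\textbf{The step you flag as the main obstacle is routine.} By Definition~\ref{def:operator_kernel}(ii),
$\langle T_{\cT}\boldsymbol{f},\boldsymbol{g}\rangle_{\cH_{\cK}}=\frac{1}{n_\ell}\sum_i\langle\boldsymbol{f}(\bx_i),\boldsymbol{g}(\bx_i)\rangle$.
This yields boundedness ($\|T_{\cT}\|\le\kappa^2$), self-adjointness and positivity at once; conditions (iii)--(iv) are not needed. It also shows the objective equals
$\langle(T_{\cT}+\lambda I)\boldsymbol{f},\boldsymbol{f}\rangle_{\cH_{\cK}}-2\langle\boldsymbol{b},\boldsymbol{f}\rangle_{\cH_{\cK}}+\mathrm{const}$,
where $\boldsymbol{b}=\frac{1}{n_\ell}\sum_i\cK(\cdot,\bx_i)\by_i$. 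This is a strictly convex quadratic, so its minimizer is characterized by the normal equation.

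A minor notational point: your codomain $\RR^q$ should be the kernel's output space ($\RR^p$ in the paper's application to the TRE). Nothing in your argument depends on it.
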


\subsubsection{Properties of TRE with Curl-Free IMQ Kernels}

In the following proof, given dataset $\bX = \{ \bx_i \}_{i=1}^{n}$ and kernel $\cK$, let $\mathrm{div}_{\bx}  \cK(\bx, \cdot)$ be the divergence of $\cK(\bx, \cdot)$, defined as a vector-valued
function, whose $i$-th component is the divergence of the
$i$-th column of $\cK(\bx, \cdot)$. Then, define 
\begin{align*}
\widehat{\bzeta}^{\cK}_{\bX}(\bx) \coloneqq \frac{1}{n}\sum_{j=1}^n \mathrm{div}_{\bx} \cK(\bx,\bx_j)^\top.    
\end{align*}

By Theorem 3.1 of~\citet{zhou2020}, the TRE can be decomposed as
\begin{align*}
\widehat{\bs}^{\cK,\lambda}_{\bX}(\bx) = \bg^{\cK,\lambda}_{\cT_{\cK,\bX}}(\bx) - \frac{1}{\lambda}\widehat{\bzeta}^{\cK}_{\bX}(\bx),
\end{align*}
where $\bg^{\cK,\lambda}_{\cT_{\cK,\bX}}$ is the KRRE defined in Lemma~\ref{lem:replacement-stability} estimated on the training set $\cT_{\cK,\bX} = $\\$[\{\bx_i,(1/\lambda) \widehat{\bzeta}^{\cK}_{\bX}(\bx_i)\}]_{i=1}^n$.

\begin{lemma}[Replacement Stability]
\label{lem:kef-replacement}

Let $\cH_{\cK}$ be a vector-valued RKHS with kernel $\cK$ satisfying Definition~\ref{def:operator_kernel} and
\begin{align*}
\sup_{\bx\in\mathcal{X}} \|\cK(\bx,\bx)\|_{\mathrm{op}} \le \kappa^2 < \infty,
\qquad
\sup_{\bx,\by \in \cX} \|\mathrm{div}_{\bx} \cK(\bx,\by)^\top\|_2 \le M_{\bzeta} < \infty.
\end{align*}
Fix $\lambda>0$, let $\widehat{\bs}^{\cK,\lambda}_{\bX}$ be the TRE with regularization parameter $\lambda$ and kernel $\cK$ estimated on dataset $\bX = \{ \bx_i \}_{i=1}^{n}$, as defined in Theorem 3.1 of~\citet{zhou2020}.
Let $\bX^{j}$ be the dataset obtained by replacing the $j$-th sample of $\bX$ with an independent copy $\bx_j'$, and let $\widehat{\bs}^{\cK,\lambda}_{\bX^{j}}$ be the corresponding TRE. Then for any test point $\bx \in \cX$,
\begin{align*}
\left \|\widehat{\bs}^{\cK,\lambda}_{\bX}(\bx) - \widehat{\bs}^{\cK,\lambda}_{\bX^{j}}(\bx) \right \|_2
\le \left\{ \frac{2\left(1 + \frac{\kappa}{\sqrt{\lambda}} \right)\kappa^2}{\lambda} + 2\right \}\frac{M_{\bzeta}}{n}.
\end{align*}
\end{lemma}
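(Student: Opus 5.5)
The plan is to exploit the decomposition of the TRE recalled just before the statement (Theorem~3.1 of \citet{zhou2020}):
\begin{align*}
\widehat{\bs}^{\cK,\lambda}_{\bX}(\bx) = \bg^{\cK,\lambda}_{\cT_{\cK,\bX}}(\bx) - \frac{1}{\lambda}\widehat{\bzeta}^{\cK}_{\bX}(\bx).
\end{align*}
The same decomposition is written for $\bX^{j}$, and the two are subtracted. The triangle inequality then splits the perturbation into a ``divergence part'' $\|\widehat{\bzeta}^{\cK}_{\bX}(\bx)-\widehat{\bzeta}^{\cK}_{\bX^{j}}(\bx)\|_2$ and a ``KRRE part'' $\|\bg^{\cK,\lambda}_{\cT_{\cK,\bX}}(\bx)-\bg^{\cK,\lambda}_{\cT_{\cK,\bX^{j}}}(\bx)\|_2$. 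These are meant to produce, respectively, the $2M_{\bzeta}/n$ term and the $2(1+\kappa/\sqrt\lambda)\kappa^2M_{\bzeta}/(\lambda n)$ term of the claimed bound.

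\textbf{Divergence part.} The divergence part is elementary. $\widehat{\bzeta}^{\cK}_{\bX}$ is an average of $n$ terms, and $\bX$ and $\bX^{j}$ differ only in the $j$-th term. Hence
\begin{align*}
\|\widehat{\bzeta}^{\cK}_{\bX}(\bx)-\widehat{\bzeta}^{\cK}_{\bX^{j}}(\bx)\|_2=\frac{1}{n}\|\mathrm{div}_{\bx}\cK(\bx,\bx_j)^\top-\mathrm{div}_{\bx}\cK(\bx,\bx_j')^\top\|_2\le \frac{2M_{\bzeta}}{n},
\end{align*}
using the uniform bound $M_{\bzeta}$.

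\textbf{KRRE part.} This part will follow the template of Lemma~\ref{lem:replacement-stability}. Here every pseudo-label $\widehat{\bzeta}^{\cK}_{\bX}(\bx_i)$ is bounded by $M_{\bzeta}$, so it plays the role of $M_{\by}$. I will write the first-order optimality conditions for both minimizers in operator form, $(\widehat L+\lambda)\bg=\widehat S^*Y/n$, with $\widehat L=\frac1n\sum_i \cK(\cdot,\bx_i)\cK(\cdot,\bx_i)^*$. The difference then decomposes into two pieces.

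The first piece is the change of the $j$-th design point with the labels held fixed. It is a rank-two perturbation of $\widehat L$ of size $2\kappa^2/n$, combined with one altered label term. Exactly as in Lemma~\ref{lem:replacement-stability}, it is controlled using $\|(\widehat L+\lambda)^{-1}\|\le 1/\lambda$, $\|\bg\|_{\cH_{\cK}}\le \kappa M_{\bzeta}/\sqrt{\lambda}\cdot(1/\sqrt\lambda)$-type bounds, and Lemma~\ref{lem:app-rp}. This yields $2(1+\kappa/\sqrt\lambda)\kappa^2 M_{\bzeta}/(\lambda n)$.

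The second piece is the change of \emph{all} pseudo-labels, since $\widehat{\bzeta}^{\cK}_{\bX}$ depends on every sample. By the previous step, each label moves by at most $2M_{\bzeta}/n$ uniformly. Since the KRRE is linear in the labels and $\|(\widehat L+\lambda)^{-1}\widehat S^*/n\|$ applied to a uniformly small label vector is bounded via Lemma~\ref{lem:app-rp}, this contributes another $O(M_{\bzeta}/n)$ term. I will try to merge it into the same $(1+\kappa/\sqrt\lambda)\kappa^2/\lambda$ factor.

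\textbf{Main obstacle.} The hardest step is bookkeeping the powers of $\lambda$ so that the final constant is exactly $\{2(1+\kappa/\sqrt\lambda)\kappa^2/\lambda+2\}$. Two places threaten this. First, the explicit $1/\lambda$ in front of $\widehat{\bzeta}$ in the decomposition. Second, the $1/\lambda$ scaling of the pseudo-labels $(1/\lambda)\widehat{\bzeta}$ in $\cT_{\cK,\bX}$. To match the stated form, I will first try to work with the unscaled pseudo-labels $\widehat{\bzeta}$, so that the label bound is $M_{\bzeta}$, and track the $\lambda$-scaling through the linear dependence of the KRRE on its labels. I will also check whether the two $\lambda^{-1}$ factors cancel against the $\lambda$ in the normal equation, i.e.\ via $(\widehat L+\lambda)^{-1}=\lambda^{-1}\{I-\widehat L(\widehat L+\lambda)^{-1}\}$. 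If they do not cancel exactly, the natural fallback is to state the bound for $\lambda\ge 1$, or to note where the extra $\lambda^{-1}$ enters.
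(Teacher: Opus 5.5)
Your decomposition and divergence-part computation match the paper's proof exactly. The paper uses the triangle inequality on $\widehat{\bs}^{\cK,\lambda}_{\bX}=\bg^{\cK,\lambda}_{\cT_{\cK,\bX}}-\lambda^{-1}\widehat{\bzeta}^{\cK}_{\bX}$, applies Lemma~\ref{lem:replacement-stability} to the KRRE part with $M_{\bzeta}$ playing the role of $M_{\by}$, and bounds the $\widehat{\bzeta}$ difference by $2M_{\bzeta}/n$.

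The paper reaches the stated constant only through three silent simplifications, and you flagged all three:
\begin{itemize}
\item its final line drops the prefactor $\lambda^{-1}$ on the $\widehat{\bzeta}$ difference;
\item it uses the label bound $M_{\bzeta}$, although the pseudo-labels $\lambda^{-1}\widehat{\bzeta}^{\cK}_{\bX}(\bx_i)$ are only bounded by $M_{\bzeta}/\lambda$;
\item it treats the change as a single-pair replacement, although all $n$ pseudo-labels move.
\end{itemize}

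The gap in your proposal is the step you defer: the cancellation you hope for does not exist. The identity $(\widehat L+\lambda)^{-1}=\lambda^{-1}\{I-\widehat L(\widehat L+\lambda)^{-1}\}$ is exactly what produces the decomposition, so it cannot remove the $\lambda^{-1}$ a second time. Carry your plan out with three ingredients:
\begin{itemize}
\item an intermediate KRRE on design $\bX^{j}$ with labels computed from the old $\widehat{\bzeta}^{\cK}_{\bX}$, handled by Lemma~\ref{lem:replacement-stability} with label bound $M_{\bzeta}/\lambda$;
\item linearity in the labels plus $\lambda\|\bg\|_{\cH_{\cK}}^2\le n^{-1}\sum_i\|\Delta\by_i\|_2^2$ with $\|\Delta\by_i\|_2\le 2M_{\bzeta}/(\lambda n)$, for the global label shift;
\item $\lambda^{-1}\cdot 2M_{\bzeta}/n$ for the divergence part.
\end{itemize}
This gives
\begin{align*}
\left\|\widehat{\bs}^{\cK,\lambda}_{\bX}(\bx)-\widehat{\bs}^{\cK,\lambda}_{\bX^{j}}(\bx)\right\|_2\le\frac{1}{\lambda}\left\{\frac{2\left(1+\frac{\kappa}{\sqrt{\lambda}}\right)\kappa^2}{\lambda}+\frac{2\kappa}{\sqrt{\lambda}}+2\right\}\frac{M_{\bzeta}}{n}.
\end{align*}
Your fallback to $\lambda\ge 1$ does not recover the stated constant either: at $\lambda=1$ this bound exceeds it by $2\kappa M_{\bzeta}/n$. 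It is also useless for Theorem~\ref{thm:kernel}, where $\lambda\to0$.

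In fact no argument can give the stated bound in general, because it is false for suitable $\lambda<1$. Take the curl-free IMQ kernel. Here $\kappa=1/c$, and $\mathrm{div}_{\bx}\cK_{\mathrm{CI}}(\bx,\by)^\top$ vanishes at $\bx=\by$ and decays as $\|\bx-\by\|_2\to\infty$.
\begin{itemize}
\item Place $\bx_1,\dots,\bx_n$ and $\bx_j'$ mutually far apart. Then every pseudo-label $\widehat{\bzeta}^{\cK}_{\bX}(\bx_i)$ tends to $0$.
\item Hence $\bg^{\cK,\lambda}_{\cT_{\cK,\bX}}\to 0$ uniformly and $\widehat{\bs}^{\cK,\lambda}_{\bX}\approx-\lambda^{-1}\widehat{\bzeta}^{\cK}_{\bX}$; likewise for $\bX^{j}$.
\item At a test point $\bx$ chosen so that $\|\mathrm{div}_{\bx}\cK_{\mathrm{CI}}(\bx,\bx_j)^\top\|_2=M_{\bzeta}$, the left-hand side tends to $M_{\bzeta}/(\lambda n)$.
\end{itemize}
This exceeds the claimed bound whenever $1>2\kappa^2+2\kappa^3\lambda^{-1/2}+2\lambda$, for example $c=10$, $\lambda=0.1$, where the ratio is about $10$ versus $2.26$.

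So your plan should end with the displayed bound, or one of the same order, stated as such. The extra powers of $\lambda^{-1}$ then need to be propagated into $C_{\mathrm{rep}}$ in the proof of Theorem~\ref{thm:kernel}.
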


Let $k_{\mathrm{I}}(\bx,\by) \coloneqq \left(1 + \|\bx-\by\|_2^2/c^2\right)^{-1/2}$, $c>0$, be the scalar IMQ kernel on $\mathbb{R}^p$.

\begin{lemma}\label{lem:app-m3}
All third-order mixed partial derivatives of $k_{\mathrm{I}}$ are globally bounded. Specifically, the following holds. 
\begin{align*}
M_3 \coloneqq \sup_{\bx,\by\in\mathbb{R}^p} \max_{i,j,k} \left| \frac{\partial^3}{\partial x_i \partial x_k \partial y_j} k_1(\bx,\by) \right| < \infty.
\end{align*}
\end{lemma}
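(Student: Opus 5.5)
The statement to prove is Lemma~\ref{lem:app-m3}: every third-order mixed partial of the IMQ kernel $k_{\mathrm{I}}(\bx,\by)=(1+\|\bx-\by\|_2^2/c^2)^{-1/2}$ is bounded uniformly over $\mathbb{R}^p\times\mathbb{R}^p$. The plan is to reduce everything to a radial profile and then bound a finite list of explicit terms.

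\textbf{Step 1: reduce to one variable.} Write $\bu=\bx-\by$ and $k_{\mathrm{I}}(\bx,\by)=\phi(\|\bu\|_2^2)$ with $\phi(t)=(1+t/c^2)^{-1/2}$ for $t\ge 0$. Since $\partial/\partial y_j=-\partial/\partial u_j$, each mixed partial $\partial^3/\partial x_i\partial x_k\partial y_j$ equals $-\partial^3 \phi(\|\bu\|^2)/\partial u_i\partial u_k\partial u_j$. It therefore suffices to bound all third-order partials of $\psi(\bu)=\phi(\|\bu\|^2)$ uniformly in $\bu\in\mathbb{R}^p$.

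\textbf{Step 2: expand by the chain rule.} With $t=\|\bu\|^2$, we have $\partial_{u_i}t=2u_i$, $\partial_{u_i}\partial_{u_k}t=2\delta_{ik}$, and all higher derivatives of $t$ vanish. The chain rule then gives
\begin{align*}
\partial_{u_i}\partial_{u_k}\partial_{u_j}\psi
= 8\phi'''(t)\,u_iu_ku_j + 4\phi''(t)\,(\delta_{ik}u_j+\delta_{ij}u_k+\delta_{kj}u_i).
\end{align*}
The $\phi'$ terms drop out because $t$ is quadratic.

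\textbf{Step 3: bound each term.} Direct differentiation gives $\phi^{(m)}(t)=a_m c^{-2m}(1+t/c^2)^{-1/2-m}$, where $a_m$ depends only on $m$ (for example $a_2=3/4$ and $a_3=-15/8$). Using $|u_i|\le\|\bu\|=\sqrt t$, the two terms are controlled as follows:
\begin{align*}
|\phi'''(t)|\,|u_iu_ku_j| &\le C\,c^{-6}\,t^{3/2}(1+t/c^2)^{-7/2},\\
|\phi''(t)|\,|u_j| &\le C\,c^{-4}\,t^{1/2}(1+t/c^2)^{-5/2}.
\end{align*}
Each right-hand side is continuous on $[0,\infty)$. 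As $t\to\infty$, the first behaves like $t^{3/2-7/2}=t^{-2}\to0$ and the second like $t^{1/2-5/2}=t^{-2}\to0$. Each is thus a continuous function on $[0,\infty)$ that tends to zero at infinity, so it is bounded. The substitution $s=t/c^2$ shows the bound is $C'c^{-3}$ with $C'$ absolute.

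The result is $M_3\le 8\cdot C_1c^{-3}+12\cdot C_2c^{-3}<\infty$, independent of $p$ because the bound is entrywise, uniformly over the indices $i,j,k$.

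There is no real obstacle here. The only step needing care is the chain-rule bookkeeping in Step 2. In particular, one must check that the coincident-index cases ($i=k$, $i=j$, or all equal) are covered by the Kronecker-delta terms, so that no extra terms are missed.
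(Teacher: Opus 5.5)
Your proof is correct. It shares the paper's overall strategy: use $\partial_{y_j}=-\partial_{u_j}$ to reduce to third partials of a radial function, then show these are globally bounded. Your choice of radial variable, however, changes how the argument runs.

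The paper works with the profile $\phi(r)$ in the radius $r=\|\br\|_2$. In that coordinate the third partials contain pieces such as $\phi''(r)/r^2$ and $\phi'(r)/r^3$, which look singular at the origin. The paper therefore has to handle two regimes separately. It shows decay as $r\to\infty$. It then uses a Taylor expansion of $\phi$ in powers of $r^2$ to show that the singular pieces cancel exactly as $r\to 0$. Only after that does it conclude from continuity on $\mathbb{R}^p\setminus\{\boldsymbol{0}\}$.

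By parametrizing with $t=\|\boldsymbol{u}\|_2^2$, you use from the outset the fact (which the paper invokes only near $r=0$) that the IMQ profile is a smooth function of $r^2$. Your $\phi(t)=(1+t/c^2)^{-1/2}$ is smooth on $[0,\infty)$, and all derivatives of $t$ beyond the second vanish. The resulting formula $8\phi'''(t)u_iu_ku_j+4\phi''(t)(\delta_{ik}u_j+\delta_{ij}u_k+\delta_{kj}u_i)$ has no apparent singularity, so no cancellation bookkeeping is needed.

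Your route also gives an explicit bound, $M_3\le c^{-3}\{15\sup_{s\ge 0}s^{3/2}(1+s)^{-7/2}+9\sup_{s\ge 0}s^{1/2}(1+s)^{-5/2}\}$. This makes the $c^{-3}$ scaling and the independence from $p$ transparent. The paper only asserts finiteness, yet Lemma~\ref{lem:kef-prop} treats $M_3$ as dimension-free and tracks the $p$-dependence separately through the factor $p^{3/2}$.

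The paper's $r$-based template carries over to any smooth radial $\phi$. For the IMQ kernel, your route is shorter and fully quantitative.
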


For $\phi(r): [0,\infty) \to \mathbb{R}$, a twice continuously differentiable radial function, the curl-free matrix-valued kernel is defined by
\begin{align*}
\mathcal{K}_{\mathrm{cf}}(\bx, \by) \coloneqq -\nabla_{\bx}^2 \phi(\|\bx - \by\|_2), 
\qquad \bx, \by \in \mathbb{R}^p.
\end{align*}

We take
$\phi(r) = \left(1 + r^2/c^2\right)^{-1/2} 
$, the corresponding scalar kernel becomes the IMQ kernel, $k_{\mathrm{I}}(\bx,\by)$.
We name the matrix-valued kernel curl-free IMQ  kernel, denoted by $\cK_{\mathrm{CI}}$.

\begin{lemma}\label{lem:app-bd-general}
Let $M^{\mathrm{CI}}_{\bzeta} = \sup_{\bx, \by\in\mathbb{R}^p} \left\| \mathrm{div}_{\bx} \mathcal{K}_{\mathrm{CI}}(\bx, \by)^\top \right\|_2$, then, for $p \ge 3$, $M^{\mathrm{CI}}_{\bzeta} \asymp p$.
\end{lemma}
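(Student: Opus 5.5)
The plan is to compute the divergence of the curl-free IMQ kernel in closed form and then bound the resulting scalar function of $\|\bx-\by\|_2$ from above and below. Set $\bu=\bx-\by$ and $t=\|\bu\|_2^2$. Write $\phi(\|\bu\|_2)=\psi(t)$ with $\psi(t)=(1+t/c^2)^{-1/2}$. By the chain rule, $\nabla_{\bx}\phi=2\psi'(t)\bu$ and
\begin{align*}
\nabla_{\bx}^2\phi = 2\psi'(t)\,\mathbf{I}_p + 4\psi''(t)\,\bu\bu^\top,
\end{align*}
so that $\mathcal{K}_{\mathrm{CI}}(\bx,\by)=-2\psi'(t)\mathbf{I}_p-4\psi''(t)\bu\bu^\top$. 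This matrix is symmetric, so the transpose plays no role.

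Next I take the divergence in $\bx$ of the $i$-th column, i.e.\ $\sum_k \partial_{x_k}\mathcal{K}_{ki}$. Using $\partial_{x_k} t=2u_k$, the identity part contributes $4\psi''u_i$. The rank-one part contributes $8\psi'''\,t\,u_i+4\psi''(p+1)u_i$. Hence
\begin{align*}
\mathrm{div}_{\bx}\mathcal{K}_{\mathrm{CI}}(\bx,\by)^\top = -\{4(p+2)\psi''(t)+8t\psi'''(t)\}\,\bu .
\end{align*}
Its Euclidean norm is $|4(p+2)\psi''(t)+8t\psi'''(t)|\sqrt t$. Substituting $\psi''(t)=\tfrac34 c^{-4}(1+s)^{-5/2}$ and $\psi'''(t)=-\tfrac{15}{8}c^{-6}(1+s)^{-7/2}$, where $s=t/c^2$, gives
\begin{align*}
\left\|\mathrm{div}_{\bx}\mathcal{K}_{\mathrm{CI}}(\bx,\by)^\top\right\|_2 = c^{-3}\sqrt{s}\,(1+s)^{-7/2}\{3(p+2)+(3p-9)s\}.
\end{align*}
The bracket is obtained as $3(p+2)(1+s)-15s$. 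Carrying out this algebra carefully, in particular the sign of the $\psi'''$ term, is the only real obstacle.

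For $p\ge3$ the coefficient $3p-9$ is nonnegative, so the bracket is positive and no cancellation can occur. This is exactly where the hypothesis $p\ge 3$ enters.

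For the upper bound, note that $\sqrt s(1+s)^{-7/2}$ and $s^{3/2}(1+s)^{-7/2}$ are bounded on $[0,\infty)$ by absolute constants. This gives $M^{\mathrm{CI}}_{\bzeta}\le C c^{-3}(p+2)\le C' p$.

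For the lower bound, it suffices to evaluate at one point. Choosing $\|\bx-\by\|_2=c$, i.e.\ $s=1$, gives $M^{\mathrm{CI}}_{\bzeta}\ge c^{-3}2^{-7/2}\{3(p+2)+3p-9\}\ge c'' p$.

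Together the two bounds give $M^{\mathrm{CI}}_{\bzeta}\asymp p$, with constants depending only on the fixed bandwidth $c$.
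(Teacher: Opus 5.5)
Your proposal is correct and essentially matches the paper's proof. You arrive at the same closed form $c^{-3}\sqrt{s}(1+s)^{-7/2}\{3(p+2)+(3p-9)s\}$, which is the paper's $\frac{3}{c^3}u\{(p+2)+(p-3)u^2\}(1+u^2)^{-7/2}$ with $u^2=s$, bound it above via boundedness of the $s$-profiles, and bound it below by evaluating at $\|\bx-\by\|_2=c$, exactly as the paper does; the only difference is computational: you differentiate $\psi(\|\bx-\by\|_2^2)$ directly, whereas the paper uses $\mathrm{div}_{\bx}\mathcal{K}(\bx,\by)^\top=-\nabla_{\bx}\Delta\phi$ with the radial Laplacian, so your route also avoids that formula's $1/r$ singularity at $\bx=\by$.
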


\begin{lemma}\label{lem:app-opb}
The operator norm $\|\mathcal{K}_{\mathrm{CI}}(\bx, \bx)\|_{\mathrm{op}}$ is uniformly bounded for all $\bx\in \cX$, and moreover,
\begin{align*}
\sup_{\bx\in\cX} \|\mathcal{K}_{\mathrm{CI}}(\bx, \bx)\|_{\mathrm{op}} = \frac{1}{c^2}.
\end{align*}
\end{lemma}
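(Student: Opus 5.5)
The plan is to compute $\mathcal{K}_{\mathrm{CI}}(\bx,\bx)$ explicitly from the definition $\mathcal{K}_{\mathrm{cf}}(\bx,\by) = -\nabla_{\bx}^2 \phi(\|\bx-\by\|_2)$ with $\phi(r) = (1+r^2/c^2)^{-1/2}$. Write $\bu = \bx-\by$ and $g(\bu) = (1+\|\bu\|_2^2/c^2)^{-1/2}$. Then $\mathcal{K}_{\mathrm{CI}}(\bx,\by) = -\nabla^2 g(\bu)$, and on the diagonal $\mathcal{K}_{\mathrm{CI}}(\bx,\bx) = -\nabla^2 g(\mathbf{0})$. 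This matrix does not depend on $\bx$, so the supremum over $\cX$ is trivially attained and equals its operator norm.

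First compute the gradient:
\begin{align*}
\nabla g(\bu) = -\frac{1}{c^2}\left(1+\frac{\|\bu\|_2^2}{c^2}\right)^{-3/2}\bu .
\end{align*}
Differentiating once more gives the Hessian:
\begin{align*}
\nabla^2 g(\bu) = -\frac{1}{c^2}\left(1+\frac{\|\bu\|_2^2}{c^2}\right)^{-3/2}\mathbf{I}_p + \frac{3}{c^4}\left(1+\frac{\|\bu\|_2^2}{c^2}\right)^{-5/2}\bu\bu^\top .
\end{align*}
At $\bu=\mathbf{0}$ the rank-one term vanishes, so $\mathcal{K}_{\mathrm{CI}}(\bx,\bx) = c^{-2}\mathbf{I}_p$. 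Its operator norm is exactly $1/c^2$ for every $\bx$, which gives both uniform boundedness and the equality $\sup_{\bx}\|\mathcal{K}_{\mathrm{CI}}(\bx,\bx)\|_{\mathrm{op}} = 1/c^2$.

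There is no real obstacle; the only point needing care is the sign convention. The kernel carries the minus sign, and the Hessian of the IMQ function at the origin is negative definite, so $\mathcal{K}_{\mathrm{CI}}(\bx,\bx)$ is positive definite. This is consistent with Definition~\ref{def:operator_kernel}. Also, whether $\nabla^2_{\bx}$ is read as the Hessian in $\bx$ alone or as the mixed derivative $-\nabla_{\bx}\nabla_{\by}^\top$ does not matter here: both yield $\pm\nabla^2 g(\bx-\by)$, and the diagonal value, hence its operator norm, is unaffected. As a by-product, the computation confirms that one may take $\kappa^2 = 1/c^2$ in Lemmas~\ref{lem:replacement-stability} and~\ref{lem:kef-replacement}.
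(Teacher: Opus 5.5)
Your proof is correct and follows the paper's route. Both arguments derive the explicit form $\mathcal{K}_{\mathrm{CI}}(\bx,\by) = c^{-2}(1+r^2/c^2)^{-3/2}\mathbf{I}_p - 3c^{-4}(1+r^2/c^2)^{-5/2}\br\br^\top$ and read off $\mathcal{K}_{\mathrm{CI}}(\bx,\bx) = c^{-2}\mathbf{I}_p$; the only difference is that you differentiate $g(\mathbf{u})$ directly in Cartesian coordinates and evaluate at $\mathbf{u}=\mathbf{0}$, whereas the paper substitutes into the radial formula of \citet{zhou2020}, whose $1/r$ and $1/r^3$ factors force a continuity argument as $\by\to\bx$ that your version avoids.
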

In the following, let $\kappa_{\mathrm{CI}}$ denote $1/c$.

\begin{lemma}[Pointwise boundedness of the TRE estimator]
\label{lem:kef-prop}
Let $\widehat{\bs}^{\cK_{\mathrm{CI}},\lambda}_{\bX}$ be the TRE with curl-free IMQ  kernel trained on a fixed dataset $\bX = \{\bx_i\}_{i=1}^n$ with regularization parameter $\lambda>0$. Then, for any test point $\bx \in \cX$,
\begin{align*}
\|\widehat{\bs}^{\cK_{\mathrm{CI}},\lambda}_{\bX}(\bx)\|_2 \le \left( 1 + \kappa_{\mathrm{CI}}\right) \frac{M^{\mathrm{CI}}_{\bzeta}}{\lambda}.
\end{align*}
Moreover, for any $\bx,\bx'\in \cX$,
\begin{align*}
\left \|\widehat{\bs}^{\cK_{\mathrm{CI}},\lambda}_{\bX}(\bx) - \widehat{\bs}^{\cK_{\mathrm{CI}},\lambda}_{\bX}(\bx') \right \|_2
\le \left(\frac{M_3 M^{\mathrm{CI}}_{\bzeta}}{\lambda^2} + \frac{M_3}{\lambda}\right) p^{\frac{3}{2}} \|\bx - \bx'\|_2.
\end{align*}
\end{lemma}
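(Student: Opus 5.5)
The plan is to use the decomposition from Theorem 3.1 of \citet{zhou2020} quoted above,
\[
\widehat{\bs}^{\cK_{\mathrm{CI}},\lambda}_{\bX}(\bx)=\bg^{\cK_{\mathrm{CI}},\lambda}_{\cT_{\cK,\bX}}(\bx)-\tfrac{1}{\lambda}\widehat{\bzeta}^{\cK_{\mathrm{CI}}}_{\bX}(\bx),
\]
and to bound the divergence part and the KRRE part separately, both for the sup bound and for the Lipschitz bound.

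\textbf{Step 1: the divergence term.} Since $\widehat{\bzeta}^{\cK_{\mathrm{CI}}}_{\bX}$ is an average of $n$ vectors $\mathrm{div}_{\bx}\cK_{\mathrm{CI}}(\bx,\bx_j)^\top$, each of Euclidean norm at most $M^{\mathrm{CI}}_{\bzeta}$ (Lemma~\ref{lem:app-bd-general}), we get $\|\widehat{\bzeta}^{\cK_{\mathrm{CI}}}_{\bX}(\bx)\|_2\le M^{\mathrm{CI}}_{\bzeta}$ uniformly in $\bx$. Hence $\|\lambda^{-1}\widehat{\bzeta}^{\cK_{\mathrm{CI}}}_{\bX}(\bx)\|_2\le M^{\mathrm{CI}}_{\bzeta}/\lambda$. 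This also shows that the KRRE training targets $(1/\lambda)\widehat{\bzeta}(\bx_i)$ have norm at most $M_{\by}:=M^{\mathrm{CI}}_{\bzeta}/\lambda$.

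\textbf{Step 2: the KRRE term.} I write the KRRE in operator form, $\bg=(S^*S+\lambda I)^{-1}S^*\by$, where $S$ is the (normalized) sampling operator. Then I bound $\bg(\bx)=K_{\bx}^*\bg$ using Lemma~\ref{lem:app-rp} and Lemma~\ref{lem:app-opb}, which give $\kappa_{\mathrm{CI}}=1/c$. The aim is $\|\bg(\bx)\|_2\le\kappa_{\mathrm{CI}}M_{\by}$, which yields the stated constant $(1+\kappa_{\mathrm{CI}})M^{\mathrm{CI}}_{\bzeta}/\lambda$.

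This is the step I expect to be the main obstacle. The naive route compares the objective with $\boldsymbol f=0$, giving $\|\bg\|_{\cH}\le M_{\by}/\sqrt\lambda$ and hence $\|\bg(\bx)\|_2\le\kappa M_{\by}/\sqrt\lambda$. That carries an extra factor $\lambda^{-1/2}$. I would first try to remove it using the factorization $\|(S^*S+\lambda)^{-1/2}S^*\|\le1$ together with a bound on $\|K_{\bx}^*(S^*S+\lambda)^{-1/2}\|$. If the extra $\lambda^{-1/2}$ cannot be removed, I would state the bound in the regime $\lambda\le1$, or absorb the factor into the constant, and note that this does not affect the later asymptotics.

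\textbf{Step 3: Lipschitz bound.} Again I split into the two terms.

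For the divergence term, each component of $\mathrm{div}_{\bx}\cK_{\mathrm{CI}}(\bx,\bx_j)^\top$ is a sum of $p$ mixed derivatives of the IMQ kernel. Applying the mean value theorem componentwise with the uniform derivative bound $M_3$ from Lemma~\ref{lem:app-m3}, each component is Lipschitz with constant of order $pM_3$ in $\ell_\infty$. Collecting the $p$ components in $\ell_2$ gives at most $M_3p^{3/2}\|\bx-\bx'\|_2$. Dividing by $\lambda$ gives the $M_3p^{3/2}/\lambda$ term.

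For the KRRE term, I use the representer form $\bg(\bx)=\sum_i\cK_{\mathrm{CI}}(\bx,\bx_i)\bc_i$, or equivalently $\bg(\bx)-\bg(\bx')=(K_{\bx}-K_{\bx'})^*\bg$. The entries of $\cK_{\mathrm{CI}}$ are second derivatives of $k_{\mathrm I}$, so their $\bx$-gradients are third derivatives bounded by $M_3$. This gives $\|K_{\bx}-K_{\bx'}\|\lesssim M_3p^{3/2}\|\bx-\bx'\|_2$. Multiplying by the size of $\bg$ obtained in Step 2, which is of order $M_{\by}=M^{\mathrm{CI}}_{\bzeta}/\lambda$, gives the $M_3M^{\mathrm{CI}}_{\bzeta}p^{3/2}/\lambda^2$ term.

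Adding the two Lipschitz estimates produces exactly the claimed constant $\bigl(M_3M^{\mathrm{CI}}_{\bzeta}/\lambda^2+M_3/\lambda\bigr)p^{3/2}$. As in Step 2, the delicate point is making the bound on $\bg$ scale as $1/\lambda$ and not $\lambda^{-3/2}$, and the same fallback applies.
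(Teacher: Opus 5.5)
Your architecture matches the paper's: the decomposition $\widehat{\bs}=\bg-\lambda^{-1}\widehat{\bzeta}$, separate bounds for the two pieces, and the representer form for the Lipschitz part. As written, however, the proposal does not establish either inequality as stated.

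\textbf{Sup bound.} Your Step~2 diagnosis is correct, and it exposes a slip in the paper. The paper gets $\|\bg\|_{\cH_{\cK}}\le M^{\mathrm{CI}}_{\bzeta}/\lambda$ from the same comparison with $\boldsymbol{f}=\boldsymbol{0}$ that you call naive. But the objective at $\boldsymbol{f}=\boldsymbol{0}$ is $\frac{1}{n\lambda^{2}}\sum_i\|\widehat{\bzeta}^{\cK_{\mathrm{CI}}}_{\bX}(\bx_i)\|_2^2$, not $\frac{1}{n\lambda}\sum_i\|\widehat{\bzeta}^{\cK_{\mathrm{CI}}}_{\bX}(\bx_i)\|_2^2$, so that argument yields only $\kappa_{\mathrm{CI}}M^{\mathrm{CI}}_{\bzeta}/\lambda^{3/2}$.

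Your repairs do not close the gap. In the factorization, $\|K_{\bx}^*(S^*S+\lambda)^{-1/2}\|$ is only bounded by $\kappa_{\mathrm{CI}}/\sqrt{\lambda}$, and this is attained for $\bx$ away from the design, so you recover the same bound. Moreover, no argument using only $\|\by_i\|_2\le M_{\by}$ can do better: two design points at spacing of order $\sqrt{\lambda}$ with opposite targets already give an off-sample KRR fit of order $M_{\by}/\sqrt{\lambda}$. Restricting to $\lambda\le 1$ goes the wrong way, since $\lambda^{-3/2}\ge\lambda^{-1}$ there. An unbounded factor $\lambda^{-1/2}$ also cannot be absorbed into a constant in a bound that is applied with $\lambda\to0$.

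The missing idea is the structure of the targets. They are values of $\lambda^{-1}\widehat{\bzeta}^{\cK_{\mathrm{CI}}}_{\bX}$, and $\widehat{\bzeta}^{\cK_{\mathrm{CI}}}_{\bX}$ is itself an element of $\cH_{\cK}$ (an average of derivative sections of the kernel). For $h\in\cH_{\cK}$, the KRRE on targets $h(\bx_i)$ equals $(\widehat{L}+\lambda I)^{-1}\widehat{L}h$, where $\widehat{L}\succeq 0$ is the empirical covariance operator. Hence $\|\bg\|_{\cH_{\cK}}\le\|\widehat{\bzeta}^{\cK_{\mathrm{CI}}}_{\bX}\|_{\cH_{\cK}}/\lambda$. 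This has the right order in $\lambda$. The constant, however, becomes $\kappa_{\mathrm{CI}}\|\widehat{\bzeta}^{\cK_{\mathrm{CI}}}_{\bX}\|_{\cH_{\cK}}$, which is governed by fourth derivatives of the IMQ (still of order $p$ for fixed $c$), not by $M^{\mathrm{CI}}_{\bzeta}$.

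\textbf{Lipschitz bound.} For the KRRE term you never identify what multiplies the kernel's Lipschitz constant, and the bookkeeping does not give $\lambda^{-2}$: $M_3p^{3/2}\cdot M^{\mathrm{CI}}_{\bzeta}/\lambda$ is only $M_3M^{\mathrm{CI}}_{\bzeta}p^{3/2}/\lambda$. Your two ``equivalent'' forms also lead to different constants. $M_3$ controls $\cK_{\mathrm{CI}}(\bx,\bx_i)-\cK_{\mathrm{CI}}(\bx',\bx_i)$ for a fixed second argument, which is what the representer route needs. The $\cH_{\cK}$-operator norm of $K_{\bx}-K_{\bx'}$, by contrast, involves mixed fourth derivatives.

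The paper's key step, absent from your plan, is the coefficient bound from the normal equations $(\bK+n\lambda\bI)\boldsymbol{c}=\lambda^{-1}\boldsymbol{h}$ with $\|\boldsymbol{h}\|_2\le\sqrt{n}M^{\mathrm{CI}}_{\bzeta}$. Since $\bK\succeq 0$, this gives $\|\boldsymbol{c}\|_2\le M^{\mathrm{CI}}_{\bzeta}/(\sqrt{n}\lambda^2)$. Cauchy--Schwarz over $i$, combined with the entrywise $M_3$ bound, then gives exactly $p^{3/2}M_3M^{\mathrm{CI}}_{\bzeta}\lambda^{-2}\|\bx-\bx'\|_2$. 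That step is sound and never passes through $\|\bg\|_{\cH_{\cK}}$, so your worry that it inherits the $\lambda^{-1/2}$ problem is misplaced.

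Finally, in the divergence term you do what the paper does, and both are off. The components of $\mathrm{div}_{\bx}\cK_{\mathrm{CI}}(\bx,\by)^\top=-\nabla_{\bx}\Delta\phi$ are already third derivatives of $k_{\mathrm{I}}$, so the mean value theorem needs a bound on fourth derivatives. $M_3$ bounds the components themselves, not their gradients. The IMQ has bounded fourth derivatives, but that constant, not $M_3$, belongs in the $1/\lambda$ term.
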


\subsubsection{Proof of Theorem~\ref{thm:kernel}}
\begin{proof}

We now establish the main convergence result for the Tikhonov regularized cross-moment estimator with the curl-free IMQ kernel, $\widehat{\bM}^{\cK_{\mathrm{CI}},\lambda}_{\ell,u}$. 
Recall the definition
\begin{align*}
\bM \coloneqq \EE\left\{\bs(\bx)\by^\top\right\}.
\end{align*}
By the triangle inequality,
\begin{align}
\|\widehat{\bM}^{\cK_{\mathrm{CI}},\lambda}_{\ell,u} - \bM^*\|_\mathrm{F}
\le
\underbrace{\|\widehat{\bM}^{\cK_{\mathrm{CI}},\lambda}_{\ell,u} - \EE(\widehat{\bM}^{\cK_{\mathrm{CI}},\lambda}_{\ell,u})\|_\mathrm{F}}_{\circled{1}}
+
\underbrace{\|\EE(\widehat{\bM}^{\cK_{\mathrm{CI}},\lambda}_{\ell,u}) - \bM^*\|_\mathrm{F}}_{\circled{2}}.    
\end{align}

We control the two terms respectively. First, we deal with 
$\circled{1}$. Considering the good event defined in Lemma~\ref{lem:app-ge}
\begin{align*}
\mathcal{G} \coloneqq \left\{
\max_{1\le i\le n} \|\bx_i\|_2 \le M_{\bx} \text{, and }
\max_{1\le j\le n_\ell} \|\by_j\|_2 \le M_{\by}
\right\},
\end{align*}
with
\begin{align*}
M_{\bx} \coloneqq K \sqrt{\frac{p}{c} \ln\left(\frac{4 n^4 p}{\rho\delta}\right)}\text{, and }
M_{\by} \coloneqq K \sqrt{\frac{q}{c} \ln\left(\frac{4 n_\ell^4 q}{\rho\delta}\right)},
\end{align*}
we obtain $\PP(\mathcal{G}^c) \le \rho\delta$. 

Define the projection operators $\mathcal{P}_{\bx}:\mathbb{R}^p\to\mathbb{R}^p$ and $\mathcal{P}_{\by}:\mathbb{R}^{q}\to\mathbb{R}^{q}$:
\begin{align*}
\mathcal{P}_{\bx}(\bx)\coloneqq
\begin{cases}
\bx, & \|\bx\|_2\le M_{\bx},\\[6pt]
M_{\bx}\dfrac{\bx}{\|\bx\|_2}, & \|\bx\|_2>M_{\bx},
\end{cases}
\end{align*}
and 
\begin{align*}
\mathcal{P}_{\by}(\by)\coloneqq
\begin{cases}
\by, & \|\by\|_2\le M_{\by},\\[6pt]
M_{\by}\dfrac{\by}{\|\by\|_2}, & \|\by\|_2>M_{\by}.
\end{cases}
\end{align*}

Both projection operators are 1-Lipschitz.

For the training set $\cT_{\ell,u}$, define the dataset-level projection operator $\Pi$ as
\begin{align*}
 \Pi(\cT_{\ell,u})\coloneqq [\left\{ \mathcal P_{\bx}(\bx_i), \mathcal P_{\by}(\by_i) \right\}]_{i=1}^{n_\ell} \cup \left\{ \mathcal P_{\bx}(\bx_i) \right\}_{i=n_\ell+1}^{n}.
\end{align*}
For notational convenience, let $\widetilde{\bx}_i$ denote $ \mathcal P_{\bx}(\bx_i)$, $\widetilde{\by}_i $ denote $ \mathcal P_{\by}(\by_i)$, and $\widetilde{\cT}_{\ell,u} $ denote $ \Pi(\cT_{\ell,u})$.

Define the projected estimator
\begin{align}
\label{eq:proxy-est}
\widehat{\bM}^{\cK_{\mathrm{CI}},\lambda}_{\ell,u} \circ \Pi(\cT_{\ell,u})
\coloneqq
\frac{1}{n_\ell}\sum_{i=1}^{n_\ell}
\widehat{\bs}^{\cK_{\mathrm{CI}},\lambda}_{\widetilde{\mathcal{T}}_{\ell,u}}(\widetilde{\bx}_i)\,\widetilde{\by}_i^\top.
\end{align}
On the good event $\mathcal G$, we have $\widehat{\bM}^{\cK_{\mathrm{CI}},\lambda}_{\ell,u} \circ \Pi(\cT_{\ell,u}) = \widehat{\bM}^{\cK_{\mathrm{CI}},\lambda}_{\ell,u}(\cT_{\ell,u})$.

Let $\cT_{\ell,u}$ and $\cT_{\ell,u}'$ differ in exactly one sample. We consider two cases.

\noindent Case 1: Replacing an unlabeled sample.\\
Suppose the replaced sample is $\bx_j$ with $j \in [n]\setminus[n_\ell]$. All labeled pairs remain unchanged. By Lemma~\ref{lem:kef-replacement}, for every $i \in [n_\ell]$,
\begin{align}
\left\| \widehat{\bs}^{\cK_{\mathrm{CI}},\lambda}_{\widetilde{\mathcal{T}}_{\ell,u}}(\widetilde{\bx}_i)
-
\widehat{\bs}^{\cK_{\mathrm{CI}},\lambda}_{\widetilde{\mathcal{T}}_{\ell,u}'}(\widetilde{\bx}_i)
\right\|_2
\le \frac{C_{\mathrm{rep}}}{n}, \label{eq:rep-unlabeled}
\end{align}
where
\begin{align*}
C_{\mathrm{rep}} =
\left\{
\frac{2\left(1 + \frac{\kappa_{\mathrm{CI}}}{\sqrt{\lambda}} \right)\kappa^2_{\mathrm{CI}}}{\lambda} + 2
\right\} M^{\mathrm{CI}}_{\bzeta}.
\end{align*}
Combining \eqref{eq:proxy-est} and \eqref{eq:rep-unlabeled}, and using $\|\widetilde{\by}_i\|_2 \le M_{\by}$,
\begin{align}
\left\| \widehat{\bM}^{\cK_{\mathrm{CI}},\lambda}_{\ell,u}\circ \Pi(\cT) - \widehat{\bM}^{\cK_{\mathrm{CI}},\lambda}_{\ell,u}\circ \Pi(\cT') \right\|_\mathrm{F}
\le
\frac{C_{\mathrm{rep}} M_{\by}}{n}. \label{eq:case1bd}
\end{align}

\noindent Case 2: Replacing a labeled sample. \\
Let the $j$-th labeled sample $(\bx_j,\by_j)$ be replaced by $(\bx_j',\by_j')$. Using the triangle inequality,
\small{
\begin{align}
& \left\| \widehat{\bM}^{\cK_{\mathrm{CI}},\lambda}_{\ell,u}\circ \Pi(\cT) - \widehat{\bM}^{\cK_{\mathrm{CI}},\lambda}_{\ell,u}\circ \Pi(\cT') \right\|_\mathrm{F} \notag \\
\le &
\underbrace{
\frac{1}{n_\ell}\sum_{i=1}^{n_\ell}
\left\| 
\left\{
\widehat{\bs}^{\cK_{\mathrm{CI}},\lambda}_{\widetilde{\mathcal{T}}_{\ell,u}}(\widetilde{\bx}_i)
-
\widehat{\bs}^{\cK_{\mathrm{CI}},\lambda}_{\widetilde{\mathcal{T}}_{\ell,u}'}(\widetilde{\bx}_i)
\right\}
\widetilde{\by}_i^\top
\right\|_\mathrm{F}
}_{\text{(a) estimator change}} +
\underbrace{
\frac{1}{n_\ell}
\left\| 
\widehat{\bs}^{\cK_{\mathrm{CI}},\lambda}_{\widetilde{\mathcal{T}}_{\ell,u}'}(\widetilde{\bx}_j)
(\widetilde{\by}_j - \widetilde{\by}_j')^\top
\right\|_\mathrm{F}
}_{\text{(b) label change}} +
\underbrace{
\frac{1}{n_\ell}
\left\| 
\left\{
\widehat{\bs}^{\cK_{\mathrm{CI}},\lambda}_{\widetilde{\mathcal{T}}_{\ell,u}'}(\widetilde{\bx}_j)
-
\widehat{\bs}^{\cK_{\mathrm{CI}},\lambda}_{\widetilde{\mathcal{T}}_{\ell,u}'}(\widetilde{\bx}_j')
\right\}
(\widetilde{\by}_j')^\top
\right\|_\mathrm{F}
}_{\text{(c) input change}}.
\label{eq:three-term-decomp}
\end{align}}

By Lemma~\ref{lem:kef-replacement} and $\|\widetilde{\by}_i\|_2 \le M_{\by}$,
\begin{align}\label{eq:term-a-bound}
\text{(a)} \le \frac{C_{\mathrm{rep}} M_{\by}}{n}. 
\end{align}

Since $\widehat{\bs}^{\cK_{\mathrm{CI}},\lambda}_{\widetilde{\mathcal{T}}_{\ell,u}}$ is bounded (Lemma~\ref{lem:kef-prop}) and $\|\widetilde{\by}_j - \widetilde{\by}_j'\|_2 \le 2M_{\by}$,
\begin{align}
\text{(b)} \le \frac{C_{\mathrm{label}}}{n_\ell}, \label{eq:term-b-bound}
\end{align}
where $C_{\mathrm{label}} = 2 \left( 1 + \kappa_{\mathrm{CI}}\right) M^{\mathrm{CI}}_{\bzeta} M_{\by}/\lambda$.

By the Lipschitz property of the score estimator (Lemma~\ref{lem:kef-prop}),
\begin{align}
\text{(c)} \le \frac{C_{\mathrm{input}}}{n_\ell}, \label{eq:term-c-bound}
\end{align}
where $C_{\mathrm{input}} = 2 \left(M_3 M^{\mathrm{CI}}_{\bzeta}/\lambda^2 + M_3/\lambda\right) p^{3/2} M_{\bx}M_{\by}$.

Combining \eqref{eq:three-term-decomp} with \eqref{eq:term-a-bound}, \eqref{eq:term-b-bound}, and \eqref{eq:term-c-bound}, the total change in Case 2 is bounded by
\begin{align}
\frac{C_{\mathrm{rep}} M_{\by}}{n}
+
\frac{C_{\mathrm{label}} + C_{\mathrm{input}}}{n_\ell}. \label{eq:case2-bound}
\end{align}

Define
\begin{align}\label{eq:delta-def}
\Delta \coloneqq 
\frac{C_{\mathrm{rep}} M_{\by}}{n}
+
\frac{C_{\mathrm{label}} + C_{\mathrm{input}}}{n_\ell}. 
\end{align}
Combining \eqref{eq:case1bd} and \eqref{eq:case2-bound}, for any two datasets differing in one sample,
\begin{align}\label{eq:delta-bound}
\left\| \widehat{\bM}^{\cK_{\mathrm{CI}},\lambda}_{\ell,u}\circ \Pi(\cT_{\ell,u}) - \widehat{\bM}^{\cK_{\mathrm{CI}},\lambda}_{\ell,u}\circ \Pi(\cT'_{\ell,u}) \right\|_\mathrm{F} \le \Delta. 
\end{align}

Since $\Delta$ is deterministic and finite, McDiarmid's inequality applies to
\begin{align*}
\widetilde F(\cT_{\ell,u}) \coloneqq \|\widehat{\bM}^{\cK_{\mathrm{CI}},\lambda}_{\ell,u}\circ \Pi(\cT_{\ell,u}) - \EE\{\widehat{\bM}^{\cK_{\mathrm{CI}},\lambda}_{\ell,u}\circ \Pi(\cT_{\ell,u})\}\|_\mathrm{F}.
\end{align*}
For any $\epsilon > 0$,
\begin{align}
\PP\left\{ \widetilde F(\cT_{\ell,u}) \ge \epsilon \right\}
\le 2\exp\left( -\frac{2\epsilon^2}{n\Delta^2} \right). \label{eq:mcdiarmid-proxy}
\end{align}
Define
\begin{align}\label{eq:eps-def}
\epsilon \coloneqq \Delta \sqrt{\frac{n}{2} \ln\frac{8}{\delta}},     
\end{align}
we obtain, with probability at least $1-\delta/4$,
\begin{align*}
\widetilde F(\cT_{\ell,u}) \le \Delta \sqrt{\frac{n}{2} \ln\frac{4}{\delta}}.
\end{align*}

Define
\begin{align*}
F(\cT_{\ell,u}) \coloneqq \|\widehat{\bM}^{\cK_{\mathrm{CI}},\lambda}_{\ell,u}(\cT_{\ell,u}) - \EE\{\widehat{\bM}^{\cK_{\mathrm{CI}},\lambda}_{\ell,u}(\cT_{\ell,u})\}\|_\mathrm{F}
\end{align*}
as the deviation of the original estimator. We now relate $\EE\widehat{\bM}^{\cK_{\mathrm{CI}},\lambda}_{\ell,u}\circ \Pi$ to $\EE\widehat{\bM}^{\cK_{\mathrm{CI}},\lambda}_{\ell,u}$. The difference only appears on the bad event $\mathcal{G}^c$:
\begin{align}\label{eq:app-ed}
\left\|\EE\widehat{\bM}^{\cK_{\mathrm{CI}},\lambda}_{\ell,u}\circ \Pi - \EE\widehat{\bM}^{\cK_{\mathrm{CI}},\lambda}_{\ell,u}\right\|_\mathrm{F}
\le \EE\left\{
\|\widehat{\bM}^{\cK_{\mathrm{CI}},\lambda}_{\ell,u}(\widetilde{\mathcal{T}}_{\ell,u})-\widehat{\bM}^{\cK_{\mathrm{CI}},\lambda}_{\ell,u}(\mathcal{T}_{\ell,u})\|_\mathrm{F}
\cdot \mathbb{I}_{\mathcal{G}^c}
\right\}.
\end{align}

Applying the three-term decomposition to each summand in $\widehat{\bM}^{\cK_{\mathrm{CI}},\lambda}_{\ell,u}$, we obtain
\small{
\begin{align}\label{eq:three-term-bias}
&\|\widehat{\bM}^{\cK_{\mathrm{CI}},\lambda}_{\ell,u}(\widetilde{\mathcal{T}}_{\ell,u})-\widehat{\bM}^{\cK_{\mathrm{CI}},\lambda}_{\ell,u}(\mathcal{T}_{\ell,u})\|_\mathrm{F} \notag \\
\le &
\underbrace{
\frac{1}{n_\ell}\sum_{i=1}^{n_\ell}
\left\| 
\left\{
\widehat{\bs}^{\cK_{\mathrm{CI}},\lambda}_{\widetilde{\mathcal{T}}_{\ell,u}}(\widetilde{\bx}_i)
-
\widehat{\bs}^{\cK_{\mathrm{CI}},\lambda}_{\mathcal{T}_{\ell,u}}(\bx_i)
\right\}
\by_i^\top
\right\|_\mathrm{F}
}_{\text{(i) estimator change}} 
 + 
\underbrace{
\frac{1}{n_\ell}\sum_{i=1}^{n_\ell}
\left\| 
\widehat{\bs}^{\cK_{\mathrm{CI}},\lambda}_{\mathcal{T}_{\ell,u}}(\bx_i)
(\by_i - \widetilde{\by}_i)^\top
\right\|_\mathrm{F}
}_{\text{(ii) label projection error}} 
+ 
\underbrace{
\frac{1}{n_\ell}\sum_{i=1}^{n_\ell}
\left\| 
\left\{
\widehat{\bs}^{\cK_{\mathrm{CI}},\lambda}_{\mathcal{T}_{\ell,u}}(\bx_i)
-
\widehat{\bs}^{\cK_{\mathrm{CI}},\lambda}_{\mathcal{T}_{\ell,u}}(\widetilde{\bx}_i)
\right\}
\widetilde{\by}_i^\top
\right\|_\mathrm{F}
}_{\text{(iii) input projection error}}.
\end{align}}

We focus on Term (i), defined as the expectation of the first term in \eqref{eq:three-term-bias} on the bad event, i.e.,
\begin{align} \label{eq:app-3a}
\text{Term (i)}
&\coloneqq 
\frac{1}{n_\ell} \sum_{i=1}^{n_\ell}
\EE\left[
\left\| 
\left\{ 
\widehat{\bs}^{\cK_{\mathrm{CI}},\lambda}_{\widetilde{\mathcal{T}}_{\ell,u}}(\widetilde{\bx}_i)
-
\widehat{\bs}^{\cK_{\mathrm{CI}},\lambda}_{\mathcal{T}_{\ell,u}}(\bx_i)
\right\} \by_i^\top
\right\|_\mathrm{F}
\cdot \mathbb{I}_{\mathcal{G}^c}
\right]  = 
\frac{1}{n_\ell} \sum_{i=1}^{n_\ell}
\EE\left\{
\left\|
\widehat{\bs}^{\cK_{\mathrm{CI}},\lambda}_{\widetilde{\mathcal{T}}_{\ell,u}}(\widetilde{\bx}_i)
-
\widehat{\bs}^{\cK_{\mathrm{CI}},\lambda}_{\mathcal{T}_{\ell,u}}(\bx_i)
\right\|_2 \cdot \|\by_i\|_2 \cdot \mathbb{I}_{\mathcal{G}^c}
\right\}.
\end{align}

Since $\widetilde{\mathcal{T}}_{\ell,u}$ is obtained from $\mathcal{T}_{\ell,u}$ by projecting every sample, we construct a sequence of datasets $\mathcal{T}_{\ell,u}^{(0)},\; \mathcal{T}_{\ell,u}^{(1)},\; \dots,\; \mathcal{T}_{\ell,u}^{(n)}$, where $\mathcal{T}_{\ell,u}^{(0)} = \mathcal{T}_{\ell,u}$, $\mathcal{T}_{\ell,u}^{(n)} = \widetilde{\mathcal{T}}_{\ell,u}$, and for each step $m=1,\dots,n$, the dataset $\mathcal{T}_{\ell,u}^{(m)}$ is obtained from $\mathcal{T}_{\ell,u}^{(m-1)}$ by replacing the $m$-th sample with its projection:
\begin{align*}
(\bx_m,\by_m) &\mapsto (\widetilde{\bx}_m, \widetilde{\by}_m), && m=1,\dots,n_\ell,\\
\bx_m &\mapsto \widetilde{\bx}_m, && m=n_\ell+1,\dots,n.
\end{align*}
Telescoping the differences,
\begin{align*}
\widehat{\bs}^{\cK_{\mathrm{CI}},\lambda}_{\mathcal{T}_{\ell,u}}(\bx_i)
-
\widehat{\bs}^{\cK_{\mathrm{CI}},\lambda}_{\widetilde{\mathcal{T}}_{\ell,u}}(\bx_i)
=
\sum_{m=1}^n \left\{
\widehat{\bs}^{\cK_{\mathrm{CI}},\lambda}_{\mathcal{T}_{\ell,u}^{(m-1)}}(\bx_i)
-
\widehat{\bs}^{\cK_{\mathrm{CI}},\lambda}_{\mathcal{T}_{\ell,u}^{(m)}}(\bx_i)
\right\}.
\end{align*}
By Lemma~\ref{lem:kef-replacement}, each single-sample replacement changes the estimator by at most $C_{\rm rep}$. Hence,
\begin{align*}
\left\|
\widehat{\bs}^{\cK_{\mathrm{CI}},\lambda}_{\mathcal{T}_{\ell,u}}(\bx_i)
-
\widehat{\bs}^{\cK_{\mathrm{CI}},\lambda}_{\widetilde{\mathcal{T}}_{\ell,u}}(\bx_i)
\right\|_2
\le
n \cdot C_{\rm rep}.
\end{align*}
Using this bound in \eqref{eq:app-3a}, we obtain
\begin{align}\label{eq:app-ta}
\text{Term (i)}
\le 
\frac{n \cdot C_{\rm rep}}{n_\ell} \sum_{i=1}^{n_\ell}
\EE\left( \|\by_i\|_2 \cdot \mathbb{I}_{\mathcal{G}^c} \right). 
\end{align}

The union bound gives
\begin{align*}
\mathbb{I}_{\mathcal{G}^c} \le \sum_{k=1}^n \mathbb{I}_{\{\|\bx_k\|_2 > M_{\bx}\}} + \sum_{j=1}^{n_\ell} \mathbb{I}_{\{\|\by_j\|_2 > M_{\by}\}}.
\end{align*}
Substituting the bound above into \eqref{eq:app-ta} and defining
\begin{align*}
\mathcal{E}_{\bx} \coloneqq \frac{n}{n_\ell} \sum_{i=1}^{n_\ell} \sum_{k=1}^n
\EE\left[ \|\by_i\|_2 \cdot \mathbb{I}_{\{\|\bx_k\|_2 > M_{\bx}\}} \right]\text{, and }
\mathcal{E}_{\by} \coloneqq \frac{n}{n_\ell} \sum_{i=1}^{n_\ell} \sum_{j=1}^{n_\ell}
\EE\left[ \|\by_i\|_2 \cdot \mathbb{I}_{\{\|\by_j\|_2 > M_{\by}\}} \right],
\end{align*}
we obtain
\begin{align}\label{eq:term-i-ex}
\text{Term (i)} \le C_{\rm rep} (\mathcal{E}_{\bx} + \mathcal{E}_{\by}). 
\end{align}

Splitting $\mathcal{E}_{\bx}$ into diagonal ($k=i$) and off-diagonal ($k\neq i$) terms,
\begin{align*}
\mathcal{E}_{\bx} 
= 
\underbrace{
\frac{n}{n_\ell} \sum_{i=1}^{n_\ell}
\EE\left[ \|\by_i\|_2 \cdot \mathbb{I}_{\{\|\bx_i\|_2 > M_{\bx}\}} \right]
}_{\text{diagonal }(k=i)}
+
\underbrace{
\frac{n}{n_\ell} \sum_{i=1}^{n_\ell} \sum_{k\neq i}
\EE\left[ \|\by_i\|_2 \cdot \mathbb{I}_{\{\|\bx_k\|_2 > M_{\bx}\}} \right]
}_{\text{off-diagonal }(k\neq i)}.
\end{align*}

\medskip
By Lemma~\ref{lem:app-ge}, $\PP(\|\bx\|_2 > M_{\bx}) \le \rho\delta/(2n^4)$. By Cauchy-Schwarz inequality,
\begin{align*}
\EE\left[ \|\by_i\|_2 \cdot \mathbb{I}_{\{\|\bx_i\|_2 > M_{\bx}\}} \right]
\le
\sqrt{ \EE\|\by_i\|_2^2 } \cdot
\sqrt{ \PP(\|\bx_i\|_2 > M_{\bx}) }
\le
C \sqrt{q} \cdot \sqrt{\frac{\rho\delta}{2n^4}}
= C \sqrt{\frac{q\rho\delta}{2}} \cdot \frac{1}{n^2},
\end{align*}
where we used $\sqrt{\EE\|\by\|_2^2} \le C \sqrt{q}$ for $\by$ whose coordinates are all sub-Gaussian. Thus the diagonal term is bounded by
\begin{align*}
\frac{n}{n_\ell} \sum_{i=1}^{n_\ell} C \sqrt{\frac{q\rho\delta}{2}} \cdot \frac{1}{n^2}
\le
C \sqrt{\frac{q\rho\delta}{2}} \cdot \frac{1}{n}.
\end{align*}

\medskip
When $k\neq i$, independence gives
\begin{align*}
\EE\left[ \|\by_i\|_2 \cdot \mathbb{I}_{\{\|\bx_k\|_2 > M_{\bx}\}} \right]
= \EE\|\by_i\|_2 \cdot \PP(\|\bx_k\|_2 > M_{\bx})
\le
C \sqrt{q} \cdot \frac{\rho\delta}{2n^4}.
\end{align*}
The number of off-diagonal pairs is at most $n_\ell \cdot n$. Hence,
\begin{align*}
\frac{n}{n_\ell} \sum_{i=1}^{n_\ell} \sum_{k\neq i}
\EE\left[ \|\by_i\|_2 \cdot \mathbb{I}_{\{\|\bx_k\|_2 > M_{\bx}\}} \right]
\le
\frac{n}{n_\ell} \cdot n_\ell n \cdot C \sqrt{q} \frac{\rho\delta}{2n^4}
=
C \sqrt{q} \frac{\rho\delta}{2} \cdot \frac{1}{n^2}.
\end{align*}

Combining the diagonal and off-diagonal bounds,
\begin{align*}
\mathcal{E}_{\bx}
\le
C \sqrt{q} \left( \sqrt{\frac{\rho\delta}{2}} \, \frac{1}{n} + \frac{\rho\delta}{2} \, \frac{1}{n^2} \right).
\end{align*}
Thus,
\begin{align} \label{eq:ex-bound}
\mathcal{E}_{\bx} \le C \sqrt{q} \, \sqrt{\rho\delta} \, \frac{1}{n},    
\end{align}
where $C$ is an absolute constant.\\
Similarly,
\begin{align*}
\mathcal{E}_{\by}
=
\underbrace{
\frac{n}{n_\ell} \sum_{i=1}^{n_\ell}
\EE\left[ \|\by_i\|_2 \cdot \mathbb{I}_{\{\|\by_i\|_2 > M_{\by}\}} \right]
}_{\text{diagonal }(k=i)}
+
\underbrace{
\frac{n}{n_\ell} \sum_{i=1}^{n_\ell} \sum_{k\neq i}
\EE\left[ \|\by_i\|_2 \cdot \mathbb{I}_{\{\|\by_k\|_2 > M_{\by}\}} \right]
}_{\text{off-diagonal }(k\neq i)}.
\end{align*}

By Lemma~\ref{lem:app-ge}, $\PP(\|\by\|_2 > M_{\by}) \le \rho\delta/(2n_\ell^4)$. By Cauchy-Schwarz,
\begin{align*}
\EE\left[ \|\by_i\|_2 \cdot \mathbb{I}_{\{\|\by_i\|_2 > M_{\by}\}} \right]
\le
\sqrt{ \EE\|\by_i\|_2^2 } \cdot
\sqrt{ \PP(\|\by_i\|_2 > M_{\by}) }
\le
C \sqrt{q} \cdot \sqrt{\frac{\rho\delta}{2n_\ell^4}}
= C \sqrt{\frac{q\rho\delta}{2}} \cdot \frac{1}{n^2_\ell}.
\end{align*}
Thus,
\begin{align*}
\frac{n}{n_\ell} \sum_{i=1}^{n_\ell} C \sqrt{\frac{q\rho\delta}{2}} \cdot \frac{1}{n^2_\ell}
\le
C \sqrt{\frac{q\rho\delta}{2}} \cdot \frac{n}{n_\ell^2}.
\end{align*}

When $k\neq i$, independence gives
\begin{align*}
\EE\left[ \|\by_i\|_2 \cdot \mathbb{I}_{\{\|\by_k\|_2 > M_{\by}\}} \right]
= \EE\|\by_i\|_2 \cdot \PP(\|\by_k\|_2 > M_{\by})
\le
C \sqrt{q} \cdot \frac{\rho\delta}{2n_\ell^4}.
\end{align*}
The number of off-diagonal pairs is at most $n_\ell \cdot n$. Hence,
\begin{align*}
\frac{n}{n_\ell} \sum_{i=1}^{n_\ell} \sum_{k\neq i}
\EE\left[ \|\by_i\|_2 \cdot \mathbb{I}_{\{\|\by_k\|_2 > M_{\by}\}} \right]
\le
\frac{n}{n_\ell} \cdot n_\ell n \cdot C \sqrt{q} \frac{\rho\delta}{2n_\ell^4}
=
C \sqrt{q} \frac{\rho\delta}{2} \cdot \frac{n^2}{n_\ell^4}.
\end{align*}

Combining the diagonal and off-diagonal bounds,
\begin{align*}
\mathcal{E}_{\by}
\le
C \sqrt{q} \left( \sqrt{\frac{\rho\delta}{2}} \cdot \frac{n}{n_\ell^2}
+ \frac{\rho\delta}{2} \cdot \frac{n^2}{n_\ell^4} \right).
\end{align*}
Thus,
\begin{align}\label{eq:ey-bound}
\mathcal{E}_{\by} \le C \sqrt{q} \, \sqrt{\rho\delta} \left( \frac{n}{n_\ell^2} + \frac{n^2}{n_\ell^4} \right). 
\end{align}

Combining \eqref{eq:term-i-ex}, \eqref{eq:ex-bound}, and \eqref{eq:ey-bound},
\begin{align*}
\text{Term (i)} \le
C_{\rm rep} \cdot C \sqrt{q} \, \sqrt{\rho\delta}
\left( \frac{1}{n} + \frac{n}{n_\ell^2} + \frac{n^2}{n_\ell^4} \right).
\end{align*}

Recall that $C_{\rm rep} = \left\{ 2(1 + \kappa_{\mathrm{CI}}/\sqrt{\lambda})\kappa_{\mathrm{CI}}^2/\lambda + 2 \right\} M^{\mathrm{CI}}_{\bzeta}$. Moreover, by Lemma~\ref{lem:app-bd-general}, $M^{\mathrm{CI}}_{\bzeta} \asymp p$. Therefore,
\begin{align}\label{eq:term-i-final}
\text{Term (i)}
\le
C \left( \frac{\kappa_{\mathrm{CI}}^2}{\lambda} + \frac{\kappa_{\mathrm{CI}}^2}{\lambda^{3/2}} + 1 \right)
p
\sqrt{q} \, \sqrt{\rho\delta}
\left( \frac{1}{n} + \frac{n}{n_\ell^2} + \frac{n^2}{n_\ell^4} \right) 
\end{align}
where $C$ is an absolute constant. 

Define
\begin{align} \label{eq:term-ii-def}
\text{Term (ii)}
\coloneqq 
\frac{1}{n_\ell} \sum_{i=1}^{n_\ell}
\EE\left\{
\left\| 
\widehat{\bs}^{\cK_{\mathrm{CI}},\lambda}_{\mathcal{T}_{\ell,u}}(\bx_i)
(\by_i - \widetilde{\by}_i)^\top
\right\|_\mathrm{F}
\cdot \mathbb{I}_{\mathcal{G}^c}
\right\} 
\le 
\frac{1}{n_\ell} \sum_{i=1}^{n_\ell}
\EE\left\{
\|\widehat{\bs}^{\cK_{\mathrm{CI}},\lambda}_{\mathcal{T}_{\ell,u}}(\bx_i)\|_2
\cdot \|\by_i - \widetilde{\by}_i\|_2
\cdot \mathbb{I}_{\mathcal{G}^c}
\right\}.
\end{align}

Since $\by_i - \widetilde{\by}_i = 0$ whenever $\|\by_i\|_2 \le M_{\by}$, the global bad event indicator $\mathbb{I}_{\mathcal{G}^c}$ can be replaced by the local indicator $\mathbb{I}_{\{\|\by_i\|_2 > M_{\by}\}}$. On the projected data, the score estimator is uniformly bounded by Lemma~\ref{lem:kef-prop}:
\begin{align*}
\|\widehat{\bs}^{\cK_{\mathrm{CI}},\lambda}_{\mathcal{T}_{\ell,u}}(\bx_i)\|_2 \le C_s,
\end{align*}
where
$
C_s = (1 + \kappa_{\mathrm{CI}}) M^{\mathrm{CI}}_{\bzeta}/\lambda$.

Moreover,
\begin{align*}
\|\by_i - \widetilde{\by}_i\|_2 \le \|\by_i\|_2.
\end{align*}
Substituting the bounds above into \eqref{eq:term-ii-def},
\begin{align}
\text{Term (ii)}
&\le 
\frac{C_s}{n_\ell} \sum_{i=1}^{n_\ell}
\EE\left[
\|\by_i\|_2 \cdot \mathbb{I}_{\{\|\by_i\|_2 > M_{\by}\}}
\right]. \label{eq:term-ii-bound}
\end{align}

By Lemma~\ref{lem:app-ge}, $\PP(\|\by\|_2 > M_{\by}) \le \rho\delta / (2 n_\ell^4)$. Applying Cauchy-Schwarz inequality,
\begin{align*}
\EE\left[ \|\by_i\|_2 \cdot \mathbb{I}_{\{\|\by_i\|_2 > M_{\by}\}} \right]
\le
\sqrt{\EE(\|\by_i\|_2^2)} \cdot
\sqrt{\PP(\|\by_i\|_2 > M_{\by})}.
\end{align*}
Since $\by$ is elementwise sub-Gaussian, $\sqrt{\EE\|\by_i\|_2^2} \le C \sqrt{q}$ for some absolute constant $C$. Therefore,
\begin{align*}
\EE\left[ \|\by_i\|_2 \cdot \mathbb{I}_{\{\|\by_i\|_2 > M_{\by}\}} \right]
\le
C \sqrt{q} \cdot \sqrt{\frac{\rho\delta}{2 n_\ell^4}}
=
C \sqrt{\frac{q \rho\delta}{2}} \cdot \frac{1}{n^2_\ell}.
\end{align*}
Substituting this into \eqref{eq:term-ii-bound},
\begin{align*}
\text{Term (ii)}
\le
\frac{C_s}{n_\ell} \sum_{i=1}^{n_\ell}
C \sqrt{\frac{q \rho\delta}{2}} \cdot \frac{1}{n^2_\ell}
=
C_s \cdot C \sqrt{\frac{q \rho\delta}{2}} \cdot \frac{1}{n^2_\ell}.
\end{align*}
Recalling $C_s = (1 + \kappa_{\mathrm{CI}}) M^{\mathrm{CI}}_{\bzeta}/\lambda$, we obtain
\begin{align}\label{eq:term-ii-final}
\text{Term (ii)}
\le
C \frac{(1 + \kappa_{\mathrm{CI}}) M^{\mathrm{CI}}_{\bzeta}}{\lambda}
\sqrt{q \rho\delta} \cdot \frac{1}{n^2_\ell} \le
C \frac{(1 + \kappa_{\mathrm{CI}}) p}{\lambda}
\sqrt{q \rho\delta} \cdot \frac{1}{n^2_\ell}.   
\end{align}
where $C$ is an absolute constant. 
\medskip
Define
\begin{equation}\label{eq:term-iii-def}
\begin{aligned} 
\text{Term (iii)}
\coloneqq
\frac{1}{n_\ell} \sum_{i=1}^{n_\ell}
\EE\left[
\left\| 
\left\{
\widehat{\bs}^{\cK_{\mathrm{CI}},\lambda}_{\mathcal{T}_{\ell,u}}(\bx_i)
-
\widehat{\bs}^{\cK_{\mathrm{CI}},\lambda}_{\mathcal{T}_{\ell,u}}(\widetilde{\bx}_i)
\right\}
\widetilde{\by}_i^\top
\right\|_\mathrm{F}
\cdot \mathbb{I}_{\mathcal{G}^c}
\right]
\le 
\frac{1}{n_\ell} \sum_{i=1}^{n_\ell}
\EE\left\{
\|\widehat{\bs}^{\cK_{\mathrm{CI}},\lambda}_{\mathcal{T}_{\ell,u}}(\bx_i)
-
\widehat{\bs}^{\cK_{\mathrm{CI}},\lambda}_{\mathcal{T}_{\ell,u}}(\widetilde{\bx}_i)\|_2
\cdot \|\widetilde{\by}_i\|_2
\cdot \mathbb{I}_{\mathcal{G}^c}
\right\}.
\end{aligned}    
\end{equation}

By the Lipschitz property of the score estimator (Lemma~\ref{lem:kef-prop}),
\begin{align*}
\|\widehat{\bs}^{\cK_{\mathrm{CI}},\lambda}_{\mathcal{T}_{\ell,u}}(\bx_i)
-
\widehat{\bs}^{\cK_{\mathrm{CI}},\lambda}_{\mathcal{T}_{\ell,u}}(\widetilde{\bx}_i)\|_2
\le L_s \|\bx_i - \widetilde{\bx}_i\|_2,
\end{align*}
where
$L_s = \left(M_3 M^{\mathrm{CI}}_{\bzeta}/\lambda^2 + M_3/\lambda\right) p^{3/2},
$
and $\bx_i - \widetilde{\bx}_i = 0$ whenever $\|\bx_i\|_2 \le M_{\bx}$. Hence, the global bad event indicator can be replaced by the local indicator $\mathbb{I}_{\{\|\bx_i\|_2 > M_{\bx}\}}$. On the projected data,
$
\|\widetilde{\by}_i\|_2 \le M_{\by}, $ and $
\|\bx_i - \widetilde{\bx}_i\|_2 \le \|\bx_i\|_2$.

Substituting these bounds into \eqref{eq:term-iii-def}, we obtain
\begin{align}
\text{Term (iii)}
&\le 
\frac{L_s M_{\by}}{n_\ell} \sum_{i=1}^{n_\ell}
\EE\left[
\|\bx_i\|_2 \cdot \mathbb{I}_{\{\|\bx_i\|_2 > M_{\bx}\}}
\right]. \label{eq:term-iii-bound}
\end{align}

By Lemma~\ref{lem:app-ge}, $\PP(\|\bx\|_2 > M_{\bx}) \le \rho\delta / (2 n^4)$. Applying Cauchy-Schwarz inequality,
\begin{align*}
\EE\left[ \|\bx_i\|_2 \cdot \mathbb{I}_{\{\|\bx_i\|_2 > M_{\bx}\}} \right]
\le
\sqrt{\EE(\|\bx_i\|_2^2)} \cdot
\sqrt{\PP(\|\bx_i\|_2 > M_{\bx})}.
\end{align*}
Since $\bx$ is elementwise sub-Gaussian, $\sqrt{\EE\|\bx_i\|_2^2} \le C \sqrt{p}$ for some absolute constant $C$. Therefore,
\begin{align*}
\EE\left[ \|\bx_i\|_2 \cdot \mathbb{I}_{\{\|\bx_i\|_2 > M_{\bx}\}} \right]
\le
C \sqrt{p} \cdot \sqrt{\frac{\rho\delta}{2 n^4}}
=
C \sqrt{\frac{p \rho\delta}{2}} \cdot \frac{1}{n^2}.
\end{align*}
Substituting this and the definition of $L_s$ into \eqref{eq:term-iii-bound},
\begin{align*}
\text{Term (iii)}
\le
\frac{ \left(\frac{M_3 M^{\mathrm{CI}}_{\bzeta}}{\lambda^2} + \frac{M_3}{\lambda}\right) p^{\frac{3}{2}} M_{\by}}{n_\ell}
\sum_{i=1}^{n_\ell}
C \sqrt{\frac{p \rho\delta}{2}} \cdot \frac{1}{n^2}\le
C \left(\frac{M_3 M_{\bzeta}}{\lambda^2} + \frac{M_3}{\lambda}\right)
p^{\frac{3}{2}} M_{\by}
\sqrt{p \rho\delta} \cdot \frac{1}{n^2}.
\end{align*}
Thus,
\begin{align}\label{eq:term-iii-final}
\text{Term (iii)}
\le
C \left(\frac{M_3 M_{\bzeta}}{\lambda^2} + \frac{M_3}{\lambda}\right)
p^2 M_{\by}
\sqrt{\rho\delta} \cdot \frac{1}{n^2}\le
C \left(\frac{M_3 p}{\lambda^2} + \frac{M_3}{\lambda}\right)
p^2 M_{\by}
\sqrt{\rho\delta} \cdot \frac{1}{n^2}.    
\end{align}
where $C$ is an absolute constant.
Combining the explicit bounds for Term (i), Term (ii), and Term (iii) from \eqref{eq:term-i-final}, \eqref{eq:term-ii-final}, and \eqref{eq:term-iii-final}, define
\begin{align*}
\begin{aligned}
& D\\
\coloneqq &
\left\|\EE\widehat{\bM}^{\cK_{\mathrm{CI}},\lambda}_{\ell,u}\circ \Pi - \EE\widehat{\bM}^{\cK_{\mathrm{CI}},\lambda}_{\ell,u}\right\|_\mathrm{F} \\
\le &
C \left\{
\left( \frac{\kappa_{\mathrm{CI}}^2}{\lambda} + \frac{\kappa_{\mathrm{CI}}^2}{\lambda^{\frac{3}{2}}} + 1 \right)
M^{\mathrm{CI}}_{\bzeta}
\sqrt{q\rho\delta}
\left( \frac{1}{n} + \frac{n}{n_\ell^2} + \frac{n^2}{n_\ell^4} \right)  +
\frac{(1 + \kappa_{\mathrm{CI}}) M^{\mathrm{CI}}_{\bzeta}}{\lambda}
\sqrt{q\rho\delta} \cdot \frac{1}{n^2_\ell}  +
\left( \frac{M_3 M^{\mathrm{CI}}_{\bzeta}}{\lambda^2} + \frac{M_3}{\lambda} \right)
p^2 M_{\by}
\sqrt{\rho\delta} \cdot \frac{1}{n^2}
\right\},
\end{aligned}
\end{align*}
where $C$ is an absolute constant. Thus, up to absolute constants and assuming $n_\ell \le n$,
\begin{equation}\label{eq:app-db}
\begin{aligned}
 & D\\
\le& 
C \left( \frac{\kappa_{\mathrm{CI}}^2}{\lambda} + \frac{\kappa_{\mathrm{CI}}^2}{\lambda^{\frac{3}{2}}} + 1 \right)
M^{\mathrm{CI}}_{\bzeta}
\sqrt{q\rho\delta}
\left( \frac{1}{n} + \frac{n}{n_\ell^2} + \frac{n^2}{n_\ell^4} \right) + C
\left( \frac{M_3 M^{\mathrm{CI}}_{\bzeta}}{\lambda^2} + \frac{M_3}{\lambda} \right)
p^2 M_{\by}
\sqrt{\rho\delta} \cdot \frac{1}{n^2}\\
\le & C \sqrt{\rho\delta} \left\{ \left( \frac{1}{\lambda} + \frac{1}{\lambda^{\frac{3}{2}}} + 1 \right)
\left( \frac{1}{n} + \frac{n}{n_\ell^2} + \frac{n^2}{n_\ell^4} \right) + \left( \frac{1}{\lambda^2} + \frac{1}{\lambda} \right)
\sqrt{\ln(n_{\ell}})
 \cdot \frac{1}{n^2} \right\}.
\end{aligned}
\end{equation}
From the definition of $\epsilon$ in \eqref{eq:eps-def} and the expression for $\Delta$, the McDiarmid threshold satisfies
\begin{align}\label{eq:eps-order}
\epsilon \asymp 
& \left( \frac{1}{\lambda} + \frac{1}{\lambda^2} \right)
\frac{\sqrt{n}}{n_\ell}
\sqrt{\ln\left(\frac{n}{\delta}\right)\ln\left(\frac{n_\ell}{\delta}\right)\ln\left(\frac{1}{\delta}\right)} + \frac{1}{\sqrt{n}}
\sqrt{\ln\left(\frac{n_\ell}{\delta}\right)\ln\left(\frac{1}{\delta}\right)}.
\end{align}

We now combine the variance bound from McDiarmid's inequality and the bias bound from Theorem~B.1 to obtain the final convergence rate.

Let $n_\ell \asymp n^\beta$ with $0 < \beta \le 1$, and let $\lambda = n^{-\alpha}$ with $\alpha > 0$. The four error sources have the following polynomial orders, ignoring logarithmic factors:
\begin{align}
B_1 \;(\text{score bias}):&\quad n^{-\alpha \omega }, \label{eq:order-B1}\\
V_1 \;(\text{score variance}):&\quad n^{\alpha - \frac{1}{2}}, \label{eq:order-V1}\\
V_2 \;(\text{McDiarmid variance}):&\quad n^{\alpha + \frac{1}{2} - \beta} + n^{-\frac{1}{2}}, \label{eq:order-V2}\\
D \;(\text{truncation bias}):&\quad n^{\max\{\alpha + 1 - 2\beta,\; \alpha + 2 - 4\beta,\; 2\alpha - 2\}}. \label{eq:order-D}
\end{align}
Here $B_1$ and $V_1$ are inherited from Theorem~B.1 in \citet{zhou2020} via Cauchy-Schwarz; $V_2$ arises from the McDiarmid concentration of the projected estimator ($\epsilon$ in \eqref{eq:eps-order}); and $D$ is the truncation bias from the three-term decomposition (see \eqref{eq:app-db}).

We minimize the maximum of the four orders over $\alpha > 0$. We consider the following cases depending on $\beta$.

\medskip
\noindent Case 1: $\beta \le 1/2$.\\
When $\beta \le 1/2$, the $D$ term is dominated by $D_2 = \alpha + 2 - 4\beta$ (if $\alpha$ is not too large) or by the $V_2$ term. For any $\alpha > 0$, the error does not decay to zero; the best possible choice is $\alpha = 0$ (i.e., $\lambda = 1$), which yields:
\begin{itemize}
\item If $\beta < 1/2$: the error diverges as $n^{1/2 - \beta}$.
\item If $\beta = 1/2$: the error remains at a constant order $O(1)$.
\end{itemize}
Thus, the semi-supervised estimator is not consistent for $\beta \le 1/2$, as the truncation bias and McDiarmid variance prevent convergence to zero.

\medskip
\noindent Case 2: $1/2 < \beta < 1$. \\
In this regime, $V_2$ is dominated by $n^{\alpha + 1/2 - \beta}$, which is larger than $V_1$ (since $1-\beta > 0$). The truncation bias $D$ is dominated by $\alpha + 1 - 2\beta$, which is smaller than $V_2$ when $\beta > 1/2$. Thus $D$ is absorbed into $V_2$. The remaining dominant terms are $V_2$ and $B_1$. Balancing them gives:
\begin{align*}
\alpha + 1/2 - \beta = -\alpha \omega 
\;\Longrightarrow\;
\alpha = \frac{2\beta - 1}{2(\omega +1)}. 
\end{align*}
The resulting convergence rate is:
\begin{align*}
\|\widehat{\bM}^{\cK_{\mathrm{CI}},\lambda}_{\ell,u} - \bM\|_\mathrm{F} = O\left\{ n^{-\frac{\omega (2\beta - 1)}{2(\omega +1)}} \right\}. 
\end{align*}

\medskip
\noindent Case 3: $\beta = 1$.\\
When $\beta = 1$, $V_2$ becomes $n^{\alpha - 1/2} + n^{-1/2}$, which is of the same order as $V_1$ (the $n^{-1/2}$ term is smaller). The truncation bias $D$ is of order $n^{\alpha - 1}$, which is absorbed into $V_1$. The remaining dominant terms are $V_1$ and $B_1$. Balancing them gives:
\begin{align*}
\alpha - 1/2 = -\alpha r
\;\Longrightarrow\;
\alpha = \frac{1}{2\omega +2}. 
\end{align*}
The resulting convergence rate is:
\begin{align*}
\|\widehat{\bM}^{\cK_{\mathrm{CI}},\lambda}_{\ell,u} - \bM\|_\mathrm{F} = O\left( n^{-\frac{\omega }{2\omega +2}} \right). 
\end{align*}
This matches the classical nonparametric rate of the score estimator in Theorem 4.2 of \citet{zhou2020}.

\medskip
Combining the three cases, the optimal regularization parameter is

\begin{align*}
\lambda^* = n^{-\frac{2\beta - 1}{2(\omega +1)}}\text{, }1/2 < \beta \le 1, 
\end{align*}
and the corresponding convergence rate is
\begin{align*}
\|\widehat{\bM}^{\cK_{\mathrm{CI}},\lambda}_{\ell,u} - \bM\|_\mathrm{F} = O\left\{ n^{-\frac{\omega (2\beta - 1)}{2(\omega +1)}} \right\}\text{, } & 1/2 < \beta \le 1.  
\end{align*}

For $\beta \le 1/2$, the McDiarmid variance term does not vanish as $n \to \infty$, and hence the overall error does not converge to zero in probability.

To obtain a final probability of at least $1-\delta$, we let $\rho$ in the definition of $\cG$ equals $= 1/4$, and  
the bias bound from Theorem~B.1 in \citet{zhou2020} with failure probability $\delta/2$. This yields that for any $\delta \in (0,1)$, there exists a constant $C_\delta > 0$ such that for all sufficiently large $n$, with probability at least $1-\delta$,
\begin{align*}
\left\| \widehat{\bM}^{\cK_{\mathrm{CI}},\lambda^*}_{\ell,u} - \bM \right\|_\mathrm{F}
\le
C_\delta \cdot n^{-\frac{\omega(2\beta - 1)}{2(\omega + 1)}}.
\end{align*}
The constant $C_\delta$ absorbs all constants depending on $\delta$ arising from the thresholds of the concentration inequalities.

\end{proof}

\subsection{Proof of Corollary~\ref{cor:1}}

\begin{proof}
Define the symmetric matrices
\begin{align*}
\widetilde{\bM} \coloneqq \begin{pmatrix} 0 & \bM \\ \bM^\top & 0 \end{pmatrix}
\in \mathbb{R}^{(p+q)\times(p+q)}\text{, and }
\widetilde{\widehat{\bM}} \coloneqq \begin{pmatrix} 0 & \widehat{\bM} \\ \widehat{\bM}^\top & 0 \end{pmatrix}.
\end{align*}
The eigenvalues of $\widetilde{\bM}$ are $\sigma_1(\bM) \ge \cdots \ge \sigma_r(\bM) > 0 = \cdots = 0 > -\sigma_r(\bM) \ge \cdots \ge -\sigma_1(\bM)$,
where $\sigma_1(\bM)\ge\cdots\ge\sigma_r(\bM)$ are the positive singular values of $\bM$.

Let $\widetilde{\bU} \in \mathbb{R}^{(p+q)\times r}$ be the matrix whose columns are the orthonormal eigenvectors of $\widetilde{\bM}$ corresponding to its $r$ largest eigenvalues (i.e. $\sigma_1,\dots,\sigma_r$). 
Similarly, define $\widetilde{\widehat{\bU}}$ from $\widetilde{\widehat{\bM}}$. 
By the block structure, we have
\begin{align*}
\widetilde{\bU} = \frac{1}{\sqrt{2}} \begin{pmatrix} \bU \\ \bV \end{pmatrix},
\quad
\widetilde{\widehat{\bU}}= \frac{1}{\sqrt{2}} \begin{pmatrix} \widehat{\bU} \\ \widehat{\bV} \end{pmatrix},
\end{align*}
where $\bU,\widehat{\bU}\in\mathbb{R}^{p\times r}$ consist of the left leading singular vectors of $\bM$ and $\widehat{\bM}$, respectively, and $\bV,\widehat{\bV}$ consist of the right leading singular vectors. 

Apply Theorem 2 in \cite{yu2015useful} with $r_{\text{theo}}=1$ and $s_{\text{theo}}=r$. 
Since the eigenvalue gap above the first $r$ eigenvalues is infinite, the denominator of the upper bound in that theorem becomes
\begin{align*}
\min\{\lambda_{0}-\lambda_1,\; \lambda_r-\lambda_{r+1}\}
= \min\{\infty,\; \sigma_r(\bM)-0\} = \sigma_r(\bM).
\end{align*}
Also, the numerator is bounded by $2\|\widetilde{\widehat{\bM}}-\widetilde{\bM}\|_\mathrm{F}$. 
Therefore, there exists an orthogonal matrix $\boldsymbol{O}\in\mathbb{R}^{r\times r}$ such that
\begin{align*}
\|\widetilde{\widehat{\bU}}\boldsymbol{O} - \widetilde{\bU}\|_\mathrm{F}
\le \frac{2^{\frac{3}{2}}\,\|\widetilde{\widehat{\bM}}-\widetilde{\bM}\|_\mathrm{F}}{\sigma_r(\bM)}.
\end{align*}
Since $\|\widetilde{\widehat{\bM}}-\widetilde{\bM}\|_\mathrm{F} = \sqrt{2}\,\|\widehat{\bM}-\bM\|_\mathrm{F}$, we obtain
\begin{align*}
\|\widetilde{\widehat{\bU}}\boldsymbol{O} - \widetilde{\bU}\|_\mathrm{F}
\le \frac{2^{2}\,\|\widehat{\bM}-\bM\|_\mathrm{F}}{\sigma_r(\bM)}.
\end{align*}
Using the block structure, we obtain
\begin{align*}
\frac{1}{\sqrt{2}}\left\| \begin{pmatrix} \widehat{\bU} \\ \widehat{\bV} \end{pmatrix}\boldsymbol{O} - \begin{pmatrix} \bU \\ \bV \end{pmatrix} \right\|_\mathrm{F}
\le \frac{4\|\widehat{\bM}-\bM\|_\mathrm{F}}{\sigma_r(\bM)},
\end{align*}
which leads to
\begin{align*}
\|\widehat{\bU}\boldsymbol{O} - \bU\|_\mathrm{F}
\le \sqrt{2}\|\widetilde{\widehat{\bU}}\boldsymbol{O} - \widetilde{\bU}\|_\mathrm{F}
\le \frac{4\sqrt{2}\,\|\widehat{\bM}-\bM\|_\mathrm{F}}{\sigma_r(\bM)}.
\end{align*}
Taking the infimum over all orthogonal matrices $\bO\in\mathcal{V}_r$, we obtain 
\begin{align*}
\inf_{\bO \in \cV_r}\|\widehat{\bU}\boldsymbol{O} - \bU\|_\mathrm{F}
\le \frac{4\sqrt{2}\,\|\widehat{\bM}-\bM\|_\mathrm{F}}{\sigma_r(\bM)}.
\end{align*}

By definition, $\bU$ is an orthogonal basis of the column space of $\bM$, and $\bM_1$ is full-rank, therefore, $\bU$ is an orthogonal basis of $\cS(\bB)$. Since $\bB$ is also an orthogonal basis of $\cS(\bB)$, we obtain 
\begin{align*}
\inf_{\bO \in \cV_r}\|\widehat{\bU}\boldsymbol{O} - \bB\|_\mathrm{F}
\le \frac{4\sqrt{2}\,\|\widehat{\bM}-\bM\|_\mathrm{F}}{\sigma_r(\bM)}.
\end{align*}

Note that $\widehat{\bB} = \widehat{\bU}$, together with identifiability condition, we obtain
\begin{align}\label{eq:app-fb}
\inf_{\bO \in \cV_r}\|\widehat{\bB}\boldsymbol{O} - \bB\|_\mathrm{F}
\le \frac{4\sqrt{2}}{C} \cdot \sqrt{\frac{r}{q}} \|\widehat{\bM}-\bM\|_\mathrm{F}.    
\end{align}
Define $C_0 \coloneqq 4\sqrt{2}/C$, Inequality~\eqref{eq:bb} holds.
For the data splitting method, by Theorem~\ref{thm:subgcomp}, we obtain, with probability at least $1 - \delta$, 
\begin{align*}
&\inf_{\bO\in\mathcal{V}_r}\|\widehat{\bB}_{\ell,u}^{s}\bO - \bB\|_\mathrm{F}\\
\le & C_0 \sqrt{\frac{r}{q}} C K^2 \left\{ \sqrt{q} \, \epsilon(n_u)
+\sqrt{\frac{pq\ln\left(\frac{pq}{\delta}\right)}{n_{\ell}}} + \frac{\sqrt{pq}\ln\left(\frac{pq}{\delta}\right)}{n_{\ell}} \right\} \\ \le & C_{\mathrm{s}} \left\{ \sqrt{r} \, \epsilon(n_u)
+\sqrt{\frac{pr\ln\left(\frac{pq}{\delta}\right)}{n_{\ell}}} + \frac{\sqrt{pr}\ln\left(\frac{pq}{\delta}\right)}{n_{\ell}} \right\},
\end{align*}
where $C_{\mathrm{s}}$ is defined as $C_{\mathrm{s}} \coloneqq C_0 C K^2$.

For the coupled TRE method, by Theorem~\ref{thm:kernel}, we obtain, for the regime $1/2 < \beta \le 1$, for large enough $n$, with probability at least $1 - \delta$,

\begin{align*}
\inf_{\bO\in\mathcal{V}_r}\| \widehat{\bB}^{\mathcal{K}_{\mathrm{CI}},\lambda}_{\ell,u}\bO - \bB\|_\mathrm{F} \le C_0 \sqrt{\frac{r}{q}} C_{\delta} \cdot n^{-\frac{\omega (2\beta - 1)}{2(\omega +1)}} 
\le
C_{\mathrm{c},\delta} \, n^{-\frac{\omega (2\beta - 1)}{2(\omega +1)}},
\end{align*}
where $C_{\mathrm{c},\delta}$ is defined as $C_{\mathrm{c},\delta} \coloneqq C_0 C_{\delta} \sqrt{r/q} $. 
\end{proof}

\subsection{Proof of Theorem~\ref{the:rank-s}}
\begin{proof}
By Theorem 7 in~\citet{o2018random}, 
\begin{align}
  \max_{i \in \min{\{p,q\}}}  |\sigma_{i}(\widehat{\bM}) -  \sigma_{i}(\bM)| \leq \|\widehat{\bM} - \bM \|_{\text{op}} \leq  \|\widehat{\bM} - \bM \|_{F}.
\end{align}

By Inequality~\eqref{eq:dr}, for $\forall \delta \in (0,1)$, $\exists n_{\ell}, n_u$ large enough such that, with probability at least $1 - \delta$, 
\begin{align*}
\|\widehat{\bM} - \bM \|_{F} < \frac{1}{2}C,    
\end{align*}
which leads to
\begin{subequations}
\begin{align*}
\sigma_{r}(\widehat{\bM}) > \sigma_{r}(\bM) - \|\widehat{\bM} - \bM \|_{F} > \frac{1}{2}C,   
\end{align*}    
and 
\begin{align*}
\sigma_{r+1}(\widehat{\bM}) < \sigma_{r + 1}(\bM) + \|\widehat{\bM} - \bM \|_{F} < \frac{1}{2}C.   
\end{align*}    
\end{subequations}

Therefore, for $\tau = C/2$, we have $\widehat{r}_{\tau} = r$. 

\end{proof}

\subsection{Proofs of Lemmas in the Appendix}

\noindent\textbf{Proof of Lemma~\ref{lem:sub-exp}}
\begin{proof}
Recall the definition of sub-Gaussian norm, $\| \cdot \|_{\psi_2} = \inf\{t>0:\EE[\exp\{(\cdot)^2/t^2\}] \leq 2\}$ and sub-exponential norm, $\| \cdot \|_{\psi_1} = \inf\{t>0:\EE\{\exp(|\cdot|/t)\} \leq 2\}$.  

Since $x$ and $y$ are both sub-Gaussian, let 
$\| x \|_{\psi_2} = \inf\{t>0:\EE\exp(x^2/t^2) \leq 2\}< \infty$ and $\| y \|_{\psi_2} = \inf\{t>0:\EE\exp(y^2/t^2) \leq 2\}< \infty$. Then, let $\Tilde{z} = xy$, 
\begin{align*}
    \EE \left \{ \exp\left(\frac{|\Tilde{z}|}{\| x \|_{\psi_2}\| y \|_{\psi_2}}\right) \right \}\leq \EE \left \{ \exp \left ( \frac{x^2}{2\| x \|^2_{\psi_2}} + \frac{y^2}{2\| y \|^2_{\psi_2}} \right ) \right \} \leq \sqrt{\EE \left \{ \exp \left ( \frac{x^2}{2\| x \|^2_{\psi_2}}\right)\right\} \EE\left\{ \exp\left( \frac{y^2}{2\| y \|^2_{\psi_2}} \right ) \right \}} \leq 2.
\end{align*}
Therefore, $z$ is sub-exponential and $\| \Tilde{z} \|_{\psi_1} \leq \| x \|_{\psi_2} \| y \|_{\psi_2}$. 

For any constant $c \in \RR$, $\EE \{ \exp (|c|/t)\} = \exp (|c|/t) \leq 2$ if and only if $t \geq |c|/\ln{2}$. Therefore, $\| c \|_{\psi_1} = |c|/\ln{2}$.

For any sub-Gaussian random variable $x$, 
\begin{align*}
\EE\left(\frac{x^2}{\| x \|^2_{\psi_2}}\right) \leq \EE\left\{ \exp\left( \frac{x^2}{\| x \|^2_{\psi_2}} \right)\right\} \leq 2.    
\end{align*}
Therefore, $\sqrt{\EE(x^2)} \leq \sqrt{2}\| x \|_{\psi_2}$. Then, we obtain
\begin{align*}
    \| z \|_{\psi_1}  \leq  \| \Tilde{z}\|_{\psi_1} + \|\EE(xy) \|_{\psi_1} =  \| \Tilde{z}\|_{\psi_1} + \frac{|\EE(xy)|}{\ln2} \leq \|\Tilde{z}\|_{\psi_1} + \frac{\sqrt{\EE(x^2)\EE(y^2)}}{\ln2} \leq \left(1 + \frac{2}{\ln2}\right)\| x \|_{\psi_2} \| y \|_{\psi_2}.
\end{align*}
Let $C= 1 + 2/\ln2$, the statement holds. 
\end{proof}

\noindent\textbf{Proof of Lemma~\ref{lem:app-ge}}
\begin{proof}
For each $i\in[n]$ and $k\in[p]$,
\begin{align*}
\PP\left( |(\bx_i)_k| > t \right).
\end{align*}
Using $\|\bx_i\|_2 \le \sqrt{p}\|\bx_i\|_\infty$ and the union bound over coordinates,
\begin{align*}
\PP\left( \|\bx_i\|_2 > M_{\bx} \right)
\le \PP\left( \|\bx_i\|_\infty > \frac{M_{\bx}}{\sqrt{p}} \right) \notag 
\le \sum_{k=1}^p \PP\left( |(\bx_{i,k}| > \frac{M_{\bx}}{\sqrt{p}} \right) \notag \le 2p \exp\left( -\frac{c M_{\bx}^2}{K^2 p} \right).
\end{align*}
Substituting $M_{\bx} = K \sqrt{(p/c) \ln\left\{4 n^4 p/\rho\delta)\right\}}$ yields
\begin{align*}
\PP\left( \|\bx_i\|_2 > M_{\bx} \right)
\le 2p \cdot \frac{\rho\delta}{4 n^4 p}
= \frac{\rho\delta}{2 n^4}.
\end{align*}
Applying the union bound over $i=1,\dots,n$,
\begin{align*}
\PP\left( \max_{1\le i\le n} \|\bx_i\|_2 > M_{\bx} \right)
\le \sum_{i=1}^n \PP\left( \|\bx_i\|_2 > M_{\bx} \right)
\le \frac{\rho\delta}{2 n^3}.
\end{align*}
The same argument with $M_{\by}$ gives
\begin{align*}
\PP\left( \max_{1\le j\le n_\ell} \|\by_j\|_2 > M_{\by} \right)
\le \frac{\rho\delta}{2 n_\ell^3}.
\end{align*}
Therefore, by the union bound,
\begin{align*}
\PP(\mathcal{G}^c)
&\le
\frac{\rho\delta}{2 n^3}
+
\frac{\rho\delta}{2 n_\ell^3}
\le
\rho\delta.
\end{align*}
This completes the proof.
\end{proof}

\noindent\textbf{Proof of Lemma~\ref{lem:app-rp}}
\begin{proof}
By the reproducing property of the vector-valued RKHS~\citep{rkhs}, for any $\boldsymbol{c}\in\mathbb{R}^p$,
\begin{align}\label{eq:supp-rp}
\langle \boldsymbol{f}(\bx), \boldsymbol{c} \rangle_{\mathbb{R}^p}
= \langle \boldsymbol{f},\, \mathcal{K}(\cdot,\bx)\boldsymbol{c} \rangle_{\mathcal{H}_{\mathcal{K}}}. 
\end{align}

Taking $\boldsymbol{c} = \boldsymbol{f}(\bx)$ in \eqref{eq:supp-rp} gives
\begin{align*}
\|\boldsymbol{f}(\bx)\|_2^2
= \langle \boldsymbol{f},\, \mathcal{K}(\cdot,\bx)\boldsymbol{f}(\bx) \rangle_{\mathcal{H}_{\mathcal{K}}}. 
\end{align*}
Applying the Cauchy--Schwarz inequality to  the equation above yields
\begin{align}\label{eq:app-k0}
\|\boldsymbol{f}(\bx)\|_2^2
\le \|\boldsymbol{f}\|_{\mathcal{H}_{\mathcal{K}}}
\, \left\| \mathcal{K}(\cdot,\bx)\boldsymbol{f}(\bx) \right\|_{\mathcal{H}_{\mathcal{K}}}. 
\end{align}

By the reproducing property again,
\begin{align}\label{eq:app-k1}
\left\| \mathcal{K}(\cdot,\bx)\boldsymbol{f}(\bx) \right\|_{\mathcal{H}_{\mathcal{K}}}^2
= \left\langle \mathcal{K}(\cdot,\bx)\boldsymbol{f}(\bx),\,
        \mathcal{K}(\cdot,\bx)\boldsymbol{f}(\bx) \right\rangle_{\mathcal{H}_{\mathcal{K}}}
= \left\langle \boldsymbol{f}(\bx),\,
        \mathcal{K}(\bx,\bx)\boldsymbol{f}(\bx) \right\rangle_{\mathbb{R}^p}. 
\end{align}
Since $\|\mathcal{K}(\bx,\bx)\|_{\mathrm{op}}$ is the largest singular value,
\begin{align}\label{eq:app-k2}
\left\langle \boldsymbol{f}(\bx),\, \mathcal{K}(\bx,\bx)\boldsymbol{f}(\bx) \right\rangle_{\mathbb{R}^p}
\le \|\mathcal{K}(\bx,\bx)\|_{\mathrm{op}}
\, \|\boldsymbol{f}(\bx)\|_2^2. 
\end{align}
Combining \eqref{eq:app-k1} and \eqref{eq:app-k2} and taking the square root gives
\begin{align}\label{eq:app-k3}
\left\| \mathcal{K}(\cdot,\bx)\boldsymbol{f}(\bx) \right\|_{\mathcal{H}_{\mathcal{K}}}
\le \sqrt{\|\mathcal{K}(\bx,\bx)\|_{\mathrm{op}}}
\, \|\boldsymbol{f}(\bx)\|_2. 
\end{align}

Substituting \eqref{eq:app-k3} into \eqref{eq:app-k0} yields
\begin{align*}
\|\boldsymbol{f}(\bx)\|_2^2
\le \|\boldsymbol{f}\|_{\mathcal{H}_{\mathcal{K}}}
\, \sqrt{\|\mathcal{K}(\bx,\bx)\|_{\mathrm{op}}}
\, \|\boldsymbol{f}(\bx)\|_2.
\end{align*}
If $\|\boldsymbol{f}(\bx)\|_2 = 0$, the inequality is trivial. Otherwise, dividing by $\|\boldsymbol{f}(\bx)\|_2$ yields the desired result.
\end{proof}

\noindent\textbf{Proof of Lemma~\ref{lem:replacement-stability}}

\begin{proof}
For notational convenience, write $\boldsymbol{f}^* = \bg^{\cK,\lambda}_{\cT_{\ell}}$ and $\boldsymbol{f}^*_j = \bg^{\cK,\lambda}_{\cT^{j}_{\ell}}$. Then, for any $\boldsymbol{f}$ in the RKHS, define the empirical risk on the full dataset and on the replaced dataset as
\begin{subequations}
\begin{align*}
\widehat{\mathcal{R}}(\boldsymbol{f},\cT_{\ell}) \coloneqq \frac{1}{n_{\ell}}\sum_{i=1}^{n_{\ell}} \left \|\boldsymbol{f}(\bx_i)-\by_i \right \|_2^2,   
\end{align*}
and
\begin{align*}
\widehat{\mathcal{R}}(\boldsymbol{f},\cT^{j}_{\ell}) \coloneqq \frac{1}{n_{\ell}}\left\{\sum_{k\ne j} \|\boldsymbol{f}(\bx_k)-\by_k\|_2^2 + \|\boldsymbol{f}(\bx_j')-\by_j'\|_2^2\right\},
\end{align*}   
respectively.
\end{subequations}
The regularized risks are
\begin{align*}
\mathcal{R}_{\mathrm{reg}}(\boldsymbol{f},\cT_{\ell}) = \widehat{\mathcal{R}}(\boldsymbol{f},\cT_{\ell}) + \lambda \|\boldsymbol{f}\|_{\cH_{\cK}}^2,
\text{ and }
\mathcal{R}_{\mathrm{reg}}(\boldsymbol{f},\cT^{j}_{\ell}) = \widehat{\mathcal{R}}(\boldsymbol{f},\cT^{j}_{\ell}) + \lambda \|\boldsymbol{f}\|_{\cH_{\cK}}^2.
\end{align*}

By the definition of $\boldsymbol{f}^*$ ,
\begin{align*}
\lambda \|\boldsymbol{f}^*\|_{\cH_{\cK}}^2 \le \mathcal{R}_{\mathrm{reg}}(\boldsymbol{f}^*,\bX) \le \mathcal{R}_{\mathrm{reg}}(0,\bX) = \frac{1}{n_{\ell}}\sum_{i=1}^{n_{\ell}} \|\by_i\|_2^2 \le M_{\by}^2.
\end{align*}
Thus $\|\boldsymbol{f}^*\|_{\cH_{\cK}} \le M_{\by}/\sqrt{\lambda}$. The same argument applied to $\boldsymbol{f}^*_j$ gives $\|\boldsymbol{f}^*_j\|_{\cH_{\cK}} \le M_{\by}/\sqrt{\lambda}$.

For any $\boldsymbol{f}\in\mathcal{H}_{\cK}$ and any $\bx\in\mathcal{X}$, Lemma~\ref{lem:app-rp} gives
\begin{align*}
\|\boldsymbol{f}(\bx)\|_2 \le \sqrt{\|\cK(\bx,\bx)\|_{\mathrm{op}}} \, \|\boldsymbol{f}\|_{\cH_{\cK}} \le \kappa \|\boldsymbol{f}\|_{\cH_{\cK}},
\end{align*}
where we used the kernel bound $\|\cK(\bx,\bx)\|_{\mathrm{op}} \le \kappa^2$. Hence, for any test point $\bx \in \cX$,
\begin{align*}
\|\boldsymbol{f}^*(\bx)\|_2 \le \frac{\kappa M_{\by}}{\sqrt{\lambda}}\text{, and }
\|\boldsymbol{f}^*_j(\bx)\|_2 \le \frac{\kappa M_{\by}}{\sqrt{\lambda}}.
\end{align*}
Therefore, for any $\by \in \{\by_i\}_{i=1}^{n_{\ell}} \cup \{\by_j^{'} \}$
\begin{align*}
\|\by - \boldsymbol{f}^*(\bx)\|_2 \le M_{\by}\left(1 + \frac{\kappa}{\sqrt{\lambda}}\right)\text{, and }
\|\by - \boldsymbol{f}^{*}_j(\bx)\|_2 \le M_{\by}\left(1 + \frac{\kappa}{\sqrt{\lambda}}\right).
\end{align*}
Using the identity $|a^2-b^2| \le |a-b|(|a|+|b|)$ for $a,b\ge 0$, we obtain
\begin{align}\label{eq:app-lip}
\left| \|\by - \boldsymbol{f}^*(\bx)\|_2^2 - \|\by - \boldsymbol{f}^*_j(\bx)\|_2^2 \right|
\le 2M_{\by}\left(1 + \frac{\kappa}{\sqrt{\lambda}}\right) \|\boldsymbol{f}^*(\bx) - \boldsymbol{f}^*_j(\bx)\|_2. 
\end{align}

For an $\boldsymbol{f}$ in the RKHS, let $c_j(\boldsymbol{f}) = \|\boldsymbol{f}(\bx_j)-\by_j\|_2^2$ and $c_j'(\boldsymbol{f}) = \|\boldsymbol{f}(\bx_j')-\by_j'\|_2^2$. Since $\boldsymbol{f}^*$ and $\boldsymbol{f}^*_j$ are minimizers, for any $t\in(0,1]$,
\begin{align*}
\begin{aligned}
\mathcal{R}_{\mathrm{reg}}(\boldsymbol{f}^*,\cT_{\ell}) \le \mathcal{R}_{\mathrm{reg}}(\boldsymbol{f}^* + t(\boldsymbol{f}^*_j-\boldsymbol{f}^*), \cT_{\ell})\text{, and }
\mathcal{R}_{\mathrm{reg}}(\boldsymbol{f}^*_j,\cT^{j}_{\ell}) \le \mathcal{R}_{\mathrm{reg}}(\boldsymbol{f}^*_j + t(\boldsymbol{f}^*-\boldsymbol{f}^*_j), \cT^{j}_{\ell}).
\end{aligned}
\end{align*}
Adding the two optimality inequalities and expanding the regularized risks gives
\begin{align*}
\begin{aligned}
& \frac{1}{n_{\ell}}\sum_{k=1}^{n_{\ell}} c_k(\boldsymbol{f}^{*}) + \frac{1}{n_{\ell}}\left\{\sum_{k\ne j} c_k(\boldsymbol{f}^{*}_j) + c_j'(\boldsymbol{f}^{*}_j)\right\} + \lambda \|\boldsymbol{f}^{*}\|_{\cH_{\cK}}^2 + \lambda \|\boldsymbol{f}^{*}_j\|_{\cH_{\cK}}^2 \\
& - \frac{1}{n_{\ell}}\sum_{k=1}^{n_{\ell}} c_k(\boldsymbol{f}^{*} + t(\boldsymbol{f}^{*}_j-\boldsymbol{f}^{*})) \\
& - \frac{1}{n_{\ell}}\left\{\sum_{k\ne j} c_k(\boldsymbol{f}^{*}_j + t(\boldsymbol{f}^{*}-\boldsymbol{f}^{*}_j)) + c_j'(\boldsymbol{f}^{*}_j + t(\boldsymbol{f}^{*}-\boldsymbol{f}^{*}_j))\right\} \\
& - \lambda \|\boldsymbol{f}^{*}+t(\boldsymbol{f}^{*}_j-\boldsymbol{f}^{*})\|_{\cH_{\cK}}^2 - \lambda \|\boldsymbol{f}^{*}_j+t(\boldsymbol{f}^{*}-\boldsymbol{f}^{*}_j)\|_{\cH_{\cK}}^2\\
\le & 0.
\end{aligned}
\end{align*}
Separating the $j$-th term, the replacement term, and the remaining terms gives
\begin{equation}\label{eq:supp-ra}
\begin{aligned}
& \frac{1}{n_{\ell}}\left \{ c_j(\boldsymbol{f}^{*}) - c_j(\boldsymbol{f}^{*} + t(\boldsymbol{f}^{*}_j-\boldsymbol{f}^{*})) \right \}
+ \frac{1}{n_{\ell}}\left\{ c_j'(\boldsymbol{f}^{*}_j) - c_j'(\boldsymbol{f}^{*}_j + t(\boldsymbol{f}^{*}-\boldsymbol{f}^{*}_j)) \right \} \\
& + \sum_{k\ne j} \frac{1}{n_{\ell}}\left \{ c_k(\boldsymbol{f}^{*}) - c_k(\boldsymbol{f}^{*} + t(\boldsymbol{f}^{*}_j-\boldsymbol{f}^{*})) + c_k(\boldsymbol{f}^{*}_j) - c_k(\boldsymbol{f}^{*}_j + t(\boldsymbol{f}^{*}-\boldsymbol{f}^{*}_j)) \right \} \\
& + \lambda\left\{ \|\boldsymbol{f}^{*}\|_{\cH_{\cK}}^2 - \|\boldsymbol{f}^{*}+t(\boldsymbol{f}^{*}_j-\boldsymbol{f}^{*})\|_{\cH_{\cK}}^2 + \|\boldsymbol{f}^{*}_j\|_{\cH_{\cK}}^2 - \|\boldsymbol{f}^{*}_j+t(\boldsymbol{f}^{*}-\boldsymbol{f}^{*}_j)\|_{\cH_{\cK}}^2 \right\}\\
\le & 0. 
\end{aligned}
\end{equation}
For each $k\ne j$, since $c_k$ is convex in $\boldsymbol{f}$, the difference quotient is monotone:
\begin{align*}
c_k(\boldsymbol{f}^*) - c_k(\boldsymbol{f}^* + t(\boldsymbol{f}^*_j-\boldsymbol{f}^*)) \ge t\left\{ c_k(\boldsymbol{f}^*) - c_k(\boldsymbol{f}^*_j) \right\},
\end{align*}
and
\begin{align*}
c_k(\boldsymbol{f}^*_j) - c_k(\boldsymbol{f}^*_j + t(\boldsymbol{f}^*-\boldsymbol{f}^*_j)) \ge t\left\{ c_k(\boldsymbol{f}^*_j) - c_k(\boldsymbol{f}^*) \right\}.
\end{align*}
Adding these two inequalities yields
\begin{align*}
c_k(\boldsymbol{f}^*) - c_k(\boldsymbol{f}^* + t(\boldsymbol{f}^*_j-\boldsymbol{f}^*)) + c_k(\boldsymbol{f}^*_j) - c_k(\boldsymbol{f}^*_j + t(\boldsymbol{f}^*-\boldsymbol{f}^*_j)) \ge 0.
\end{align*}
Thus the sum over $k\ne j$ in Inequality~\eqref{eq:supp-ra} is nonnegative and can be omitted, leading to
\begin{align*}
\begin{aligned}
& \frac{1}{n_{\ell}}\left\{ c_j(\boldsymbol{f}^*) - c_j(\boldsymbol{f}^* + t(\boldsymbol{f}^*_j-\boldsymbol{f}^*)) \Big]
+ \frac{1}{n_{\ell}}\Big[ c_j'(\boldsymbol{f}^*_j) - c_j'(\boldsymbol{f}^*_j + t(\boldsymbol{f}^*-\boldsymbol{f}^*_j)) \right\} \\
& + \lambda\left\{ \|\boldsymbol{f}^*\|_{\cH_{\cK}}^2 - \|\boldsymbol{f}^*+t(\boldsymbol{f}^*_j-\boldsymbol{f}^*)\|_{\cH_{\cK}}^2 + \|\boldsymbol{f}^*_j\|_{\cH_{\cK}}^2 - \|\boldsymbol{f}^*_j+t(\boldsymbol{f}^*-\boldsymbol{f}^*_j)\|_{\cH_{\cK}}^2 \right\}\\
\le & 0.
\end{aligned}
\end{align*}

Multiplying by $n_{\ell}$ yields
\begin{align*}
\begin{aligned}
& \left\{ c_j(\boldsymbol{f}^*) - c_j(\boldsymbol{f}^* + t(\boldsymbol{f}^*_j-\boldsymbol{f}^*)) \right\}
+ \left\{[ c_j'(\boldsymbol{f}^*_j) - c_j'(\boldsymbol{f}^*_j + t(\boldsymbol{f}^*-\boldsymbol{f}^*_j)) \right\} \\
& + n_{\ell}\lambda\left\{ \|\boldsymbol{f}^*\|_{\cH_{\cK}}^2 - \|\boldsymbol{f}^*+t(\boldsymbol{f}^*_j-\boldsymbol{f}^*)\|_{\cH_{\cK}}^2 + \|\boldsymbol{f}^*_j\|_{\cH_{\cK}}^2 - \|\boldsymbol{f}^*_j+t(\boldsymbol{f}^*-\boldsymbol{f}^*_j)\|_{\cH_{\cK}}^2 \right\} \\
\le & 0.
\end{aligned}
\end{align*}

Dividing by $t$ and letting $t\to 0^+$, the RKHS norm terms tend to $2n_{\ell}\lambda \|\boldsymbol{f}^*-\boldsymbol{f}^*_j\|_{\cH_{\cK}}^2$. Using convexity for the directional derivatives,
\begin{align*}
\lim_{t\to 0^+} \frac{c_j(\boldsymbol{f}^*) - c_j(\boldsymbol{f}^* + t(\boldsymbol{f}^*_j-\boldsymbol{f}^*))}{t}
\ge c_j(\boldsymbol{f}^*) - c_j(\boldsymbol{f}^*_j),
\end{align*}
\begin{align*}
\lim_{t\to 0^+} \frac{c_j'(\boldsymbol{f}^*_j) - c_j'(\boldsymbol{f}^*_j + t(\boldsymbol{f}^*-\boldsymbol{f}^*_j))}{t}
\ge c_j'(\boldsymbol{f}^*_j) - c_j'(\boldsymbol{f}^*).
\end{align*}
Thus, we obtain
\begin{equation}\label{eq:app-tt}
2n_{\ell}\lambda \|\boldsymbol{f}^*-\boldsymbol{f}^*_j\|_{\cH_{\cK}}^2
\le c_j(\boldsymbol{f}^*_j) - c_j(\boldsymbol{f}^*) + c_j'(\boldsymbol{f}^*) - c_j'(\boldsymbol{f}^*_j).    
\end{equation}

Applying Inequality~\eqref{eq:app-lip} to the two terms on the right-hand side in Inequality~\eqref{eq:app-tt}, we obtain:
\begin{subequations}
\begin{align*}
c_j(\boldsymbol{f}^*_j) - c_j(\boldsymbol{f}^*) \le 2M_{\by}\left(1 + \frac{\kappa}{\sqrt{\lambda}}\right) \|\boldsymbol{f}^*_j(\bx_j)-\boldsymbol{f}^*(\bx_j)\|_2, 
\end{align*}
and 
\begin{align*}
c_j'(\boldsymbol{f}^*) - c_j'(\boldsymbol{f}^*_j) \le 2M_{\by}\left(1 + \frac{\kappa}{\sqrt{\lambda}}\right) \|\boldsymbol{f}^*(\bx_j')-\boldsymbol{f}^*_j(\bx_j')\|_2.
\end{align*}
\end{subequations}
Substituting inequalities above into~\eqref{eq:app-tt}, we obtain
\begin{align*}
2n_{\ell}\lambda \|\boldsymbol{f}^*-\boldsymbol{f}^*_j\|_{\cH_{\cK}}^2
\le M_{\by}\left(1 + \frac{\kappa}{\sqrt{\lambda}}\right) \left\{ \|\boldsymbol{f}^*_j(\bx_j)-\boldsymbol{f}^*(\bx_j)\|_2 + \|\boldsymbol{f}^*(\bx_j')-\boldsymbol{f}^*_j(\bx_j')\|_2 \right\}.
\end{align*}
By Lemma~\ref{lem:app-rp}, $\|\boldsymbol{f}^*(\bx_j)-\boldsymbol{f}^*_j(\bx_j)\|_2 \le \kappa \|\boldsymbol{f}^*-\boldsymbol{f}^*_j\|_{\cH_{\cK}}$. Therefore,
\begin{align*}
2n_{\ell}\lambda \|\boldsymbol{f}^*-\boldsymbol{f}^*_j\|_{\cH_{\cK}}^2 \le 4M_{\by}\left(1 + \frac{\kappa}{\sqrt{\lambda}}\right)\kappa \|\boldsymbol{f}^*-\boldsymbol{f}^*_j\|_{\cH_{\cK}}.
\end{align*}
Hence,
\begin{align*}
\|\boldsymbol{f}^*-\boldsymbol{f}^*_j\|_{\cH_{\cK}} \le 2M_{\by}\left(1 + \frac{\kappa}{\sqrt{\lambda}}\right)\frac{\kappa}{n_{\ell}\lambda}.
\end{align*}

For any test point $\bx \in \cX$,
\begin{align*}
\|\boldsymbol{f}^*(\bx)-\boldsymbol{f}^*_j(\bx)\|_2
\le \kappa \|\boldsymbol{f}^*-\boldsymbol{f}^*_j\|_{\cH_{\cK}}
\le 2M_{\by}\left(1 + \frac{\kappa}{\sqrt{\lambda}}\right) \frac{\kappa^2}{n_{\ell}\lambda} = \frac{CM_{\by}}{n_{\ell}},
\end{align*}
where $C = 2(1 + \kappa/\sqrt{\lambda})\kappa^2/\lambda$.
Replacing $\boldsymbol{f}^* = \bg^{\cK,\lambda}_{\cT_{\ell}}$ and $\boldsymbol{f}^*_j = \bg^{\cK,\lambda}_{\cT^{j}_{\ell}}$ completes the proof.
\end{proof}

\noindent\textbf{Proof of Lemma~\ref{lem:kef-replacement}}
\begin{proof}
By the decomposition of $\widehat{\bs}^{\cK,\lambda}_{\bX}(\bx)$ and $\widehat{\bs}^{\cK,\lambda}_{\bX^{j}}(\bx)$, we obtain 
\begin{align*}
\left \|\widehat{\bs}^{\cK,\lambda}_{\bX}(\bx) - \widehat{\bs}^{\cK,\lambda}_{\bX^{j}}(\bx) \right \|_2
\le \left \|\bg^{\cK,\lambda}_{\cT_{\cK,\bX}}(\bx) - \bg^{\cK,\lambda}_{\cT_{\cK,\bX^{j}}}(\bx) \right \|_2
+ \frac{1}{\lambda}\left\|\widehat{\bzeta}^{\cK}_{\bX}(\bx) - \widehat{\bzeta}^{\cK}_{\bX^{j}}(\bx) \right\|_2.
\end{align*}

Since $\sup_{\bx, \by \in \cX}\|\mathrm{div}_{\bx} \cK(\bx,\by)^\top\|_2 \le M_{\bzeta}$,
by Lemma~\ref{lem:replacement-stability}, the first term is bounded by $2(1 + \kappa/\sqrt{\lambda})\kappa^2 M_{\bzeta}/\lambda n$, and for the second term, 
\begin{align*}
\left \|\widehat{\bzeta}^{\cK}_{\bX}(\bx) - \widehat{\bzeta}^{\cK}_{\bX^{j}}(\bx) \right \|_2
\le \frac{2M_{\bzeta}}{n}.
\end{align*}
Combining the two terms gives the stated bound.
\end{proof}

\noindent\textbf{Proof of Lemma~\ref{lem:app-m3}}
\begin{proof}
Let $\br = \bx-\by$, $r = \|\br\|_2$, recall that 
\begin{align*}
\phi(r) \coloneqq \left(1 + \frac{r^2}{c^2}\right)^{-\frac{1}{2}}.
\end{align*}
Since $\partial_{y_j} = -\partial_{r_j}$, the absolute value of any third-order mixed partial derivative equals $
\left| \partial_{r_i}\partial_{r_k}\partial_{r_j} \phi(r) \right|$.

We show that this quantity is uniformly bounded for all $\br\in\mathbb{R}^p$.

For $r \rightarrow \infty$, $\phi(r) \sim c/r$. Consequently, $\phi'(r)=O(r^{-2})$, $\phi''(r)=O(r^{-3})$, and $\phi'''(r)=O(r^{-4})$. Any third-order partial derivative $\partial_{r_i}\partial_{r_k}\partial_{r_j}\phi(r)$ is a linear combination of terms of the form
\begin{align*}
\frac{\phi'(r)}{r^3},\quad \frac{\phi''(r)}{r^2},\quad \frac{\phi'''(r)}{r},\quad \frac{\phi''(r)}{r^3},
\end{align*}
each of which decays as $O(r^{-5})$ or faster. Hence all such partial derivatives tend to $0$ as $r\to\infty$.

For $r \rightarrow 0$, $\phi(r) = (1 + r^2/c^2)^{-1/2}$ is analytic in $r^2$ around $r=0$. Therefore, write
\begin{align*}
\phi(r)=a_0+a_2r^2+a_4r^4+O(r^6),
\end{align*}
with finite constants $a_0,a_2,a_4$. Then,
\begin{align*}
\phi'(r)=2a_2r+4a_4r^3+O(r^5),\quad
\phi''(r)=2a_2+12a_4r^2+O(r^4)\text{, and }
\phi'''(r)=24a_4r+O(r^3).
\end{align*}
Substituting these into the above formula, the only potentially singular terms are
\begin{align*}
\frac{2a_2}{r^2}\left(\delta_{ik}r_j+\delta_{ij}r_k+\delta_{jk}r_i\right)
\quad\text{and}\quad
-\frac{2a_2}{r^2}\left(\delta_{ik}r_j+\delta_{ij}r_k+\delta_{jk}r_i\right),
\end{align*}
which cancel exactly. The remaining terms are $O(r)$ or bounded by constants independent of $r$. Therefore the third-order partial derivative has a finite limit as $r\to0$.

Since the third-order partial derivative is continuous on $\mathbb{R}^p\setminus\{\boldsymbol{0}\}$, tends to $0$ as $r\to\infty$, and has a finite limit at $r=0$, it is uniformly bounded on $\mathbb{R}^p$. Taking the supremum over all $\bx,\by$ and indices yields $M_3<\infty$.
\end{proof}

\noindent\textbf{Proof of Lemma~\ref{lem:app-bd-general}}

\begin{proof}
Let $\br = \bx-\by$ and $r = \|\br\|_2$. By the Laplacian-Gradient Identity,
\begin{align*}
\mathrm{div}_{\bx} \mathcal{K}(\bx, \by)^\top = - \nabla_{\bx} \Delta \phi(r),
\end{align*}
we obtain
\begin{align*}
\left\| \mathrm{div}_{\bx} \mathcal{K}_{\mathrm{CI}}(\bx, \by)^\top \right\|_2
= \left\| \nabla_{\bx} \Delta \phi(r) \right\|_2
= \left| \frac{d}{dr}\Delta \phi(r) \right|.
\end{align*}
Hence,
\begin{align*}
M^{\mathrm{CI}}_{\boldsymbol{\zeta}} = \sup_{r\ge 0} \left| \frac{d}{dr}\Delta \phi(r) \right|.
\end{align*}

For the IMQ kernel, direct differentiation yields
\begin{align*}
\phi'(r) &= -\frac{r}{c^2}\left(1+\frac{r^2}{c^2}\right)^{-\frac{3}{2}}, \\
\phi''(r) &= -\frac{1}{c^2}\left(1+\frac{r^2}{c^2}\right)^{-\frac{3}{2}}
+ \frac{3r^2}{c^4}\left(1+\frac{r^2}{c^2}\right)^{-\frac{5}{2}}.
\end{align*}

Using the radial Laplacian formula $\Delta \phi(r) = \phi''(r) + \{(p-1)/r\}\phi'(r)$, and letting $u = r/c \ge 0$, we obtain
\begin{align}\label{eq:app-lap-fixed}
\Delta \phi(r) = \frac{1}{c^2}\frac{(3-p)u^2 - p}{(1+u^2)^{\frac{5}{2}}}.
\end{align}

Differentiating \eqref{eq:app-lap-fixed} with respect to $r$ gives
\begin{align*}
\frac{d}{dr}\Delta \phi(r)
= \frac{3}{c^3}\frac{u\left\{(p+2) + (p-3)u^2\right\}}{(1+u^2)^{\frac{7}{2}}}.
\end{align*}

For $p \ge 3$, the expression is non-negative for all $u\ge 0$, so
\begin{align*}
\left| \frac{d}{dr}\Delta \phi(r) \right|
= \frac{3}{c^3} \frac{u\left\{(p+2) + (p-3)u^2\right\}}{(1+u^2)^{\frac{7}{2}}}.
\end{align*}

Define the auxiliary function
\begin{align*}
F_p(u) \coloneqq \frac{u\left\{(p+2) + (p-3)u^2\right\}}{(1+u^2)^{\frac{7}{2}}}, \qquad u\ge 0.
\end{align*}
Then
\begin{align}\label{eq:app-mz}
M^{\mathrm{CI}}_{\boldsymbol{\zeta}} = \frac{3}{c^3} \sup_{u\ge 0} F_p(u).    
\end{align}

We now show that $\sup_{u\ge 0} F_p(u) \asymp p$.

Since $p+2 \le 3p$ and $p-3 \le p$ for $p\ge 3$, we have
\begin{align*}
F_p(u) \le p \cdot \frac{u(3+u^2)}{(1+u^2)^{\frac{7}{2}}}.
\end{align*}
The function $g(u) =u(3+u^2)/(1+u^2)^{7/2}$ is continuous on $[0,\infty)$ and decays as $O(u^{-4})$ as $u\to\infty$; hence it is bounded. Thus there exists an absolute constant $C_1>0$ such that
\begin{align*}
\sup_{u\ge 0} F_p(u) \le C_1 p.
\end{align*}

Letting $u=1$, we get
\begin{align*}
F_p(1) = \frac{1\cdot\{(p+2) + (p-3)\}}{(1+1)^{\frac{7}{2}}}
= \frac{2p-1}{2^{\frac{7}{2}}} \ge \frac{p}{2^{\frac{7}{2}}} \quad (p\ge 1).
\end{align*}
Hence there exists an absolute constant $C_2 = 2^{-7/2} > 0$ such that
\begin{align*}
\sup_{u\ge 0} F_p(u) \ge C_2 p.
\end{align*}

Combining the upper and lower bounds above yields $\sup_{u\ge 0} F_p(u) \asymp p$. 
By direct calculation, it can be shown that 
\begin{align*}
\left|\frac{d}{dr}\Delta \phi(r)\right|
\le \frac{3}{c^3}
\begin{cases}
\dfrac{\sqrt{x_1}\,(3-2x_1)}{(1+x_1)^{\frac{7}{2}}}, & p=1,\quad x_1=\dfrac{6-\sqrt{30}}{4},\\[1.8em]
\dfrac{\sqrt{x_2}\,(4-x_2)}{(1+x_2)^{\frac{7}{2}}}, & p=2,\quad x_2=\dfrac{27-\sqrt{665}}{8}.
\end{cases}  
\end{align*}
Combining these bounds and using the fact that $c$ is an absolute constant, we obtain
\begin{align*}
M^{\mathrm{CI}}_{\boldsymbol{\zeta}} \asymp \frac{3}{c^3} \cdot p \asymp p.
\end{align*}

\end{proof}

\noindent\textbf{Proof of Lemma~\ref{lem:app-opb}}
\begin{proof}
Let $\br = \bx - \by$ and $r = \|\br\|_2$. For any radial $\phi$, the curl-free kernel admits the explicit representation (see, Equation (19) in~\citet{zhou2020})
\begin{align}\label{eq:app-cf}
\mathcal{K}_{\mathrm{cf}}(\bx, \by)
= \left\{\frac{\phi'(r)}{r^3} - \frac{\phi''(r)}{r^2}\right\} \br\br^{\top}
- \frac{\phi'(r)}{r}\,\bI_p. 
\end{align}

For the IMQ kernel, direct differentiation gives
\begin{align*}
\phi'(r) = -\frac{r}{c^2}\left(1+\frac{r^2}{c^2}\right)^{-\frac{3}{2}}, \qquad
\phi''(r) = -\frac{1}{c^2}\left(1+\frac{r^2}{c^2}\right)^{-\frac{3}{2}}
+ \frac{3r^2}{c^4}\left(1+\frac{r^2}{c^2}\right)^{-\frac{5}{2}}.
\end{align*}
Substituting these into~\eqref{eq:app-cf} and simplifying yields the exact matrix form
\begin{align}\label{eq:app-kfv}
\mathcal{K}_{\mathrm{CI}}(\bx, \by)
= -\frac{3}{c^4} \left(1+\frac{r^2}{c^2}\right)^{-\frac{5}{2}} \br\br^{\top}
+ \frac{1}{c^2} \left(1+\frac{r^2}{c^2}\right)^{-\frac{3}{2}} \bI_P. 
\end{align}

Now take the limit $\by \to \bx$, i.e. $r \to 0$. In \eqref{eq:app-kfv}, the first term vanishes because $\|\br\br^{\top}\|_{\mathrm{op}} = r^2 \to 0$ while the scalar prefactor remains bounded. The second term converges to $(1/c^2)\bI_p$. Hence, by continuity of the kernel,
\begin{align*}
\mathcal{K}_{\mathrm{CI}}(\bx, \bx) = \lim_{\by\to\bx} \mathcal{K}_{\mathrm{CI}}(\bx, \by)
= \frac{1}{c^2}\,\bI_p, \qquad \forall \bx\in\mathbb{R}^p. 
\end{align*}

Therefore,
\begin{align*}
\|\mathcal{K}_{\mathrm{CI}}(\bx, \bx)\|_{\mathrm{op}}
= \left\|\frac{1}{c^2}\bI_p\right\|_{\mathrm{op}}
= \frac{1}{c^2},
\end{align*}
which is independent of $\bx$. Taking the supremum over $\bx\in\cX$ proves the claim.
\end{proof}

\noindent\textbf{Proof of Lemma~\ref{lem:kef-prop}}
\begin{proof}
By Theorem 3.1 in \citet{zhou2020}, the score estimator can be decomposed as 
\begin{align*}
\widehat{\bs}^{\cK_{\mathrm{CI}},\lambda}_{\bX}(\bx) = \bg^{\cK_{\mathrm{CI}},\lambda}_{\cT_{\cK_{\mathrm{CI}},\bX}}(\bx) - \frac{1}{\lambda}\widehat{\bzeta}^{\cK_{\mathrm{CI}}}_{\bX}(\bx),
\end{align*}
where $\bg^{\cK_{\mathrm{CI}},\lambda}_{\cT_{\cK_{\mathrm{CI}},\bX}}$ is the standard vector-valued KRR estimator trained on the training set $[\{\bx_i,(1/\lambda) \widehat{\bzeta}^{\cK_{\mathrm{CI}}}_{\bX}(\bx_i)\}]_{i=1}^n$:
\begin{align*}
\bg^{\cK_{\mathrm{CI}},\lambda}_{\cT_{\cK_{\mathrm{CI}},\bX}}
= \arg\min_{\boldsymbol{f}\in\cH_{\cK}} \frac{1}{n}\sum_{i=1}^n \left \|\boldsymbol{f}(\bx_i)-\frac{1}{\lambda} \widehat{\bzeta}^{\cK_{\mathrm{CI}}}_{\bX}(\bx_i) \right \|_2^2 + \lambda \|\boldsymbol{f}\|_{\cH_{\cK}}^2.
\end{align*}

Since the zero function $\boldsymbol{f}=\boldsymbol{0}$ is a feasible solution to the above minimization problem, the optimal solution $\bg^{\cK_{\mathrm{CI}},\lambda}_{\cT_{\cK_{\mathrm{CI}},\bX}}$ must have an objective value no larger than that of the zero function. Therefore,
\begin{align*}
\lambda \left \|\bg^{\cK_{\mathrm{CI}},\lambda}_{\cT_{\cK_{\mathrm{CI}},\bX}} \right \|_{\cH_{\cK}}^2
\le \frac{1}{n\lambda}\sum_{i=1}^n \left \|\widehat{\bzeta}^{\cK_{\mathrm{CI}}}_{\bX}(\bx_i) \right \|_2^2.
\end{align*}
By the definition of $M^{\mathrm{CI}}_{\bzeta}$ established in Lemma~\ref{lem:app-bd-general}, we obtain $\|\widehat{\bzeta}^{\cK_{\mathrm{CI}}}_{\bX}(\bx_i)\|_2 \le M^{\mathrm{CI}}_{\bzeta}$ for every $i \in [n]$. Therefore,
\begin{align*}
\left \|\bg^{\cK_{\mathrm{CI}},\lambda}_{\cT_{\cK_{\mathrm{CI}},\bX}} \right \|_{\cH_{\cK}} \le \frac{M^{\mathrm{CI}}_{\bzeta}}{\lambda}.
\end{align*}

Next, we convert the RKHS norm bound into a pointwise Euclidean norm bound. By the reproducing property of the vector-valued RKHS $\cH_{\cK}$~\citep{rkhs},
\begin{align*}
\bg^{\cK_{\mathrm{CI}},\lambda}_{\cT_{\cK_{\mathrm{CI}},\bX}}(\bx) = \left \langle \bg^{\cK_{\mathrm{CI}},\lambda}_{\cT_{\cK_{\mathrm{CI}},\bX}},\, \cK_{\mathrm{CI}}(\bx, \cdot) \right \rangle_{\cH_{\cK}}.
\end{align*}
Applying the Cauchy-Schwarz inequality in $\cH_{\cK}$,
\begin{align*}
\left \|\bg^{\cK_{\mathrm{CI}},\lambda}_{\cT_{\cK_{\mathrm{CI}},\bX}}(\bx) \right \|_2
\le \left\|\bg^{\cK_{\mathrm{CI}},\lambda}_{\cT_{\cK_{\mathrm{CI}},\bX}} \right \|_{\cH_{\cK}} \cdot \left \|\cK_{\mathrm{CI}}(\bx, \cdot)\right \|_{\cH_{\cK}}.
\end{align*}
By the definition of the vector-valued RKHS norm and Lemma~\ref{lem:app-opb},
\begin{align*}
\|\cK_{\mathrm{CI}}(\bx, \cdot)\|_{\cH_{\cK}}^2
= \|\cK_{\mathrm{CI}}(\bx, \bx)\|_{\mathrm{op}} \leq \kappa_{\mathrm{CI}}.
\end{align*}

Hence,
\begin{align*}
\left \|\bg^{\cK_{\mathrm{CI}},\lambda}_{\cT_{\cK_{\mathrm{CI}},\bX}}(\bx) \right \|_2 \le \kappa_{\mathrm{CI}} \left \|\bg^{\cK_{\mathrm{CI}},\lambda}_{\cT_{\cK_{\mathrm{CI}},\bX}} \right \|_{\cH_{\cK}}
\le \frac{\kappa_{\mathrm{CI}} M^{\mathrm{CI}}_{\bzeta}}{\lambda}.
\end{align*}

Now we bound the correction term $- \frac{1}{\lambda}\widehat{\bzeta}^{\cK_{\mathrm{CI}}}_{\bX}(\bx)$.
\begin{align*}
\left\| \frac{1}{\lambda}\widehat{\bzeta}^{\cK_{\mathrm{CI}}}_{\bX}(\bx) \right\|_2
= \frac{1}{\lambda} \left\| \frac{1}{n}\sum_{i=1}^n \mathrm{div}_{\bx}\cK_{\mathrm{CI}}(\bx,\bx_i)^\top \right\|_2
\le \frac{1}{\lambda} \cdot \frac{1}{n} \sum_{i=1}^n M^{\mathrm{CI}}_{\bzeta}
= \frac{M^{\mathrm{CI}}_{\bzeta}}{\lambda}.
\end{align*}

Finally, by the triangle inequality,
\begin{align*}
\left \|\widehat{\bs}^{\cK_{\mathrm{CI}},\lambda}_{\bX}(\bx) \right \|_2
\le \left \|\bg^{\cK_{\mathrm{CI}},\lambda}_{\cT_{\cK_{\mathrm{CI}},\bX}}(\bx) \right \|_2 + \frac{1}{\lambda}\left\| \widehat{\bzeta}^{\cK_{\mathrm{CI}}}_{\bX}(\bx) \right\|_2
\le \frac{\kappa_{\mathrm{CI}} M^{\mathrm{CI}}_{\bzeta}}{\lambda} + \frac{M^{\mathrm{CI}}_{\bzeta}}{\lambda} = \left( 1 + \kappa_{\mathrm{CI}}\right) \frac{M^{\mathrm{CI}}_{\bzeta}}{\lambda} .
\end{align*}

By the representation theorem, $\bg^{\cK_{\mathrm{CI}},\lambda}_{\cT_{\cK_{\mathrm{CI}},\bX}}$ has the form
\begin{align*}
\bg^{\cK_{\mathrm{CI}},\lambda}_{\cT_{\cK_{\mathrm{CI}},\bX}}(\bx) = \sum_{i=1}^n \cK_{\mathrm{CI}}(\bx,\bx_i)\boldsymbol{c}_i,
\end{align*}
where $\boldsymbol{c} = (\boldsymbol{c}_1,\dots,\boldsymbol{c}_n)^\top \in \mathbb{R}^{np}$ satisfies the normal equations
\begin{align*}
(\boldsymbol{K} + n\lambda \bI)\boldsymbol{c} = \frac{1}{\lambda}\boldsymbol{h},
\end{align*}
where $\bK \in \RR^{np \times np}$ is given by $\bK_{(m-1)p+i,(k-1)p+j} = [\cK_{\mathrm{CI}}(\bx_{m},\bx_{k})]_{i,j}$ and $\boldsymbol{h} = \{\widehat{\bzeta}^{\cK_{\mathrm{CI}}}_{\bX}(\bx_1)\T,\dots,\widehat{\bzeta}^{\cK_{\mathrm{CI}}}_{\bX}(\bx_n)\}^\top$.

By the definition of $M^{\mathrm{CI}}_{\bzeta}$, for each $i \in [n]$,
\begin{align*}
\|\widehat{\bzeta}^{\cK_{\mathrm{CI}}}_{\bX}(\bx_i)\|_2
\le \frac{1}{n}\sum_{m=1}^n \|\mathrm{div}_{\bx_i}\cK_{\mathrm{CI}}(\bx_i,\bx_m)^\top\|_2
\le M^{\mathrm{CI}}_{\bzeta}.
\end{align*}
Thus,
\begin{align*}
\|\boldsymbol{h}\|_2 = \sqrt{\sum_{i=1}^{n}\|\widehat{\bzeta}^{\cK_{\mathrm{CI}}}_{\bX}(\bx_i)\|^2_2}\le \sqrt{n} M^{\mathrm{CI}}_{\bzeta}.
\end{align*}
 
Therefore,
\begin{align}\label{eq:app-c}
\|\boldsymbol{c}\|_2 \le \frac{1}{\lambda}\|(\boldsymbol{K} + n\lambda \bI)^{-1}\|_{\mathrm{op}} \|\boldsymbol{h}\|_2
\le \frac{1}{\lambda} \cdot \frac{1}{n\lambda} \cdot \sqrt{n} M^{\mathrm{CI}}_{\bzeta}
= \frac{M^{\mathrm{CI}}_{\bzeta}}{\sqrt{n}\lambda^2}.
\end{align}

We now derive the Lipschitz constants for the kernel and its divergence.

For each $(i,j)$, the component $[\cK_{\mathrm{CI}}]_{ij}(\bx,\by) = -\partial_{x_i}\partial_{y_j} k_{\mathrm{CI}}(\bx,\by)$ has gradient with respect to $\bx$:
\begin{align*}
\nabla_{\bx} [\cK_{\mathrm{CI}}](\bx,\by)
= \left\{
-\frac{\partial^3}{\partial x_1 \partial x_i \partial y_j} k_{\mathrm{CI}}(\bx,\by),
\dots,
-\frac{\partial^3}{\partial x_p \partial x_i \partial y_j} k_{\mathrm{CI}}(\bx,\by)
\right\}^\top.
\end{align*}
By the definition of $M_3$, each component of this gradient is bounded by $M_3$, hence
\begin{align*}
\|\nabla_{\bx} [\cK_{\mathrm{CI}}]_{ij}(\bx,\by)\|_2 \le \sqrt{p}\, M_3.
\end{align*}
By the mean value theorem, for any $\bx,\bx'$,
\begin{align*}
|[\cK_{\mathrm{CI}}]_{ij}(\bx,\by) - [\cK_{\mathrm{CI}}]_{ij}(\bx',\by)|
\le \|\nabla_{\bx} [\cK_{\mathrm{CI}}]_{ij}(\xi,\by)\|_2 \|\bx-\bx'\|_2
\le \sqrt{p}\, M_3 \|\bx-\bx'\|_2,
\end{align*}
where $\xi$ lies on the segment between $\bx$ and $\bx'$. Consequently,
\begin{align}\label{eq:app-k}
\|\cK_{\mathrm{CI}}(\bx,\by) - \cK_{\mathrm{CI}}(\bx',\by)\|_{\mathrm{op}}
\le \|\cK_{\mathrm{CI}}(\bx,\by) - \cK_{\mathrm{CI}}(\bx',\by)\|_\mathrm{F} \le \sqrt{p^2 \cdot (\sqrt{p}\, M_3)^2} \, \|\bx-\bx'\|_2 = p^{\frac{3}{2}} M_3 \|\bx-\bx'\|_2.
\end{align}

Combining Inequalities~\eqref{eq:app-c} and~\eqref{eq:app-k}, we obtain
\begin{align}\label{eq:app-g}
&\left\| \bg^{\cK_{\mathrm{CI}},\lambda}_{\cT_{\cK_{\mathrm{CI}},\bX}}(\bx) - \bg^{\cK_{\mathrm{CI}},\lambda}_{\cT_{\cK_{\mathrm{CI}},\bX}}(\bx') \right\|_2 \nonumber\\
= & \left\| \sum_{i=1}^n \left\{ \cK_{\mathrm{CI}}(\bx,\bx_i) - \cK_{\mathrm{CI}}(\bx',\bx_i) \right\}\boldsymbol{c}_i \right\|_2\nonumber \\
\le & \sum_{i=1}^n \| \cK_{\mathrm{CI}}(\bx,\bx_i) - \cK_{\mathrm{CI}}(\bx',\bx_i) \|_{\mathrm{op}} \| \boldsymbol{c}_i \|_2\nonumber \\
\le & \sqrt{ \sum_{i=1}^n \| \cK_{\mathrm{CI}}(\bx,\bx_i) - \cK_{\mathrm{CI}}(\bx',\bx_i) \|_{\mathrm{op}}^2 } \cdot \sqrt{ \sum_{i=1}^n \| \boldsymbol{c}_i \|_2^2 } \\
\le & \sqrt{ \sum_{i=1}^n \left( p^{\frac{3}{2}} M_3 \| \bx - \bx' \|_2 \right)^2 } \cdot \| \boldsymbol{c} \|_2 \nonumber\\
= & \sqrt{n} \, p^{\frac{3}{2}} M_3 \| \bx - \bx' \|_2 \| \boldsymbol{c} \|_2 \nonumber\\
\le & \sqrt{n} \, p^{\frac{3}{2}} M_3 \| \bx - \bx' \|_2 \cdot \frac{M^{\mathrm{CI}}_{\bzeta}}{\sqrt{n}\lambda^2}\nonumber \\
= & \frac{p^{\frac{3}{2}} M_3 M^{\mathrm{CI}}_{\bzeta}}{\lambda^2} \| \bx - \bx' \|_2.\nonumber
\end{align}

The divergence $\mathrm{div}_{\bx}\cK_{\mathrm{CI}}(\bx,\by)^\top$ is a vector-valued function whose components are linear combinations of third-order derivatives of $k_{\mathrm{CI}}$. More precisely, its $i$-th component is
\begin{align*}
[\mathrm{div}_{\bx}\cK_{\mathrm{CI}}(\bx,\by)^\top]_i = \sum_{k=1}^p \frac{\partial}{\partial x_k} [\cK_{\mathrm{CI}}]_{k,i}(\bx,\by)
= -\sum_{k=1}^p \frac{\partial^3}{\partial x_k \partial x_k \partial y_i} k_{\mathrm{CI}}(\bx,\by).
\end{align*}
Each term is bounded by $M_3$, so
\begin{align*}
\| \mathrm{div}_{\bx}\cK_{\mathrm{CI}}(\bx,\by)^\top - \mathrm{div}_{\bx}\cK_{\mathrm{CI}}(\bx',\by)^\top \|_2
\le \sqrt{p} \cdot p M_3 \|\bx-\bx'\|_2 = p^{\frac{3}{2}} M_3 \|\bx-\bx'\|_2.
\end{align*}
Therefore,
\begin{align}\label{eq:app-cr}
\left\| \frac{1}{\lambda}\widehat{\bzeta}^{\cK_{\mathrm{CI}}}_{\bX}(\bx) - \frac{1}{\lambda} \widehat{\bzeta}^{\cK_{\mathrm{CI}}}_{\bX}(\bx')\right \|_2
\le \frac{1}{n\lambda}\sum_{i=1}^{n} \|\mathrm{div}_{\bx}\cK_{\mathrm{CI}}(\bx,\bx_i)^\top - \mathrm{div}_{\bx}\cK_{\mathrm{CI}}(\bx',\bx_i)^\top\|_2 \le \frac{ p^{\frac{3}{2}} M_3}{\lambda} \|\bx-\bx'\|_2.
\end{align}

Combining Inequalities~\eqref{eq:app-g} and~\eqref{eq:app-cr}, we obtain 
\begin{align*}
& \left \|\widehat{\bs}^{\cK_{\mathrm{CI}},\lambda}_{\bX}(\bx) - \widehat{\bs}^{\cK_{\mathrm{CI}},\lambda}_{\bX}(\bx') \right \|_2\\
\le & \left\| \bg^{\cK_{\mathrm{CI}},\lambda}_{\cT_{\cK_{\mathrm{CI}},\bX}}(\bx) - \bg^{\cK_{\mathrm{CI}},\lambda}_{\cT_{\cK_{\mathrm{CI}},\bX}}(\bx') \right\|_2 + \left\| \frac{1}{\lambda}\widehat{\bzeta}^{\cK_{\mathrm{CI}}}_{\bX}(\bx) - \frac{1}{\lambda} \widehat{\bzeta}^{\cK_{\mathrm{CI}}}_{\bX}(\bx')\right \|_2 \\ \le & \frac{p^{\frac{3}{2}} M_3 M^{\mathrm{CI}}_{\bzeta}}{\lambda^2} \| \bx - \bx' \|_2 +  \frac{ p^{\frac{3}{2}} M_3}{\lambda} \|\bx-\bx'\|_2 \\
\le & \left(\frac{M_3 M^{\mathrm{CI}}_{\bzeta}}{\lambda^2} + \frac{M_3}{\lambda}\right) p^{\frac{3}{2}} \|\bx-\bx'\|_2.
\end{align*}

This completes the proof.
\end{proof}

\section{ Implementation Details of Neural and Kernel-Based Estimators}

This section provides a comprehensive summary of the training hyperparameters for each of the three data-driven estimation methods (DNN score estimator, TRE, and MLP baseline). All random seeds are fixed (42 + repetition index) for reproducibility, and hyperparameters are kept constant across all experiments.

\subsection{DNN Score Estimator (for $\mathrm{S}_{\mathrm{NN}}^{\mathrm{sup}}$, $\mathrm{S}_{\mathrm{NN}}^{\mathrm{full}}$, and $\mathrm{S}_{\mathrm{NN}}^{\mathrm{split}}$)}

\subsubsection{Architecture}\label{sec:app-se-str}
The enoising score matching network consists of:
\begin{itemize}
\item Input projection: Linear layer $p \to 256$, followed by ReLU and LayerNorm.
\item Residual blocks: Two identical residual blocks, each containing:
  \begin{itemize}
  \item Linear layer $256 \to 256$, ReLU, Dropout (rate 0.3),
  \item Linear layer $256 \to 256$,
  \item LayerNorm and a skip connection (adding the input of the block), followed by ReLU.
  \end{itemize}
\item Output projection: Linear layer $256 \to p$ (no activation).
\end{itemize}

\begin{table}[htbp]
\centering
\caption{Training parameters for the denoising score matching network.}
\label{tab:dnn_score}
\begin{tabular}{l c l}
\toprule
Parameter & Value & Remarks \\
\midrule
Optimizer & AdamW & default betas (0.9,0.999) \\
Learning rate & $1\times10^{-3}$ & -- \\
Weight decay & $1\times10^{-5}$ & -- \\
Batch size & 256 & -- \\
Epochs & 150 & fixed, no early stopping \\
LR scheduler & Cosine annealing & period = 150 epochs \\
Loss & DSM & $\EE[\|s_\theta(x+\epsilon)+\epsilon/\sigma^2\|^2]$ \\
Noise std $\sigma$ & 0.1 & -- \\
Dropout & 0.3 & applied in residual blocks \\
Hidden dimension & 256 & -- \\
Input standardization & Yes & via \texttt{StandardScaler} \\
\bottomrule
\end{tabular}
\end{table}

\subsection{TRE (for $\mathrm{S}_{\mathrm{TRE}}^{\mathrm{sup}}$, $\mathrm{S}_{\mathrm{TRE}}^{\mathrm{full}}$ and $\mathrm{S}_{\mathrm{TRE}}^{\mathrm{split}}$)}

\begin{table}[htbp]
\centering
\caption{Parameters for the TRE.}
\label{tab:numethod}
\begin{tabular}{l c l}
\toprule
Parameter & Value & Remarks \\
\midrule
Kernel & curl-free IMQ & curl-free inverse multiquadric \\
Regularization parameter $\lambda$ & 1e-4/1e-5 & taken to be 1e-4 except $n_{\ell} = 500$ in the simulation study  \\
Data standardization & Yes & performed before fitting; gradients rescaled back \\
Random seed & Set & via \texttt{np.random.seed()} for reproducibility \\
\bottomrule
\end{tabular}
\end{table}

\subsection{MLP Baseline}

\subsubsection{Architecture}
A simple fully-connected network with two hidden layers:
\begin{itemize}
\item fc1: Linear $p \to 256$, followed by ReLU.
\item fc2: Linear $256 \to 128$, followed by ReLU.
\item fc3: Linear $128 \to q$ (no activation).
\end{itemize}
After training, the first-layer weights $\bW_1 \in \mathbb{R}^{256 \times p}$ are extracted, rescaled back to the original input space, and the right singular vectors of the rescaled matrix are taken as candidate directions.
\newpage
\subsubsection{Training Parameters}
\begin{table}[htbp]
\centering
\caption{Training hyperparameters for the MLP baseline.}
\label{tab:mlp_baseline}
\begin{tabular}{l c l}
\toprule
Parameter & Value & Remarks \\
\midrule
Optimizer & Adam & default betas \\
Learning rate & $1\times10^{-3}$ & -- \\
Weight decay & None & -- \\
Batch size & 64 & -- \\
Epochs & 100 & fixed \\
Loss function & MSE & -- \\
Hidden dimensions & 256, 128 & fc1, fc2 \\
Input standardization & Yes & via \texttt{StandardScaler} \\
\bottomrule
\end{tabular}
\end{table}

\subsection{Evaluation MLP (for assessing qualities of estimated CSs)}

This MLP is trained on the projected training set and early-stopped on the validation set. It is used solely to evaluate the quality of the reduced data, not for direction estimation.

\subsubsection{Architecture}
A single hidden layer network:
\begin{itemize}
\item fc1: Linear $r \to 32$, followed by ReLU.
\item fc2: Linear $32 \to q$ (no activation).
\end{itemize}

\subsubsection{Training Parameters}
\begin{table}[htbp]
\centering
\caption{Training hyperparameters for the evaluation MLP.}
\label{tab:eval_mlp}
\begin{tabular}{l c l}
\toprule
Parameter & Value & Remarks \\
\midrule
Optimizer & Adam & default betas \\
Learning rate & $1\times10^{-3}$ & -- \\
Batch size & 64 & -- \\
Epochs & 200 & early stopping based on validation MSE \\
Loss function & MSE & -- \\
Gradient clipping & Max norm = 1.0 & applied after each backward pass \\
Input standardization & Yes & fitted on projected training set \\
Early stopping & Monitored on validation MSE & best test MSE retained \\
\bottomrule
\end{tabular}
\end{table}

\section{Remaining details of the Simulation Study and Real Data Analysis}\label{sec:app-ss&rda}

\begin{algorithm}
\caption{Adaptive Rank Selection (ARS)}
\label{alg:ars}
\begin{algorithmic}[1]
\STATE \textbf{Input:} Data matrix $\bX \in \mathbb{R}^{n \times p}$, response matrix $\bY_{\ell} \in \mathbb{R}^{n_{\ell} \times q}$, sample cross-moment $\widehat{\bM}_{\ell} \in \mathbb{R}^{p \times q}$, estimated score matrix $\widehat{\bS}_{\ell} \in \mathbb{R}^{n_{\ell} \times p}$, skewness threshold $s_0$, kurtosis threshold $k_0$, maximum rank $r_{\max}$, energy threshold $\eta$, permutation percentile $\alpha$, number of permutations $m$.
\STATE \textbf{Output:} Estimated rank $\widehat r$.

\STATE For each feature $j=1,\dots,p$, compute sample skewness $s_j$ and excess kurtosis $k_j$ from $\bX$.
\STATE $s_{\max} \gets \max_j |s_j|$, $k_{\max} \gets \max_j k_j$.
\STATE Compute the singular values $\bm{\lambda} = (\lambda_1,\dots,\lambda_{r_{\max}})$ of $\widehat{\bM}_{\ell}$ (take the first $r_{\max}$ values, pad with zeros if fewer).

\IF{heavy tail is detected ($s_{\max} > s_0$ or $k_{\max} > k_0$)}
    \STATE Compute total variance $v \gets \sum_{i=1}^{r_{\max}} \lambda_i^2$.
    \STATE $\widehat r \gets r_{\max}$.
    \FOR{$k = 1$ to $r_{\max}$}
        \IF{$\sum_{i=1}^{k} \lambda_i^2/v \ge \eta$}
            \STATE $\widehat r \gets k$
            \STATE \textbf{break}
        \ENDIF
    \ENDFOR
\ELSE
    \STATE \textbf{Comment:} Light-tailed: permutation test
    \STATE Initialize matrix $\bP \in \mathbb{R}^{m \times r_{\max}}$.
    \FOR{$b = 1$ to $m$}
        \STATE Permute rows of $\bY_{\ell}$ to obtain $\bY_{\text{perm}}$.
        \STATE $\bM_{\text{perm}} \gets (1/n_{\ell}) \widehat{\bS}_{\ell}^\top \bY_{\text{perm}}$.
        \STATE Compute SVD of $\bM_{\text{perm}}$; store first $r_{\max}$ singular values in row $b$ of $\bP$.
    \ENDFOR
    \FOR{$k = 1$ to $r_{\max}$}
        \STATE $\theta_k \gets$ the $\alpha$-quantile of the $k$-th column of $\bP$.
    \ENDFOR
    \STATE $\widehat r \gets 0$.
    \FOR{$k = 1$ to $r_{\max}$}
        \IF{$\lambda_k > \theta_k$}
            \STATE $\widehat r \gets k$
        \ELSE
            \STATE \textbf{break}
        \ENDIF
    \ENDFOR
    \STATE $\widehat r \gets \max(1, \widehat r)$.
\ENDIF
\RETURN $\widehat r$.
\end{algorithmic}
\end{algorithm}

\begin{algorithm}
\caption{Rank Selection for SIR}
\label{alg:sir-rank}
\begin{algorithmic}[1]
\STATE \textbf{Input:} Covariates 
      $\bX \in \mathbb{R}^{n \times p}$, responses
      $\bY \in \mathbb{R}^{n \times q}$, whitened covariates
      $\bZ \in \mathbb{R}^{n \times p}$, maximum rank
      $r_{\max}$ , skewness threshold
      $s_0$, kurtosis threshold
      $k_0$, energy threshold for heavy tails
      $\tau$, percentile for permutation test
      $\alpha$, number of permutations
      $m$.
\STATE \textbf{Output:} Estimated rank $\widehat r$.

\STATE For each feature $j=1,\dots,p$, compute sample skewness $s_j$ and excess kurtosis $k_j$.
\STATE $s_{\max} \gets \max_j |s_j|$, $k_{\max} \gets \max_j k_j$.

\IF{heavy tail is detected ($s_{\max} > s_0$ or $k_{\max} > k_0$)}
    \STATE Compute the slice-averaged matrix $\bM_{\text{slice}}$ from $\bZ$ and $\bY$.
    \STATE Compute singular values $\bm{\lambda}$ of $\bM_{\text{slice}}$.
    \STATE Compute total variance $v \gets \sum_{i=1}^{r_{\max}} \lambda_i^2$.
    \STATE $\widehat r \gets r_{\max}$.
    \FOR{$k = 1$ to $r_{\max}$}
        \IF{$\frac{\sum_{i=1}^{k} \lambda_i^2}{v} \ge \tau$}
            \STATE $\widehat r \gets k$
            \STATE \textbf{break}
        \ENDIF
    \ENDFOR
    \STATE \textbf{return} $\widehat r$.
\ELSE
    \STATE $\rhd$ Light-tailed: use SIR-specific permutation test.
    \STATE Compute singular values $\bm{\lambda}$ of the original slice-averaged matrix $\bM_{\text{slice}}$.
    \STATE Initialize matrix $\bP \in \mathbb{R}^{m \times r_{\max}}$.
    \FOR{$b = 1$ to $m$}
        \STATE Permute rows of $\bY$ to obtain $\bY_{\text{perm}}$.
        \STATE Project $\bY_{\text{perm}}$ onto a random direction and slice $\bZ$ accordingly.
        \STATE Compute the permuted slice-averaged matrix $\bM_{\text{slice, perm}}$.
        \STATE Compute SVD of $\bM_{\text{slice, perm}}$; take first $r_{\max}$ singular values
              and store in row $b$ of $\bP$.
    \ENDFOR
    \FOR{$k = 1$ to $r_{\max}$}
        \STATE $\theta_k \gets$ the $\alpha$-quantile of the $k$-th column of $\bP$.
    \ENDFOR
    \STATE $\widehat r \gets 0$.
    \FOR{$k = 1$ to $r_{\max}$}
        \IF{$\lambda_k > \theta_k$}
            \STATE $\widehat r \gets k$
        \ELSE
            \STATE \textbf{break}
        \ENDIF
    \ENDFOR
    \STATE $\widehat r \gets \max(1, \widehat r)$.
    \STATE \textbf{return} $\widehat r$.
\ENDIF

\end{algorithmic}
\end{algorithm}

\begin{algorithm}
\caption{Rank Selection for MLP}
\label{alg:mlp-rank}
\begin{algorithmic}[1]
\STATE \textbf{Input:} Covariates
      $\bX \in \mathbb{R}^{n \times p}$, responses
      $\bY \in \mathbb{R}^{n \times q}$, first-layer weight matrix of a trained MLP
      $\bW_1 \in \mathbb{R}^{h \times p}$, maximum rank
      $r_{\max}$, skewness threshold
      $s_0$, kurtosis threshold
      $k_0$, energy threshold for heavy tails
      $\tau_h$, nergy threshold for light tails
      $\tau_l$, dominance threshold for light tails
      $\rho$.
\STATE \textbf{Output:} Estimated rank $\widehat r$.

\STATE For each feature $j=1,\dots,p$, compute sample skewness $s_j$ and excess kurtosis $k_j$.
\STATE $s_{\max} \gets \max_j |s_j|$, $k_{\max} \gets \max_j k_j$.

\STATE Compute singular values $\bm{\lambda} = (\lambda_1,\dots,\lambda_{r_{\max}})$ of $\bW_1$ (take the first $r_{\max}$ values).

\IF{heavy tail is detected ($s_{\max} > s_0$ or $k_{\max} > k_0$)}
    \STATE $\rhd$ Heavy-tailed:
    \STATE Compute total variance $v \gets \sum_{i=1}^{r_{\max}} \lambda_i^2$.
    \STATE $\widehat r \gets r_{\max}$.
    \FOR{$k = 1$ to $r_{\max}$}
        \IF{$\frac{\sum_{i=1}^{k} \lambda_i^2}{v} \ge \tau_h$}
            \STATE $\widehat r \gets k$
            \STATE \textbf{break}
        \ENDIF
    \ENDFOR
    \STATE \textbf{return} $\widehat r$.
\ELSE
    \STATE $\rhd$ Light-tailed: use conservative energy method.
    \STATE Compute total variance $v \gets \sum_{i=1}^{r_{\max}} \lambda_i^2$.
    \IF{$v = 0$}
        \STATE $\widehat r \gets 1$.
    \ELSIF{$\lambda_1^2 / v > \rho$}
        \STATE $\widehat r \gets 1$.  $\rhd$ First direction dominates.
    \ELSE
        \STATE Compute cumulative variance proportion.
        \STATE $\widehat r \gets \arg\min_{k} \left\{ \sum_{i=1}^{k} \lambda_i^2 / v \ge \tau_l \right\}$.
        \STATE $\widehat r \gets \max(1, \min(\widehat r, r_{\max}))$.
    \ENDIF
    \STATE \textbf{return} $\widehat r$.
\ENDIF
\end{algorithmic}
\end{algorithm}

\begin{table}[htbp]
\centering
\caption{Hyperparameters for rank selection.}
\label{tab:rank-params}
\begin{tabular}{l l}
\toprule
Method & Parameters \\
\midrule
$\mathrm{S}_{N,\text{sup}}$/$\mathrm{S}_{N,\text{semi}}$ & Heavy: $\tau=0.95$; Light: $\alpha=0.90, m=200$ \\
 $\mathrm{S}_{\nu,\text{sup}}$/
$\mathrm{S}_{\nu,\text{semi}}$ & Heavy: $\tau=0.95$; Light: $\alpha=0.90, m=200$ \\
SIR & Heavy: $\tau=0.95$; Light: $\alpha=0.90, m=200$ \\
MLP & Heavy: $\tau_h=0.95$; Light: $\tau_l=0.90,\ \rho=0.85$ \\
\bottomrule
\end{tabular}
\parbox{0.9\linewidth}{\small
$\tau$: energy threshold; $\alpha$: permutation quantile; $m$: number of permutations; $\tau_l$: light-tail energy threshold; $\rho$: dominance ratio.
}
\end{table}

\end{document}